\DeclareMathOperator*{\argmin}{arg\,min}
\DeclareMathOperator*{\argmax}{arg\,max}
\DeclarePairedDelimiter\abs{\lvert}{\rvert}%
\let\oldabs\abs
\def\abs{\@ifstar{\oldabs}{\oldabs*}}
\newcommand{\norm}[1]{\left\lVert #1 \right\rVert} 
\theoremstyle{plain}
\newtheorem{thm}{Theorem}
\newtheorem{lem}{Lemma}
\newtheorem{prop}{Proposition}
\newtheorem{cor}{Corollary}
\newtheorem{rem}{Remark}
\newtheorem{defi}{Definition}
\newtheorem{assu}{Assumption}
\newcounter{keylemma}
\newtheorem{keylem}[keylemma]{Lemma}
\newcounter{auxlemma}
\newcounter{techlemma}
\newtheorem{techlem}[techlemma]{Lemma}
\newcommand{\Algo}{{Robust Fitted Learning with $\phi$-Divergence Uncertainty Set (RFL-$\phi$)}}
\newcommand{\Algoname}{RFL-$\phi$}
\newcommand{\AlgonameTV}{RFL-\text{TV}}
\newcommand{\RMDPf}{RMDP-$\phi$}
\newcommand{\RMDPchi}{$\chi^2$-RMDP}
\newcommand{\RMDPKL}{\text{KL}-RMDP}
\newcommand{\RMDPTV}{\text{TV}-RMDP}
\newcommand{\dimBE}{\operatorname{dim}_{\mathrm{BE}}}
\newcommand{\dimDE}{\operatorname{dim}_{\mathrm{DE}}}
\newcommand{\dimE}{\operatorname{dim}_{\mathrm{E}}}  
\newcommand{\dimrobBE}{\dimBE^{\mathrm{rob}}}
\newcommand{\bbR}{\mathbb R}
\newcommand{\bbE}{\mathbb E}
\newcommand{\bbN}{\mathbb N}
\newcommand{\inner}[2]{\left\langle #1,\, #2 \right\rangle}
\title{Online Robust Reinforcement Learning with General Function Approximation}
\author{Debamita Ghosh$^1$, George K. Atia$^{1,2}$, Yue Wang$^{1,2}$ \\
$^1$ Department of Electrical and Computer Engineering, University of Central Florida, Orlando, FL 32816, USA \\
$^2$ Department of Computer Science, University of Central Florida, Orlando, FL 32816, USA 
}
\begin{document}

\maketitle

\begin{abstract}
In many real-world settings, reinforcement learning systems suffer performance degradation when the environment encountered at deployment differs from that observed during training. Distributionally robust reinforcement learning (DR-RL) mitigates this issue by seeking policies that maximize performance under the most adverse transition dynamics within a prescribed uncertainty set. Most existing DR-RL approaches, however, rely on strong data availability assumptions—such as access to a generative model or large offline datasets—and are largely restricted to tabular settings.

In this work, we propose a fully online DR-RL algorithm with general function approximation that learns robust policies solely through interaction, without requiring prior knowledge or pre-collected data. Our approach is based on a dual-driven fitted robust Bellman procedure that simultaneously estimates the value function and the corresponding worst-case backup operator. We establish regret guarantees for online DR-RL characterized by an intrinsic complexity notion—the robust Bellman–Eluder dimension covering a broad class of $\phi$-divergence uncertainty sets. The resulting regret bounds are sublinear, do not scale with the size of the state or action spaces, and specialize to tight rates in structured problem classes, demonstrating the practicality and scalability of our framework.
\end{abstract}

\section{Introduction}
\label{sec:Introduction}
Reinforcement learning (RL) provides a general framework for sequential decision-making under uncertainty, enabling agents to learn effective policies through interaction with an environment. In the \emph{online} learning paradigm, an agent improves its behavior via trial-and-error interactions without relying on pre-collected datasets or access to high-fidelity simulators. This setting has underpinned a wide range of successes in complex domains, including strategic games and simulators \cite{silver2016mastering,zha2021douzero,berner2019dota,vinyals2017starcraft}, as well as more recent applications in generative and decision-driven AI systems \cite{NeurIPS2022_TrainingLLM_Ouyang,cao2023reinforcement,black2023training,uehara2024understanding,zhang2024large,du2023guiding,cao2024survey}.

Despite these advances, a central limitation of conventional online RL lies in its implicit reliance on the assumption that the environment dynamics remain stable between training and deployment. In real-world systems, this assumption is frequently violated due to non-stationarity, unmodeled disturbances, or structural mismatch between simulation and reality. Policies optimized purely for expected return under the training dynamics may therefore exhibit brittle behavior when exposed to even mild shifts, leading to significant performance degradation or unsafe decisions. Such failures have been documented in safety-critical applications including autonomous driving \cite{IEEE2021_DRLSurvey_Kiran} and healthcare \cite{ACM2018_SupervisedRLRNN_Wang}, where unanticipated changes—such as weather-induced friction loss or population-level distributional drift—can fundamentally alter system dynamics.

Distributionally robust reinforcement learning (DR-RL) offers a principled approach to addressing this challenge by explicitly optimizing for worst-case performance over a specified uncertainty set of transition models \cite{INFORM2005_RobusDP_Iyengar,PMLR2017_RobustAdvRL_Pinto,Arxiv2022_SimToRealTransferContinuousPartialObs_Hu}. Rather than assuming perfect model fidelity, DR-RL seeks policies that perform reliably under adverse but plausible perturbations of the dynamics, thereby improving robustness to deployment-time mismatch. This formulation has been shown to yield policies that are inherently more resilient to environmental variation, improving safety and reliability across a range of domains \cite{goodfellow2014explaining,vinitsky2020robust,abdullah2019wasserstein,hou2020robust,rajeswaran2017epopt,atkeson2003nonparametric,morimoto2005robust,huang2017adversarial,kos2017delving,lin2017tactics,pattanaik2018robust,mandlekar2017adversarially}.

In \emph{online} DR-RL \cite{ICML2025_OnlineDRMDPSampleComplexity_He,Arxiv2024_UpperLowerDRRL_Liu,PMLR2024_LinearFunctionDRMDP_Liu,Arxiv2024_DRORLwithInteractiveData_Lu,ghosh2025provably}, the agent must simultaneously learn from interaction with an unknown nominal environment while optimizing a worst-case objective defined over an uncertainty set centered at the interaction dynamics. This setting preserves the flexibility of online learning while explicitly accounting for robustness, but introduces substantial theoretical and algorithmic challenges absent in standard RL.

A first difficulty arises from the inherent \emph{off-dynamics} nature of the objective: data are collected under the nominal transition kernel, whereas policy evaluation and optimization are performed with respect to worst-case dynamics that may differ significantly from those encountered during interaction. This mismatch leads to an off-dynamics learning problem \cite{Arxiv2020_OffDynRL_Eysenbach,NeurIPS2024_MinimaxOptimalOfflineRL_Liu,holla2021off}, and can induce severe information bottlenecks—transitions that are critical under the worst-case model may be rarely, or never, observed under nominal exploration \cite{Arxiv2024_DRORLwithInteractiveData_Lu,ghosh2025provably}. A second challenge stems from the interactive nature of learning: because exploration occurs in the real environment, unsafe or overly aggressive exploration strategies are often unacceptable. The learner must therefore maintain satisfactory performance throughout training, including under worst-case dynamics, rather than merely converging asymptotically.

Due to these difficulties, most existing DR-RL methods rely on additional data access assumptions, such as a generative model capable of producing arbitrary samples \cite{PMLR2022_SampleComplexityRORL_Panaganti,PMLR2023_ImprovedSampleComplexityDRRL_Xu,NeuRIPS2023_CuriousPriceDRRLGenerativeMdel_Shi}, extensive offline datasets with strong coverage guarantees \cite{NeuRIPS2023_DoublePessimismDROfflineRL_Blanchet,JMLR2024_DROfflineRLNearOptimalSampelComplexity_Shi,Arxiv2024_OfflineRobustRLFDivg_Tang,NeuRIPS2024_UnifiedPessimismOfflineRL_Yue,NeurIPS2024_MinimaxOptimalOfflineRL_Liu,NeurIPS2022_RobustRLOffline_Panaganti,Arxiv2024_SampleComplexityOfflineLinearDRMDP_Wang}, or hybrid regimes that combine offline data with limited online interaction \cite{Arxiv2024_ModelFreeRobustRL_Panaganti}. In many practical applications, however, such assumptions are unrealistic, making the development of \emph{purely online} DR-RL methods essential.

Scalability presents an additional obstacle. While function approximation has enabled standard RL to scale to large or continuous state–action spaces \cite{mnih2013playing,silver2016mastering,Sage2013_RLSurvey_Kober,li2016deep}, extending these ideas to DR-RL is nontrivial. Robust value functions are defined with respect to worst-case dynamics that differ from the nominal data-generating process, and may fail to admit low-dimensional approximations even when the nominal value function does \cite{tamar2014scaling}. As a result, existing DR-RL approaches with function approximation typically impose strong structural assumptions—such as small discount factors \cite{xu2010distributionally,zhou2024natural,badrinath_robust_2021} or linear-MDP models \cite{Arxiv2022_OfflineDRRLLinearFunctionApprox_Ma,PMLR2024_LinearFunctionDRMDP_Liu,NeurIPS2024_MinimaxOptimalOfflineRL_Liu,Arxiv2024_UpperLowerDRRL_Liu,Arxiv2024_SampleComplexityOfflineLinearDRMDP_Wang}—that limit their applicability.

These considerations motivate the following question:
\begin{quote}
\emph{Can we design a sample-efficient, purely online DR-RL algorithm that scales to large problems while providing rigorous performance guarantees?}
\end{quote}

In this paper, we answer this question affirmatively by developing an online DR-RL framework with general function approximation and establishing its theoretical guarantees. Our main contributions are summarized below.

\paragraph{Sample-efficient purely online DR-RL with general function approximation.}
We propose {\Algoname}, a purely online DR-RL algorithm operating under $\phi$-divergence uncertainty sets with general function approximation. The method incorporates optimism into a fitted-learning framework through a functional reformulation of the robust Bellman operator. Rather than relying on per-state–action bonuses, {\Algoname} constructs a global uncertainty quantifier over the function class, enabling efficient exploration and scalability beyond tabular and offline or hybrid regimes.

\paragraph{Robust Bellman--Eluder dimension as an intrinsic complexity measure.}
We introduce the robust Bellman--Eluder (BE) dimension as the intrinsic notion governing learnability in online DR-RL with function approximation. Defined via the distributional Eluder dimension of the robust Bellman residual class under on-policy distributions, this measure captures the statistical complexity of robust value learning without requiring coverage or concentrability assumptions. We show that it is finite for broad problem classes and plays an analogous role to the BE dimension in non-robust RL \cite{NeurIPS2021_BellmanEluderDim_Jin}.

\paragraph{Dual-driven fitted learning and global confidence sets.}
Our approach leverages a dual formulation of the robust Bellman operator to construct global confidence sets over value functions. By optimizing a single least-squares objective on the dual residual, {\Algoname} simultaneously approximates the worst-case backup operator and quantifies uncertainty for exploration. This mechanism is fundamentally different from offline DR-RL methods, where the dual variables do not influence data collection.

\paragraph{Sharp regret guarantees governed by robust BE dimension.}
We establish regret bounds for purely online DR-RL with general function approximation that depend solely on the intrinsic robust BE dimension. The resulting bounds are sublinear in the number of episodes, independent of the sizes of the state and action spaces, and recover near-optimal rates in structured settings such as tabular and linear RMDPs.

\section{Related Work}
\label{sec:related_work}
We review prior work most closely related to ours, focusing on distributionally robust reinforcement learning (DR-RL) and its connections to non-robust RL with function approximation.

\textbf{Tabular Distributionally Robust Reinforcement Learning.}
The majority of existing DR-RL literature focuses on the tabular setting. A substantial body of work establishes minimax-optimal or near-optimal sample complexity guarantees under strong data-access assumptions, most commonly in the generative-model regime \cite{Arxiv2023_TowardsMinimaxOptimalityRobustRL_Clavier, liu2022distributionally, PMLR2022_SampleComplexityRORL_Panaganti, ramesh2023distributionallyrobustmodelbasedreinforcement, NeuRIPS2023_CuriousPriceDRRLGenerativeMdel_Shi, PMLR2023_SampleComplexityDRQLearning_Wang, wang2024foundationdistributionallyrobustreinforcement, JMLR2024_SampleComplexityVarianceReducedDRQLearning_Wang, PMLR2023_ImprovedSampleComplexityDRRL_Xu, AnnalsStat2022_TheoreticalUnderstandingRMDP_Yang, yang2023robust, badrinath_robust_2021, li2022first, liang2023single}. Related lines of work study offline DR-RL, where learning is performed from large datasets assumed to provide sufficient coverage of relevant dynamics \cite{NeuRIPS2023_DoublePessimismDROfflineRL_Blanchet, JMLR2024_DROfflineRLNearOptimalSampelComplexity_Shi, Arxiv2023_SoftRMDPRiskSensitive_Zhang, NeurIPS2024_MinimaxOptimalOfflineRL_Liu, NeuRIPS2024_UnifiedPessimismOfflineRL_Yue, Arxiv2024_SampleComplexityOfflineLinearDRMDP_Wang}.  

More recently, a small number of works have begun to address \emph{online} DR-RL, where the agent learns through direct interaction with the environment \cite{dong2022online, NeurIPS2021_OnlineRobustRLModelUncertainty_Wang, Arxiv2024_DRORLwithInteractiveData_Lu, ICML2025_OnlineDRMDPSampleComplexity_He, ghosh2025provably}. These approaches typically overcome the information bottleneck induced by worst-case dynamics via additional structural assumptions, such as bounded visitation ratios or specialized model access. However, they are largely confined to tabular or model-based/value-based frameworks and do not naturally extend to large or continuous state–action spaces.

\textbf{Distributionally Robust RL with Function Approximation.}
Theoretical treatments of DR-RL with function approximation have primarily focused on linear representations. A key challenge is that common function classes are not closed under robust Bellman operators, making approximation error difficult to control in general. To address this, prior work often imposes strong structural assumptions on the underlying RMDP, such as small discount factors \cite{xu2010distributionally,tamar2014scaling,zhou2024natural} or explicitly linear transition models \cite{Arxiv2022_OfflineDRRLLinearFunctionApprox_Ma,PMLR2024_LinearFunctionDRMDP_Liu,NeurIPS2024_MinimaxOptimalOfflineRL_Liu,Arxiv2024_UpperLowerDRRL_Liu,Arxiv2024_SampleComplexityOfflineLinearDRMDP_Wang}.  

More general function approximation in DR-RL has been explored in a limited number of works \cite{NeurIPS2022_RobustRLOffline_Panaganti,Arxiv2024_ModelFreeRobustRL_Panaganti}, which leverage functional optimization to compute robust backups. These methods, however, operate in offline or hybrid settings and rely on global coverage assumptions to avoid exploration challenges. In addition, \cite{Arxiv2024_ModelFreeRobustRL_Panaganti} studies regularized RMDPs, which differ fundamentally from the worst-case DR-RL objective considered here. In contrast, our work addresses fully online learning with general function approximation, without assuming linear structure, discounting constraints, or offline coverage.

\textbf{Non-Robust RL with Function Approximation.}
Function approximation has been extensively studied in standard (non-robust) reinforcement learning. While much of this literature focuses on offline RL with general function classes \cite{zhan2022offline, jiang2024offline, wang2020statistical}, our work is situated in the online setting, where the learner must explore while learning from interaction.

A central theme in online RL theory is the identification of structural complexity measures that characterize when learning with function approximation is statistically and computationally feasible. The Eluder dimension \cite{li2022understanding, russo2013eluder} was introduced to quantify the sequential complexity of a hypothesis class and underlies optimistic exploration strategies in bandits and RL \cite{wang_reinforcement_2020}. However, this notion captures only properties of the function class itself, without accounting for its interaction with the environment dynamics.

Subsequent work proposed complexity measures that explicitly incorporate the Bellman operator. The Bellman rank \cite{jiang2017contextual} and Witness rank \cite{sun2019model} formalize this interaction and were later unified under the Bellman--Eluder (BE) dimension framework \cite{NeurIPS2021_BellmanEluderDim_Jin}. The BE dimension measures the effective complexity of approximating Bellman residuals and provides a unifying explanation for tractability across many structured RL models.

A complementary line of research emphasizes coverage-based characterizations of learnability. In particular, \cite{xie2022role} introduced the notion of coverability and showed it to be necessary and sufficient for efficient exploration under function approximation, subsuming earlier concentrability-style assumptions. At the same time, hardness results \cite{foster2021statistical, du2021bilinear} demonstrate that without appropriate structural or complexity assumptions, online RL with rich observations can be exponentially difficult.

\textbf{Connections and Distinctions.}
Our work operates firmly within the online learning paradigm but departs from existing approaches in a fundamental way. Rather than relying on coverage coefficients or visitation ratio bounds, we use the \emph{robust} Bellman--Eluder dimension as the primary complexity measure governing learnability. This choice is deliberate: the BE dimension is directly tied to Bellman residuals and value-function approximation, making it naturally compatible with robust Bellman operators and amenable to extension under model uncertainty.

Importantly, guarantees developed for non-robust RL do not transfer directly to the distributionally robust setting. DR-RL replaces a single transition kernel with an uncertainty set and optimizes against a worst-case Bellman operator, introducing new technical challenges. These include the inherent nonlinearity of robust Bellman updates, the need to jointly control statistical estimation error and adversarial model shift, and the mismatch between nominal data collection and worst-case evaluation. Addressing these issues requires new analytical tools beyond those used in standard RL.

Our work introduces such tools by combining dual-based robust Bellman residuals with optimism-driven fitted learning, and by formalizing the robust BE dimension as the intrinsic measure of complexity in online DR-RL. This yields an algorithmic and theoretical framework that is genuinely distinct from, and not reducible to, existing results in either non-robust RL or offline/hybrid DR-RL.

\section{Preliminaries and Problem Formulation}
\label{sec:Problem_Setup}
\subsection{Distributionally Robust Markov Decision Process}
DR-RL can be formulated as an episodic finite-horizon distributionally robust Markov decision process (RMDP) \cite{INFORM2005_RobusDP_Iyengar}, represented by $\mathcal{M}:=(\mathcal{S},\mathcal{A},H, \mathcal{P}, r)$, where the set $\mathcal{S}=\{1,\dots,S\}$ is the finite state space, $\mathcal{A}=\{1,\dots,A\}$ is the finite action space, $H$ is the horizon length,
$r = \{r_h: \mathcal{S}\times \mathcal{A}\rightarrow [0, 1]\}_{h=1}^H$ is the collection of reward functions, and $\mathcal{P}=\{\mathcal{P}_h\}_{h=1}^H$ is an uncertainty set of transition kernels. At step $h$, the agent is at state $s_h$ and takes an action $a_h$, receives the reward $r_h(s_h,a_h)$, and is transited to the next state $s_{h+1}$ following an arbitrary transition kernel $P_h(\cdot|s_h,a_h)\in\mathcal{P}_h$. 

We consider the standard $(s,a)$-rectangular uncertainty set with divergence ball-structure \cite{INFORM2013_RMDP_Wiesemann}. Specifically, let  $P^{\star} = \{P^{\star}_h\}_{h=1}^H$ be the \textit{nominal} transition kernel, where each $P^\star_h: \mathcal{S}\times \mathcal{A}\rightarrow \Delta(\mathcal{S})$\footnote{$\Delta(\cdot)$ denotes the probability simplex over the space.}. The uncertainty set, centered around $P^{\star}$, is defined as $\mathcal{P}=\mathcal{U}^{\phi,\sigma}(P^{\star})= \bigotimes_{(h,s,a)\in [H]\times \mathcal{S} \times \mathcal{A}}\mathcal{U}^{\phi,\sigma}_h(s,a)$, and $\mathcal{U}^{\phi,\sigma}_h(s,a)\triangleq\big\{P\in\Delta(\mathcal{S}): D(P,P^\star_h(\cdot|s,a))\leq \sigma\big\}$, containing all the transition kernels that differ from $P^\star$ up to some uncertainty level $\sigma \geq 0$, under some  probability divergence functions $D$
 \cite{INFORM2005_RobusDP_Iyengar, PMLR2022_SampleComplexityRORL_Panaganti, AnnalsStat2022_TheoreticalUnderstandingRMDP_Yang}. 
Specifically, in this paper, we mainly focus on the standard $\phi$-divergence uncertainty set \cite{ghosh2025provably,sason2016f}.
\begin{defi}[$\phi$-Divergence Uncertainty Set]
\label{def:f_divergence_uncertainty}
   For each \((s,a)\) pair, the uncertainty set is defined as:
\begin{align*}
\mathcal{U}^{\phi, \sigma}_h(s,a)  \triangleq \left\{ P \in \Delta(\mathcal{S}): D_{\phi}\Big(P,P^{\star}_h(\cdot|s,a)\Big)\leq \sigma \right\},
\end{align*}
where $D_{\phi}\big(P,P^{\star}_h(\cdot|s,a)\big) =\ \sum\limits_{s' \in \mathcal{S}} \phi\big( \frac{ P(s')}{P^{\star}_h(s'|s,a)} \big) P^{\star}_h(s'|s,a)$ is the $\phi$-divergence \cite{sason2016f}. 
\end{defi}
Without loss of generality, we assume that $\text{Supp}(P)\subseteq \text{Supp}(P^\star)$ for all $P\in \mathcal{U}^{\phi, \sigma}$. Among the three uncertainty sets we considered later, the KL and $\chi^2$ divergence naturally satisfy this assumption, and the total variation also satisfies it under an additional standard assumption (\Cref{ass:vanisin_minimal}). Notably, without this assumption, the sample complexity in online setting can be exponentially large due to the information-bottleneck issue \cite{Arxiv2024_DRORLwithInteractiveData_Lu}.

\subsection{Policy and Robust Value Function} The agent's strategy of taking actions is captured by a Markov policy $\pi := \{\pi_h\}_{h=1}^H$, with $\pi_h: \mathcal{S}\rightarrow \Delta(\mathcal{A})$ for each step $h \in [H]$, where $\pi_h(\cdot|s)$ is the probability of taking actions at the state $s$ in step $h$. In RMDPs, the performance of a policy is captured by the worst-case performance, defined as the robust value functions. Specifically,  given any policy $\pi$ and for each step $h \in [H]$, the \textit{robust value function} and the \textit{robust state-action value function} are defined as the expected accumulative reward under the worst possible transition kernel within the uncertainty set:
\begin{align}
    &V^{\pi,\sigma}_{h}(s) = \inf_{P \in \mathcal{U}^{\phi,\sigma}}\mathbb{E}_{\pi,P}\bigg[\sum\limits_{t=h}^H r_t(s_t,a_t)\Big | s_h=s\bigg], \label{eq:robust_Q_fn}\\
    &Q^{\pi,\sigma}_{h}(s,a) = \inf_{P \in \mathcal{U}^{\phi,\sigma}}\mathbb{E}_{\pi,P}\bigg[\sum\limits_{t=h}^H r_t(s_t,a_t)\Big | s_h=s,a_h=a\bigg], \nonumber 
\end{align} 
where the expectation is taken with respect to the state-action trajectories induced by policy $\pi$ under transition $P$.

 The goal of DR-RL is to find the optimal robust policy $\pi^{\star}:=\{\pi^{\star}_h\}$ that maximizes the robust value function, for some initial state $s_1$:
\begin{align}
\label{eq:optimal_policy}
\pi^\star \triangleq \argmax_{\pi\in \Pi} V^{\pi,\sigma}_{1}(s_1), 
\end{align}
where $\Pi$ is the set of policies. Such an optimal policy exists and can be a deterministic policy \cite{INFORM2005_RobusDP_Iyengar,NeuRIPS2023_DoublePessimismDROfflineRL_Blanchet}. Moreover, the optimal robust value functions (denoted by $Q^{\star,\sigma}_{h},
V^{\star,\sigma}_{h}$), which are the corresponding robust value functions of the optimal policy $\pi^\star$, are shown to be the unique solution to the robust Bellman equations: 
\begin{align}
\label{eq:optimal_V_S_value}
Q^{\star,\sigma}_{h}(s,a)  &= r_h(s,a) + \mathbb{E}_{\mathcal{U}^{\phi,\sigma}_h(s,a)}\left[V^{\star,\sigma}_{h+1}\right], \nonumber\\ V^{\star,\sigma}_{h}(s) &= \max_{a \in \mathcal{A}} Q^{\star,\sigma}_{h}(s,a),
\end{align} 
where $\mathbb{E}_{\mathcal{U}^{\phi,\sigma}_h(s,a)}\left[V\right]\triangleq \inf\limits_{P_h \in \mathcal{U}^{\phi,\sigma}_h(s,a)}\mathbb{E}_{s'\sim P_h(\cdot|s,a)}\left[V(s')\right]$.

On the other hand, for any policy $\pi$, the corresponding robust value functions also satisfy the following robust Bellman equation for $\pi$ (\cite{NeuRIPS2023_DoublePessimismDROfflineRL_Blanchet}(Prop. 2.3)):
\begin{align}
    Q^{\pi,\sigma}_{h}(s,a) &= r_h(s,a) + \mathbb{E}_{\mathcal{U}^{\phi,\sigma}_h(s,a)}\left[V^{\pi,\sigma}_{h+1}\right], \nonumber\\
    V^{\pi,\sigma}_{h}(s) &= \mathbb{E}_{a \sim \pi_h(\cdot|s)} \left[ Q^{\pi,\sigma}_{h}(s,a) \right].\label{eq:Robust_bellman_V_fn} 
\end{align}

\subsection{Online Distributionally Robust RL}
In this work, we study distributionally robust RL in the online setting, where the agent aims to learn the robust-optimal policy \( \pi^\star \) defined in eq. \ref{eq:optimal_policy} through interaction with the nominal environment \( P^\star \) over \( K \in \mathbb{N} \) episodes. At the beginning of episode \( k \), the agent observes the initial state \( s_1^k \), chooses a policy \( \pi^k \) and executes it in \( P^\star \) to generate a trajectory, and then updates its policy before the next episode. In the online setting, the agent cannot explore arbitrarily and must instead control the risk of worst-case outcomes during learning. Accordingly, the objective is to minimize the \emph{cumulative robust regret} over $K$ episodes:
\begin{align}
\label{eq:Regret_K}
\text{Regret}(K) \triangleq \sum_{k=1}^K \left[V^{\star,\sigma}_{1}(s^k_1) - V^{\pi^k,\sigma}_{1}(s^k_1)\right].
\end{align}

We additionally assess performance via \emph{sample complexity}, defined as the minimum number of samples $T = KH$ required to learn an $\varepsilon$-optimal robust policy $\widehat{\pi}$ satisfying
\begin{align}
V^{\star,\sigma}_1(s_1) - V^{\widehat{\pi},\sigma}_1(s_1) \leq \varepsilon.
\end{align}

\section{Robust Bellman Operator with Function Approximation}
\label{sec:Dual_Reformulation}
In this section, we describe the structural and computational challenges of \emph{online} DR-RL with general function approximation under $\phi$-divergence uncertainty sets, and present our approach to overcoming them. Our presentation follows the Bellman--Eluder (BE) dimension framework for general RL with function approximation \cite{russo2013eluder,jiang2017contextual,sun2019model,NeurIPS2021_BellmanEluderDim_Jin}, adapted to robust Bellman updates.

\paragraph{Function approximation.} When the state–action space is large, learning robust policies from interaction alone is computationally challenging. To address this, we adopt the function approximation technique, where we use a general function class $\mathcal{F}=\{ \mathcal{F}_h\}^H_{h=1} $ where $\mathcal{F}_h$ contains some functions $f : \mathcal{S}\times\mathcal{A}\to[0,H]$, to approximate the robust value function $Q^{\star,\sigma}_h$. This function class can be a parametric class with low-dimension parameters, e.g., neural network, to significantly reduce the computation and improve sample efficiency. To ensure effective learning with these function classes, prior work has identified structural conditions that they must satisfy \cite{russo2013eluder,jiang2017contextual,sun2019model,wang_reinforcement_2020,NeurIPS2021_BellmanEluderDim_Jin,NeurIPS2022_RobustRLOffline_Panaganti}. These conditions regulate how the functional class \( \mathcal{F} \) interacts with the RMDP dynamics. The most commonly used assumptions are the \textbf{\textit{representation conditions}}, which require that \( \mathcal{F} \) is expressive enough to capture the robust value functions of interest. More specifically, the optimal robust Q-function \( Q^{\star,\sigma} \in \mathcal{F} \) (known as realizability) and closure under the robust Bellman operator, namely \( \mathcal{T}^{\phi,\sigma}_h \mathcal{F}_{h+1} \subseteq \mathcal{F}_h \) (known as completeness). Following standard studies of function approximation in RL \cite{NeurIPS2021_BellmanEluderDim_Jin, xie2022role, NeurIPS2022_RobustRLOffline_Panaganti, wang2019optimism}, we adopt the following completeness assumption. 
\begin{assu}[Completeness]
\label{ass:completeness}
    For all $h \in [H]$, we have $\mathcal{T}^{\phi,\sigma}_h f_{h+1} \in \mathcal{F}_h$ for all $f_{h+1} \in \mathcal{F}_{h+1}$.
\end{assu}
Per \Cref{ass:completeness}, $\mathcal{F}$ is closed under the robust Bellman operator $\mathcal{T}^{\phi,\sigma}$. Unlike standard function-approximation RL, we do not assume realizability ($Q^{\star,\sigma}\in\mathcal{F}$). Instead, realizability can be implied together with our assumption on the duality function class \cite{Arxiv2022_OfflineDRRLLinearFunctionApprox_Ma,PMLR2024_LinearFunctionDRMDP_Liu,NeurIPS2024_MinimaxOptimalOfflineRL_Liu,Arxiv2024_UpperLowerDRRL_Liu,Arxiv2024_SampleComplexityOfflineLinearDRMDP_Wang}.

\paragraph{Distribution families and robust BE dimension.} Let $\mathcal X$ be a domain and $\Phi \subseteq (\mathcal X \to \mathbb R)$ be a function class. Let $\Pi$ be a family of probability distributions over $\mathcal X$.

\begin{defi}[Distributional $\varepsilon$-dependence \cite{NeurIPS2021_BellmanEluderDim_Jin}] 
A distribution $\mu \in \Pi$ is said to be \emph{$\varepsilon$-dependent} on a set
$\{\mu_1,\dots,\mu_{n}\}\subseteq \Pi$ with respect to $\Xi$ if for all $\xi \in \Xi$,
\begin{equation}\label{eq:dist-eluder-dependence}
\Big|\mathbb{E}_{\mu}[\xi]\Big|
\;\le\;
\varepsilon + \Big(\sum_{i=1}^{n} \mathbb{E}_{\mu_i}[\xi]^2\Big)^{1/2}.
\end{equation}
Otherwise, $\mu$ is said to be \emph{$\varepsilon$-independent} of $\{\mu_1,\dots,\mu_n\}$.
\end{defi}

\begin{defi}[Distributional Eluder (DE) dimension \cite{NeurIPS2021_BellmanEluderDim_Jin}]
\label{def:DE_dim}
The \emph{distributional Eluder dimension} of $\Xi$ with respect to $\Pi$ at scale $\varepsilon>0$,
denoted $\dimDE(\Xi,\Pi,\varepsilon)$, is the length of the longest sequence
$x_1,\dots,x_m \in \Pi$ such that for each $i\in[m]$ there exists $\varepsilon'>\varepsilon$ such that $x_i \text{ is } \varepsilon'\text{-independent of } \{x_1,\dots,x_{i-1}\}$.
\end{defi}

\begin{rem}
When $\Pi$ is taken to be the set of point masses $\{\delta_x(\cdot):= \text{Dirac Distribution at $x$}: x\in\mathcal X\}$,
this definition reduces to the standard (pointwise) Eluder dimension.
\end{rem}

In robust setting, we define the robust Bellman residual class at stage $h$ as
\begin{align}
\label{eq:robust_residual_class}
(\mathcal{I}-\mathcal{T}^{\phi,\sigma}_h)\mathcal{F}
:= \Big\{ f_h - \mathcal{T}^{\phi,\sigma}_h f_{h+1} \,:\, f\in \mathcal{F}\Big\},
\end{align}
where $\mathcal{T}^{\phi,\sigma}_hf(s,a)=r_h(s,a)+\mathbb{E}_{\mathcal{U}^{\phi,\sigma}_h(s,a)}\left[f_{\max}\right]$ is the robust Bellman operator, and $f_{\max}(s)=\max_a f(s,a)$. 
This class captures all possible robust Bellman errors that can arise from candidate value functions in $\mathcal{F}$. We then define the robust Bellman-Eluder (BE) dimension as follows.
\begin{defi}
\label{def:robust_BE_dim}
Let $\dim_{\mathrm{DE}}(\Xi,\Pi,\varepsilon)$ denote the DE dimension of a function class $\Xi$ with respect to a distribution family $\Pi$ at scale $\varepsilon$ \cite{NeurIPS2021_BellmanEluderDim_Jin}. We then define the \emph{robust Bellman--Eluder dimension} of $\mathcal{F}$ by
\begin{align*}
\dim^{\mathrm{rob}}_{\mathrm{BE}}(\mathcal{F},\Pi,\varepsilon)
:= \max_{h\in[H]} \dim_{\mathrm{DE}}\!\Big( (\mathcal{I}-\mathcal{T}^{\phi,\sigma}_h)\mathcal{F},\, \Pi_h,\, \varepsilon \Big),
\end{align*}
where $\mathcal{I}-\mathcal{T}^{\phi,\sigma}_h$ be the robust Bellman residual class. 
\end{defi}
This quantity measures the intrinsic statistical complexity of learning robust value functions under function approximation. Smaller robust BE dimension implies fewer distinguishable robust Bellman errors along interaction trajectories, enabling tighter confidence sets and improved sample efficiency.

\begin{rem}[Distribution families $D_{\mathcal F}$ and $D_\Delta$]
Following \cite{NeurIPS2021_BellmanEluderDim_Jin}, we consider two distribution families:
\begin{itemize}
\item $D_{\mathcal F}=\{D_{\mathcal F,h}\}_{h=1}^H$, where $D_{\mathcal F,h}$ is the collection of step-$h$ state-action occupancy distributions induced by rolling in with $\pi_f$ for some $f\in\mathcal F$.
\item $D_\Delta=\{D_{\Delta,h}\}_{h=1}^H$, where $D_{\Delta,h}$ is the set of point masses on $\mathcal S \times \mathcal A$.
\end{itemize}
\end{rem}

\begin{rem}[Robust Bellman Rank and Relations with known RL Problems]
Many structured RL models—e.g., tabular, linear, reactive POMDPs—admit efficient learning guarantees. In the non-robust literature, two generic tractability notions---{\it low Bellman rank} and {\it low Eluder dimension}--cover many examples but are incomparable. For DR-RL, we instead use a single notion—{\it robust BE dimension}—defined via the robust Bellman residual class (eq. \ref{eq:robust_residual_class}). As shown in Appendix \ref{sec:Robust_Bellman_Rank}, low robust BE dimension subsumes the robust analogs of both low robust Bellman rank and low Eluder dimension, providing a unified structural condition for tractable robust RL with function approximation. 
\end{rem}

\paragraph{Empirical robust Bellman operator and functional optimization.}
We first provide the dual formulation of $\phi$-divergence uncertainty set support functions. Given a nominal kernel $P^\star$, there is a dual formulation of $\mathbb{E}_{\mathcal{U}^{\phi,\sigma}_h(s,a)}[\cdot]$: 
\begin{align}
\label{eq:phi-ball-dual}
\mathbb{E}_{\mathcal{U}^{\sigma}_h(s,a)}[V]
&=
-\inf_{\eta > 0,\ \nu \in \mathbb{R}}
\Big\{\eta\sigma - \nu +\nonumber\\
&\quad \eta \mathbb{E}_{s'\sim P_h^\star(s'|s,a)}\left[\phi^\star\!\left(\nu - V(s')/\eta\right)\right]
\Big\},
\end{align}
where $\phi^\star$ is the convex conjugate of $\phi$ restricted to $[0,\infty)$. We will use this representation as a unifying template for deriving robust bonuses. This formula, as given in eq. \ref{eq:phi-ball-dual}, follows from standard strong duality arguments; see, e.g., \cite{AnnalsStat2022_TheoreticalUnderstandingRMDP_Yang}[Lemma B.1].

We now denote $l_{\phi}(f;s,a,s';\eta,\nu)$ as the pointwise dual integrand for a single next-state $s'$ and it is defined as 
\begin{align}
\label{eq:pointiwse_l}
  l_{\phi}(f;s,a,s';\eta,\nu)&\triangleq \eta\sigma-\nu \,\\
&\quad + \eta\phi^\star\!\left(-\frac{\max_{a'}f(s',a')+\nu}{\eta}\right).\nonumber
\end{align}
The robust Bellman operator is then 
\begin{align}
&(\mathcal{T}^{\phi,\sigma}_h f_{h+1})(s,a) := r(s,a) \nonumber\\
&\quad -\inf_{\eta > 0,\ \nu \in \mathbb{R}}\mathbb{E}_{s'\sim P^\star_h(s,a)}\left[l_{\phi}(f_{h+1};s,a,s';\eta,\nu\right].\label{eq:Robust_Bellman_Operator_TV}
\end{align}

From eq. \ref{eq:Robust_Bellman_Operator_TV} we can recall that the robust value function is the fixed point of the robust Bellman operator. Therefore, finding an optimal robust policy reduces to computing this fixed point. As a result, applying the operator exactly is generally impractical: for every $(s,a)$, the term $\mathbb{E}_{\mathcal{U}^{\phi,\sigma}_h(s,a)}[\cdot]$ entails solving an optimization problem over an $S$-dimensional $\phi$-divergence uncertainty set, which rapidly becomes computationally expensive.

To overcome this computational bottleneck, we develop an efficient empirical procedure, inspired by \cite{NeurIPS2022_RobustRLOffline_Panaganti}, which eliminates pointwise scalar optimizations by reformulating the computation as a \emph{single} functional optimization problem. Specifically, consider the probability space $(\mathcal{S}\times \mathcal{A}, \Sigma(\mathcal{S}\times \mathcal{A}), \mu)$, where $\mu$ is a distribution over $\mathcal{S}\times\mathcal{A}$ (typically the stage-$h$ visitation distribution). Let $\mathcal{L}^1(\mu;\mathbb R^2)\triangleq \{g=(g_{\eta},g_{\nu}): \mathcal{S}\times \mathcal{A} \to \mathbb R^2\big|g_{\eta},g_{\nu}\in \mathcal{L}^1(\mu)\}$ denote the space of absolutely integrable dual functions and define the dual loss $\mathrm{DualLoss}(g; f)$ as 
\begin{align}
\label{eq:loss_dual}
\mathbb{E}_{(s,a)\sim\mu}\big[\mathbb{E}_{s'\sim P_{s,a}^{\star}}[l_{\phi}(f;s,a,s';g)\big],
\end{align}
where $g(s,a)=\big(g_{\eta}(s,a),g_{\nu}(s,a)\big)$ and $l_{\phi}(f;s,a,s';g):=g_{\eta}(s,a)\sigma-g_{\nu}(s,a) + g_{\eta}(s,a)\,\phi^\star\!\left(-\frac{\max_{a'}f(s',a')+g_{\nu}(s,a)}{g_{\eta}(s,a)}\right)$. We then extend the results in \cite{NeurIPS2022_RobustRLOffline_Panaganti} (which considers the total variation set) and \cite{Arxiv2024_ModelFreeRobustRL_Panaganti} (which considers regularized MDPs), and show that for general $\phi$-divergence RMDP, minimization of the dual loss is equivalent to the support function under some distribution.
\begin{lem}[Dual loss minimization]
\label{lem:equiv_loss_dual}
Let $\mathrm{DualLoss}$ be defined as the dual loss function as in eq. \ref{eq:loss_dual}. Then, for any function $f:\mathcal S \times \mathcal A \to [0,H]$, we have 
\begin{align*}
\inf_{g\in \mathcal{L}^1(\mu;\mathbb R^2)} \mathrm{DualLoss}(g; f)=
\mathbb{E}_{(s,a)\sim \mu}\left[\mathbb{E}_{\mathcal{U}^{\phi,\sigma}_h(s,a)}[f]\right].
\end{align*}
\end{lem}
This enables us to form an empirical counterpart of the dual objective, denoted by $\widehat{\mathrm{DualLoss}}(g; f)$, and compute an approximate dual minimizer via
$\inf\limits_{g\in\mathcal{L}^1(\mu;\mathbb R^2)} \widehat{\mathrm{DualLoss}}(g; f)$. We further introduce a function class  $\mathcal{G}=\{g=(g_{\eta},g_{\nu})|g_{\eta},g_{\nu} :\mathcal{S}\times\mathcal{A}\to\Theta_{\phi}\}$ to approximate the dual variables as well. We adopt the following realizability condition from \cite{NeurIPS2022_RobustRLOffline_Panaganti,Arxiv2024_ModelFreeRobustRL_Panaganti}.
\begin{assu}
\label{ass:approx_dual_realizability}
    For all $f\in \mathcal{F}$ and any policy $\pi$, there exists a uniform constant $\varepsilon^{\mathrm{dual}}$ such that
    \[
      \inf_{g\in \mathcal{G}} \mathrm{DualLoss}(g; f)
      -
      \inf_{g\in \mathcal{L}^1(\mu^\pi;\mathbb R^2)} \mathrm{DualLoss}(g; f)
      \;\leq\; \varepsilon^{\mathrm{dual}},
    \]
    where $\mu^\pi$ is the distribution induced by $\pi$ under $P^\star$.
\end{assu}
This assumption is not restrictive. For instance, note that $\mathcal{L}^1$ can be approximated by deep/wide neural networks \citep{goodfellow2016deep}, which ensures \Cref{ass:approx_dual_realizability} with such neural network classes. Accordingly, for a fixed $f$ and dataset $\mathcal{D}$, we approximate the robust Bellman operator by first computing
$ \underline{g}_f = \arg\min\limits_{g\in\mathcal{G}} \widehat{\mathrm{DualLoss}}(g; f),$
and then plugging it in the robust Bellman operator as $(\mathcal{T}^{\phi,\sigma}_{\underline{g}_f} f)(s,a)$:
\begin{align}\label{eq:revised_dual_TV_g}
r(s,a) - \mathbb{E}_{s'\sim P^{\star}_{s,a}}\Big[l_{\phi}(f;s,a,s';\underline{g}_f)\Big].
\end{align}

\paragraph{Uniform approximation of the robust Bellman operator.}
The following lemma quantifies how accurately the empirical operator approximates the true robust Bellman operator in a global $\mathcal{L}^1$ sense. We first adopt the following assumption, which holds for all uncertainty sets we considered. 

\begin{assu}
\label{ass:multiplier-range-TV}
There exist a compact set $\Theta_{\phi}\subseteq \mathbb{R}_+$ and a constant $ B_{\phi}(\sigma)<\infty$ such that for all $h\in[H]$, all $f_{h+1}\in\mathcal{F}_{h+1}$, all $\eta,\nu\in\Theta_{\phi}$, and for all $(s,a,s')$, $\big|l_{\phi}(f_{h+1};s,a,s';\eta,\nu)\big|\le B_{\phi}(\sigma)$.
\end{assu}

\begin{lem}[Uniform approximation of robust Bellman operator]
\label{lem:tv_operator_approx_be}
Fix any policy $\pi$, let $\mu_h^\pi$ be the step-$h$ state-action distribution under $P^\star$, and let $\mathcal{D}$ be the dataset collected by executing $\pi$. Then, under \Cref{ass:completeness}-\ref{ass:multiplier-range-TV}, for any $\delta\in(0,1)$, with probability at least $1-\delta$, it holds that 
\begin{align}
    &\sup_{f\in\mathcal{F}_{h+1}}
\big\|\mathcal{T}^{\phi,\sigma}_h f-\mathcal{T}^{\phi,\sigma}_{\underline{g}_f} f\big\|_{1,\mu_h^\pi} \nonumber\\
&=
\mathcal{O}\!\bigg(
B_{\phi}(\sigma)\sqrt{\frac{\log\!\big(|\mathcal{F}_{h+1}||\mathcal{G}|/\delta\big)}{|\mathcal{D}|}}
+\varepsilon^{\mathrm{dual}}
\bigg).
\end{align}
\end{lem}

A similar result is derived for a fixed distribution (the offline dataset distribution) in \cite{NeurIPS2022_RobustRLOffline_Panaganti,Arxiv2024_ModelFreeRobustRL_Panaganti}, whereas we show it simultaneously hold for any policy and its induced distribution. 
Lemma \ref{lem:tv_operator_approx_be} shows that our empirical functional optimization yields a uniformly accurate approximation to the robust Bellman operator under the $\mathcal{L}^1(\mu^\pi;\mathbb R^2)$ norm. Crucially, the error is controlled \emph{globally} with respect to the distribution $\mu^\pi$, rather than pointwise in $(s,a)$. This global control is what we leverage later to define our robust confidence sets and the global error term that drives the design and analysis of our main algorithm.

\begin{rem}[Relation to $\varphi$-regularized RMDPs \cite{Arxiv2024_ModelFreeRobustRL_Panaganti}]
Assumption~\ref{ass:approx_dual_realizability} and Lemma~\ref{lem:tv_operator_approx_be} build on the dual functional machinery first developed by \cite{NeurIPS2022_RobustRLOffline_Panaganti} and subsequently employed by \cite{Arxiv2024_ModelFreeRobustRL_Panaganti} for $\varphi$-regularized RMDPs in a hybrid setting, where the policy value includes a Lagrangian penalty $\lambda$ with $\lambda>0$ and the guarantees scale with $(\lambda+H)$. Although the $\varphi$-regularized RMDPs recovers the standard RMDPs with $\lambda=0$, the studies in \cite{Arxiv2024_ModelFreeRobustRL_Panaganti} are developed for $\lambda>0$, hence our result cannot be obtained directly. 
\end{rem}

\section{Robust Fitting Learning Algorithm for $\phi$-Divergence Set (RFL-$\phi$)}
\label{sec:Algorithm}
We then utilize our previous constructions and propose our Robust Fitted Learning (RFL) algorithm. 
\begin{algorithm}[t]
\caption{{\Algo}}
\label{alg:robust_golf}
\begin{algorithmic}[1]
\State \textbf{Input:} Function class $\mathcal{F}$, Dual Function class $\mathcal{G}$,  $\beta > 0$,   $\sigma >0$.
\State \textbf{Initialize:} $\mathcal{F}^{(0)} \gets \mathcal{F}, \ \mathcal{D}^{(0)}_h \gets \emptyset \ \forall h \in [H]$
\For{episode $k = 1,2,\dots,K$}
\State $\pi^{(k)} \gets \pi^{f^{(k)}}$, where $f^{(k)} \leftarrow \arg\max_{f \in \mathcal{F}^{(k-1)}} f(s_1,\pi^f_1(s_1))$  
  \State {Execute} $\pi^{(k)}$ to collection trajectory $(s^{(k)}_1, a^{(k)}_1, r^{(k)}_1), \dots, (s^{(k)}_H, a^{(k)}_H, r^{(k)}_H)$  
  \State  $\mathcal{D}^{(k)}_h \gets \mathcal{D}^{(k-1)}_h \cup \{(s^{(k)}_h,a^{(k)}_h, s^{(k)}_{h+1})\}, \forall h$
\State {$\mathcal{F}^{(k)}_H\leftarrow\{0\}$}
\State For all $f_{h+1}\in \mathcal{F}^{(k)}_{h+1}$, update the confidence set according to eq. \ref{eq:hatg}:
  \begin{align*}
  \mathcal{F}^{(k)}_h\leftarrow \Bigg\{ f \in \mathcal{F}_h : L_{\mathcal{D}^{(k)}_h}(f_h,f_{h+1},\underline{g}_{f_{h+1}}) - \min_{f'_h \in \mathcal{F}_h} L_{\mathcal{D}^{(k)}_h}(f'_h,f_{h+1},\underline{g}_{f_{h+1}}) \leq \beta, \forall h\in [H] \Bigg\}  
  \end{align*}
\EndFor
\State \textbf{Output:} $\bar{\pi} = \text{unif}(\pi^{(1:K)})$. 
\end{algorithmic}
\end{algorithm}

Our algorithm follows the standard fitting learning structure. In each step $h$, we construct a confidence set $\mathcal{F}^{(k)}$ (Line 8) using the fitted error under the robust Bellman operator to ensure the inclusion of $Q^{\star,\sigma}\in\mathcal{F}^{(k)}$ (Lemma \ref{lem:robust-lemma39}). As discussed, we utilize our functional optimization based loss function and the error bound in Lemma \ref{lem:tv_operator_approx_be} to construct the set. Namely, given a function $f$, we first obtain the dual-variable approximation $(\underline{g}_{\eta,f},\underline{g}_{\nu,f})$ via minimizing empirical functional optimization loss, formulated as
\begin{align}
\underline{g}_f \triangleq \argmin\limits_{g\in \mathcal{G}}\sum_{(s,a,s')\in \mathcal{D}^{(k)}_h}\Big(l_{\phi}(f;s,a,s';g)\Big),\label{eq:hatg}
\end{align}
where we further capture the empirical robust Bellman error $L_{\mathcal{D}^{(k)}_h}(f',f,g)$ based on dataset $\mathcal{D}^{(k)}_h$ as $\sum_{(s,a,r,s')\in\mathcal{D}^{(k)}_h}\Big\{f'(s,a)-r-l_{\phi}(f;s,a,s';g)\Big\}^2$.

Notably, to handle large-scale problems we build \emph{global} confidence sets by optimizing over $\{f_h\}_{h=1}^H$ jointly \cite{zanette2020learning}, rather than certifying errors state--action-wise as in tabular UCB. Concretely, $\mathcal{F}^{(k)}$ contains functions that (i) achieve small squared robust Bellman error on the collected data $\{\mathcal{D}^{(k)}_h\}_{h=1}^H$ via the dual plug-in residual, and (ii) include any $f$ whose empirical loss is within a tolerance of the best loss in $\mathcal{F}$. We later choose the threshold $\beta$ so that $Q^{\star,\sigma}\in \mathcal{F}^{(k)}$ holds with high probability. Given this valid set, we apply optimism and select $\pi^{(k)}=\pi^{f^{(k)}}$, where $f^{(k)}\in\mathcal{F}^{(k)}$ maximizes the optimistic estimate $f^{(k)}_1\!\big(s_1,\pi^{(k)}_1(s_1)\big)$ of total return, thereby balancing exploration and performance.

\paragraph{Algorithmic novelties.}
While our algorithm follows the high-level paradigm of “optimism + fitted value iteration,” similar in spirit to GOLF~\citep{NeurIPS2021_BellmanEluderDim_Jin, xie2022role}, its design is fundamentally tailored to the robust BE dimension rather than non-robust Bellman-error control or coverage-based arguments \cite{xie2022role} and offline DR-RL \cite{NeurIPS2022_RobustRLOffline_Panaganti, Arxiv2024_ModelFreeRobustRL_Panaganti}. A central challenge is that the data ${\mathcal{D}_h^{(k)}}$ are collected under a sequence of evolving policies across episodes, so learning a single policy $\pi$ cannot rely on guarantees derived under a fixed sampling distribution ${\mathcal{D}_h^{(k)}} \sim \mu^{\pi}$. To address this, we develop an optimistic, dual-driven fitted learning procedure in which a value–dual pair $(f,g)$ is learned jointly online. The dual function $g$ serves a dual role: it provides a tractable functional approximation to the TV-robust Bellman operator, and it induces a global control of the robust Bellman residual class $(\mathcal{I}-\mathcal{T}_h^{\phi,\sigma})\mathcal{F}$, which is precisely the object governing the robust BE dimension (Definition \ref{def:robust_BE_dim}). This allows us to construct confidence sets directly in function space and implement optimism in a manner that is tightly coupled to the intrinsic complexity of the problem. In contrast to GOLF, which controls squared non-robust Bellman errors, and to offline RFQI-style robust methods~\citep{NeurIPS2022_RobustRLOffline_Panaganti}, where the dual is analyzed under a fixed data distribution and does not guide exploration, our approach leverages the dual to both approximate robustness and drive exploration. This alignment between algorithm design, statistical control, and complexity measure is what enables regret guarantees governed by the robust BE dimension.

\section{Theoretical Guarantees}
\label{sec:Analysis}
We then develop the theoretical guarantees of our algorithm.

\begin{thm}
\label{thm:regret_bound_RGOLF}
Assume \Cref{ass:completeness}-\ref{ass:multiplier-range-TV}. For any $\delta\in (0,1]$, we set $\beta=\mathcal{O}\Big(B_{\phi}(\sigma)\log\!\big(|\mathcal{F}||\mathcal{G}|KH/\delta\big)\Big)$ in \Cref{alg:robust_golf}. Then, with probability at least $1-\delta$,  \footnote{We assume for simplicity that $|\mathcal{F}|,|\mathcal{G}| < \infty$, but our result can be directly extended to the general infinite case with the standard finite coverage technique \cite{xie2022role,NeurIPS2022_RobustRLOffline_Panaganti}.}
\begin{align}
&\mathrm{Regret}(K)
\;\le\;
\tilde{\mathcal{O}}\!\Big(
\sqrt{dH^2B^2_{\phi}(\sigma)K}+ \;\varepsilon^{\mathrm{dual}}\Big),
\end{align}
\end{thm}
where $d \;:=\; \dimrobBE\bigl(\mathcal F, D_{\mathcal F}, 1/\sqrt{K}\bigr)$ as in \Cref{def:robust_BE_dim}.

\paragraph{Technical novelties and implications.}
Our work provides the first regret guarantees for online DR-RL with general function approximation based on an intrinsic complexity measure—the robust Bellman–Eluder (BE) dimension—without relying on coverage, density-ratio, or concentrability assumptions. This brings online robust RL into the modern complexity-theoretic framework of general function approximation \cite{NeurIPS2021_BellmanEluderDim_Jin}, while requiring new techniques to handle worst-case dynamics.

First, we introduce the robust BE dimension as the appropriate notion of intrinsic difficulty for online DR-RL. Our bounds depend only on the statistical complexity of the robust Bellman residual class $(\mathcal{I}-\mathcal{T}^{\phi,\sigma}_h)\mathcal{F}$ evaluated along on-policy trajectories. This yields an exploration theory for robust MDPs analogous to BE-based non-robust RL, avoids external coverage assumptions, and recovers sharp rates in structured settings such as tabular and linear RMDPs. Second, unlike existing BE analyses that assume a known Bellman operator, our setting requires learning the robust Bellman operator from data. We address this via a dual ERM plug-in backup, which introduces a new operator approximation error. We develop a new uniform control argument based on bounded $\phi$-divergence multipliers and approximate dual realizability, a component absent from prior BE-based theory. Finally, our regret decomposition cleanly separates (i) an exploration term governed by the robust BE dimension and (ii) an operator-approximation term governed by dual learning error. This parallels the structure of non-robust BE analyses but requires fundamentally new arguments due to the learned robust operator.

In contrast to offline/hybrid robust RL, which assumes fixed datasets and strong global coverage assumptions \cite{NeurIPS2022_RobustRLOffline_Panaganti,Arxiv2024_ModelFreeRobustRL_Panaganti}, we study fully online setting with evolving data distributions and provide the first intrinsic, complexity-theoretic characterization of exploration in online DR-RL with general function approximation.

As an immediate corollary, we obtain the sample complexity for learning an $\varepsilon$-optimal policy with {\Algoname} by applying a standard online-to-batch conversion \cite{NeuRIPS2001_OnlineLearningAlgo_Cesa} for each total variation (TV), $\chi^2$ and KL divergence uncertainty sets.

\begin{cor}[Sample Complexity: TV, $\chi^2$, KL]
\label{cor:Sample_Complexity_bound_TV}
Under the same setup in Theorem \ref{thm:regret_bound_RGOLF}, set $\varepsilon^{\mathrm{dual}}=0$ and additionally assume \Cref{ass:vanisin_minimal} 
for TV. Then, with probability at least $1 - \delta$, the sample-complexity of {\Algoname} to obtain an $\varepsilon$-optimal robust policy is $T=KH$ as 
\begin{align*}
\begin{cases}
        \mathcal{O}\bigg(\frac{H^5(\min\{H,1/\sigma\})^2d\log\big(|\mathcal F||\mathcal G|T/\delta\big)}{\varepsilon^2}\bigg), \hfill\text{{\RMDPTV}}\\
        \mathcal{O}\bigg(\frac{H^5(1+\sqrt{\sigma})^2d\log\big(|\mathcal F||\mathcal G|T/\delta\big)}{\varepsilon^2}\bigg), \qquad \text{{\RMDPchi}}\\
        \mathcal{O}\bigg(\frac{H^5\sigma^2d\log\big(|\mathcal F||\mathcal G|T/\delta\big)}{\varepsilon^2}\bigg), \qquad \text{{\RMDPKL}}
    \end{cases}
\end{align*}
\end{cor}

Our bound is independent of $S$ and $A$, indicating scalability to large state and action spaces. Moreover, as we shall discuss later, the dependences on other parameters, $H, \sigma, \varepsilon$, are also tight and near-optimal.

To validate the near optimality and sharpness of our results, we specialize them to two special cases: tabular case and linear RMDP \cite{Arxiv2022_OfflineDRRLLinearFunctionApprox_Ma,Arxiv2024_UpperLowerDRRL_Liu} setting. A more detailed comparison is provided in Appendix \ref{sec:com}. 
\begin{rem}[Tabular RMDPs] 
\label{rem:RFLTV_Tabular}
In the finite tabular case, we take $\mathcal{F}$ and $\mathcal{G}$ as the full classes of bounded functions $\mathcal{S}\!\times\!\mathcal{A}\to[0,H]$ and $\mathcal{S}\!\times\!\mathcal{A}\to\Theta_{\phi}$, so that $\log(|\mathcal{F}||\mathcal{G}|)=\tilde{\mathcal{O}}(SA)$ \cite{NeurIPS2021_BellmanEluderDim_Jin} and $d=\mathcal{O}(SA)$.  Plugging these into Theorem~\ref{thm:regret_bound_RGOLF} yields a tight tabular regret bound. For instance, for $\chi^2$ set, our sample complexity for an $\varepsilon$-optimal policy is $\tilde{\mathcal{O}}\!\big({H^5(1+\sqrt{\sigma})^2S^2A^2\varepsilon^{-2}}\big)$, which improves the prior results $\tilde{\mathcal{O}}\!\big(C_{\mathrm{vr}}{H^5(1+\sqrt{\sigma})^2S^3A\varepsilon^{-2}}\big)$ in \cite{ICML2025_OnlineDRMDPSampleComplexity_He} (note that $C_{\mathrm{vr}}$ is polynomial in $S,H,A$). 
\end{rem}

\begin{rem}[Linear RMDPs]
\label{rem:linear_RMDOP_BE}
As another special case, we instantiate our framework to $d_{\mathrm{lin}}$-rectangular linear RMDPs~\citep{Arxiv2022_OfflineDRRLLinearFunctionApprox_Ma,Arxiv2024_UpperLowerDRRL_Liu}. Note that for linear RMDPs, the uncertainty rectangularity is defined differently from our $(s,a)$-rectangularity. However, our method can be similarly extended (see \Cref{subsec:linear-RMDP}). When \(\mathcal F\) and the dual class \(\mathcal G\) are \(d_{\mathrm{lin}}\)-dimensional linear function classes, the robust BE complexity satisfies
\(\dimrobBE(\mathcal F, D_{\mathcal F}, 1/\sqrt{K})=\widetilde{\mathcal O}(d_{\mathrm{lin}})\) and
\(\log(|\mathcal F||\mathcal G|)=\widetilde{\mathcal O}(d_{\mathrm{lin}})\)
(cf.~\cite{wang_reinforcement_2020, NeurIPS2021_BellmanEluderDim_Jin}).
Substituting into \Cref{thm:regret_bound_RGOLF} with \(\varepsilon^{\mathrm{dual}}=0\) yields
\(
\mathrm{Regret}(K)=\widetilde{\mathcal O}\!\big(\sqrt{d^2_{\mathrm{lin}}\,H^2\,B^2_{\phi}(\sigma)K}\big)
\).
In particular, for TV uncertainty, we obtain
\(
\mathrm{Regret}(K)=\widetilde{\mathcal O}\!\Big(\sqrt{d^2_{\mathrm{lin}}\,H^{4}\big(\min\{H,1/\sigma\}\big)^2K}\Big),
\)
which matches the sharp dependencies on $(d_{\mathrm{lin}},\sigma,K)$ in
\cite{Arxiv2024_UpperLowerDRRL_Liu} except $H$. Hence, our general function-approximation theory recovers the near-optimal regret while strictly extending the scope beyond linear realizability.
\end{rem}

Our results hence are sharp and approximately match or improve the previous results under the two special cases, while strictly extending the generality.

\subsection{Comparision}\label{sec:com}

\begin{table*}[t]
\centering
\caption{Comparison of sample complexity in general-function, tabular and linear settings. Our contributions are highlighted.}
\label{tab:comparison}
\small
\setlength{\tabcolsep}{6pt}
\renewcommand{\arraystretch}{1.15}

\begin{tabular}{c|c|c|c|c}
\toprule
\textbf{Setting} & \textbf{Method} & \textbf{Robust} & \textbf{Divergence} & \textbf{Sample complexity} \\
\midrule

\multirow{8}{*}{\textbf{General}} 

& Online, \cite{xie2022role} & No & -- 
& $\tilde{\mathcal{O}}\!\big(C_{\mathrm{cov}}H^3\log(|\mathcal{F}|/\delta)\varepsilon^{-2}\big)$ \\

& Online, \cite{NeurIPS2021_BellmanEluderDim_Jin} & No & -- 
& $\tilde{\mathcal{O}}\!\big(H^3 \text{dim}_{\text{BE}}\log(|\mathcal{F}|/\delta)\varepsilon^{-2}\big)$ \\

\cmidrule(l){2-5}

& \multirow{3}{*}{\textbf{Online, {\Algoname} (Ours)}} 
& \multirow{3}{*}{Yes}
& TV 
& $\tilde{\mathcal{O}}\!\big(H^{5}(\min\{H,1/\sigma\})^2\dimrobBE\log(|\mathcal{F}||\mathcal{G}|/\delta)\varepsilon^{-2}\big)$ \\

& & & $\chi^2$ 
& $\tilde{\mathcal{O}}\!\big(H^{5}(1+\sqrt{\sigma})^2\dimrobBE\log(|\mathcal{F}||\mathcal{G}|/\delta)\varepsilon^{-2}\big)$ \\

& & & KL 
& $\tilde{\mathcal{O}}\!\big(H^{5}\sigma^2\dimrobBE\log(|\mathcal{F}||\mathcal{G}|/\delta)\varepsilon^{-2}\big)$ \\

\cmidrule(l){2-5}

& Lower Bound & -- & -- & N/A \\

\midrule

\multirow{10}{*}{\textbf{Tabular}} 

& Online, \cite{azar_minimax_2017} & No & -- 
& $\tilde{\mathcal{O}}\!\big(SAH^4\varepsilon^{-2}\big)$ \\

& Online, \cite{Arxiv2024_DRORLwithInteractiveData_Lu} & Yes & TV 
& $\tilde{\mathcal{O}}\!\big(H^3\min\{H,\sigma^{-1}\}SA\varepsilon^{-2}\big)$ \\

\cmidrule(l){2-5}

& \multirow{3}{*}{Online, \cite{ICML2025_OnlineDRMDPSampleComplexity_He}} & \multirow{3}{*}{Yes} & TV 
& $\tilde{\mathcal{O}}\!\big(C_{\mathrm{vr}}S^3AH^5\varepsilon^{-2}\big)$ \\
& & & $\chi^2$ & $\tilde{\mathcal{O}}\!\big(C_{\mathrm{vr}}(1+\sqrt{\sigma})^2S^3AH^5\varepsilon^{-2}\big)$ \\
& & & KL &  $\tilde{\mathcal{O}}\!\big(C_{\mathrm{vr}}\big(1+(H\sqrt{S}/\sigma P^\star_{\min})\big)^2 SAH^3\varepsilon^{-2}\big)$\\

\cmidrule(l){2-5}

& \multirow{3}{*}{\textbf{Online, {\Algoname} (Ours)}} 
& \multirow{3}{*}{Yes}
& TV 
& $\tilde{\mathcal{O}}\!\big(H^{5}(\min\{H,1/\sigma\})^2 S^2A^2\varepsilon^{-2}\big)$ \\

& & & $\chi^2$ 
& $\tilde{\mathcal{O}}\!\big(H^{5}(1+\sqrt{\sigma})^2 S^2A^2\varepsilon^{-2}\big)$ \\

& & & KL 
& $\tilde{\mathcal{O}}\!\big(H^{5}\sigma^2S^2A^2\varepsilon^{-2}\big)$ \\

\cmidrule(l){2-5}

& Lower Bound, \cite{Arxiv2024_DRORLwithInteractiveData_Lu} & Yes & TV 
& $\tilde{{\Omega}}\!\big(H^{3}\min\{H,1/\sigma\}SA\varepsilon^{-2}\big)$ \\

& \multirow{2}{*}{Lower Bound, \cite{ghosh2025provably}} 
& \multirow{2}{*}{Yes}
& $\chi^2$ 
&$\tilde{\Omega}\!\big(H^5(1+\sigma)SA\varepsilon^{-2}\big)$ \\

& & & KL 
& $\tilde{\Omega}\!\big(H^5SA\varepsilon^{-2}/(P^\star_{\min}\sigma^2)\big)$ \\

\midrule
\multirow{4}{*}{\textbf{Linear}} 
& Online, \citep{PMLR2023_NearlyMinimaxOptimalRLLinearMDP_He} & No &  --
& $\tilde{\mathcal{O}}\!\big(d^{2}_{\mathrm{lin}}H^{4}\varepsilon^{-2}\big)$ \\[0.2em]
& Online, \citep{Arxiv2024_UpperLowerDRRL_Liu} & Yes & TV
& $\tilde{\mathcal{O}}\!\big(d^{2}_{\mathrm{lin}}H^{3}(\min\{H,1/\sigma\})^{2}\varepsilon^{-2}\big)$ \\[0.2em]
& Hybrid, \citep{Arxiv2024_ModelFreeRobustRL_Panaganti} & Yes & TV
& $\tilde{\mathcal{O}}\!\big(\max\{C^{2}(\pi^\star),1\}\,d^{3}_{\mathrm{lin}}H^{3}(\lambda+H)^{2}\varepsilon^{-2}\big)$ \\[0.2em]
\cmidrule(l){2-5}

& \multirow{3}{*}{\textbf{Online, {\Algoname} (Ours)}} 
& \multirow{3}{*}{Yes}
& TV 
& $\tilde{\mathcal{O}}\!\big(d^{2}_{\mathrm{lin}}H^{5}(\min\{H,1/\sigma\})^{2}\varepsilon^{-2}\big)$ \\

& & & $\chi^2$ 
& $\tilde{\mathcal{O}}\!\big(d^2_{\mathrm{lin}}H^{5}(1+\sqrt{\sigma})^2\varepsilon^{-2}\big)$ \\

& & & KL 
& $\tilde{\mathcal{O}}\!\big(d^2_{\mathrm{lin}}H^{5}\sigma^2\varepsilon^{-2}\big)$ \\

\cmidrule(l){2-5}
& Lower Bound \citep{Arxiv2024_UpperLowerDRRL_Liu}& Yes & TV
& $\tilde{{\Omega}}\!\big( d^2d^2_{\mathrm{lin}}H^2(\min\{H,1/\sigma\})^2\varepsilon^{-2}\big)$ \\
\bottomrule

\end{tabular}

\end{table*}

Table~\ref{tab:comparison} summarizes our theoretical guarantees of {\Algoname} of $\phi$-divergence uncertainty set, and positions them relative to existing results across three regimes. First, in the \emph{general-function, purely online} setting, our bound is governed by an intrinsic BE-style complexity measure and makes the TV/$\chi^2$/KL uncertainty dependence explicit. Second, in \emph{tabular} RMDPs, our specialization recovers the same TV uncertainty scaling achieved by near-optimal tabular analyses, while providing parallel $\chi^2$/KL scalings in the same uncertainty-multiplier format. Third, in \emph{linear} RMDPs, the table should be read as a \emph{reduction}: under small robust BE dimension, our general theorem yields the stated linear rates, consistent with BE-dimension-based comparisons and without implying universal learnability of arbitrary online linear RMDPs \citep{Arxiv2024_UpperLowerDRRL_Liu,NeurIPS2021_BellmanEluderDim_Jin}.

\paragraph{General function approximation: robust extension of BE-style characterizations.}
In non-robust online RL with general function approximation, the sharpest comparisons are typically phrased in terms of intrinsic complexity (e.g., BE/DE dimension) rather than explicit $S,A$ dependence. In particular, the BE-dimension framework yields sample complexity of order $\widetilde{O}\!\big(H^3\,\dim_{\mathrm{BE}}\log(|\mathcal F|/\delta)\,\varepsilon^{-2}\big)$ for learning near-optimal policies in low-BE problems \citep{NeurIPS2021_BellmanEluderDim_Jin}. Motivated by BE-dimension analyses, we characterize exploration through the complexity of Bellman errors: we work with the \emph{robust} Bellman residual class and derive a regret bound that scales with the resulting robust BE dimension $\dimrobBE$ (\Cref{def:robust_BE_dim}).

A second natural comparator in the general-function online literature is work that incurs explicit \emph{coverage/concentrability} factors (often denoted $C_{\mathrm{cov}}$) in sample complexity. Table~\ref{tab:comparison} includes such a representative non-robust baseline with complexity $\widetilde{O}\!\big(C_{\mathrm{cov}}H^3\log(|\mathcal F|/\delta)\varepsilon^{-2}\big)$. Relative to these baselines, our contribution is not to ``beat'' non-robust learning in the worst case, but to show that robust learning remains sample-efficient under \emph{the same type of intrinsic characterization} (robust BE dimension), with an additional and interpretable $\sigma$-dependent multiplier that reflects the uncertainty set geometry (TV/$\chi^2$/KL).

\paragraph{Interpreting the $\sigma$-dependence: matching the qualitative ``robustness helps'' effect in tabular RMDPs.}
A central insight from recent tabular RMDP analyses under TV uncertainty is that robustness can fundamentally alter the statistical difficulty of learning: the minimax sample complexity depends explicitly on the uncertainty radius, and for constant uncertainty the problem can be strictly easier than standard (non-robust) MDP learning \cite{NeuRIPS2023_CuriousPriceDRRLGenerativeMdel_Shi, Arxiv2024_DRORLwithInteractiveData_Lu, Arxiv2024_UpperLowerDRRL_Liu}. This behavior is reflected in our TV specialization through the factor $\min\{H,1/\sigma\}$ in Table~\ref{tab:comparison}, where larger uncertainty induces contraction of the robust Bellman operator and leads to improved statistical rates. Recent work further establishes near-optimal rates for online tabular RMDPs with explicit uncertainty dependence \cite{Arxiv2024_DRORLwithInteractiveData_Lu, Arxiv2024_UpperLowerDRRL_Liu}, and our tabular reduction recovers the same qualitative scaling. The distinction lies in the analytical structure rather than the rate: existing results derive tabular guarantees through tabular-specific arguments, whereas our bound arises as a specialization of a unified robust BE-dimension framework by upper bounding $\dimrobBE = \mathcal O(S^2A^2)$, while preserving a formulation that extends naturally beyond the tabular setting.

For $\chi^2$ and KL uncertainty sets, existing tabular results often involve additional distribution-dependent quantities (e.g., variance or minimum transition mass). In contrast, we present bounds in a uniform and comparable form across divergences, isolating the uncertainty contribution as $(1+\sqrt{\sigma})$ for $\chi^2$ and $\sigma$ for KL. This representation is deliberate: it highlights how different $\phi$-divergences induce different degrees of smoothing in the dual formulation, and hence lead to quantitatively distinct statistical regimes, consistent with the divergence-sensitive comparisons emphasized in recent $\phi$-robust RL analyses \cite{Arxiv2024_ModelFreeRobustRL_Panaganti}.

\paragraph{Comparison with non-robust linear MDPs.}
In the standard (non-robust) online linear MDP setting, algorithms such as UCRL-VTR and its refinements \cite{PMLR2023_NearlyMinimaxOptimalRLLinearMDP_He} achieve the minimax regret rate $\widetilde{\mathcal O}(\sqrt{d^2_{\mathrm{lin}} H^3 K})$. In contrast, when specializing our general bound to linear {\RMDPf} for $\phi$-divergence set, we obtain the regret bound \(
\widetilde{\mathcal O}\!\bigl(d_{\mathrm{lin}}^2 H^2 B^2_{\phi}(\sigma)K\bigr),
\) (see Theorem \ref{thm:linear-regret-main}). For instance, the regret bound o TV-divergence is \(
\widetilde{\mathcal O}\!\bigl(d_{\mathrm{lin}}^2 H^4 (\min\{H,1/\sigma\})^2K\bigr)\) which matches the optimal dependence on the feature dimension $d_{\mathrm{lin}}$ and the uncertainty-dependent factor $\min\{H,1/\sigma\}$ that appears in recent robust linear analyses, while incurring a higher-order dependence on the horizon $H$ due to the need to control robust Bellman errors under function approximation. Accordingly, we do not claim minimax optimality in $H$, but emphasize that the dimension and uncertainty scalings are consistent with known robust lower bounds.

\paragraph{Comparison with linear RMDPs.}
Several recent works study DR-RL under linear structure but under settings that are not directly comparable to ours. Offline DR-RL with linear function approximation is studied in \cite{Arxiv2022_OfflineDRRLLinearFunctionApprox_Ma} and 
\cite{Arxiv2024_SampleComplexityOfflineLinearDRMDP_Wang} where value-estimation rates of order $\widetilde{\mathcal O}(\sqrt{d_{\mathrm{lin}}/N})$ or $\widetilde{\mathcal O}(\sqrt{d^3_{\mathrm{lin}}/N})$ are obtained depending on coverage assumptions and where $N$ denotes the number of trajectories. These results operate in an offline regime and do not address online exploration. In the \emph{online} setting, \citep{PMLR2024_LinearFunctionDRMDP_Liu} and \citep{Arxiv2024_UpperLowerDRRL_Liu} study $d_{\mathrm{lin}}$-rectangular linear RMDPs for TV-divergence set where the agent interacts online with a nominal (source) environment but the performance criterion is the worst-case value over a perturbed (target) environment, and attains regret rate $\widetilde{\mathcal{O}}\!\Bigl(\sqrt{d^2_{\mathrm{lin}}H^2(\min\{H,1/\sigma\})^2K}\Bigr)$ together with an information-theoretic lower bound that is optimal in $(d_{\mathrm{lin}},K,\sigma)$ up to a $\sqrt{H}$ factor~\citep{Arxiv2024_UpperLowerDRRL_Liu}. When specialized to the same $d_{\mathrm{lin}}$-rectangular linear setting, our framework yields the regret bound
\(
\widetilde{\mathcal O}\!\bigl(\sqrt{d_{\mathrm{lin}}^2 H^4 (\min\{H,1/\sigma\})^2 K}\bigr),
\) for the TV-divergence set (see Table~\ref{tab:comparison}). While this introduces an additional polynomial dependence on the horizon $H$ compared to the most refined linear-specific analyses, it recovers the same dependence on the feature dimension $d_{\mathrm{lin}}$ and the uncertainty-dependent contraction factor $\min\{H,1/\sigma\}$.

\begin{rem}
\cite{Arxiv2024_ModelFreeRobustRL_Panaganti} studies a hybrid $\varphi$-regularized RMDP under TV-divergence set that combines an offline dataset with online interactions. Under approximate value and dual realizability, a bilinear model of dimension $d_{\mathrm{lin}}$, and an offline concentratability coefficient $C(\pi^\star)$, they obtain a suboptimality bound of order $\mathcal{O}\!\Bigl(\max\{C(\pi^\star),1\}\,(\lambda + H)\sqrt{d_{\mathrm{lin}}^3H^2K}\Bigr)$. By contrast, for TV-divergences set we specialize our general theorem to a $d_{\mathrm{lin}}$-rectangular linear {\RMDPTV} and show that {\AlgonameTV} achieves robust regret $\widetilde{\mathcal{O}}\!\Bigl(\sqrt{H^4(\min\{H,1/\sigma\})^2\,d^2_{\mathrm{lin}}K}\Bigr)$, which is better than the one in \cite{Arxiv2024_ModelFreeRobustRL_Panaganti}. This comparison implies our algorithm achieves a tighter sample complexity, even without any prior collected offline dataset.

We also highlight that, conceptually, the two setups address different questions and are not directly comparable. \cite{Arxiv2024_ModelFreeRobustRL_Panaganti} analyze a \emph{regularized} robust objective in a hybrid offline--online regime, where the parameter $\lambda$ controls a trade-off induced by a $\varphi$-regularizer and the guarantees depend on offline coverage through $C(\pi^\star)$. In contrast, we study a \emph{constrained} {\RMDPf} in a purely online (off-dynamics) setting, where robustness is enforced via an explicit divergence ball of radius $\sigma$ around the nominal model and performance is measured by cumulative regret with respect to the unconstrained $\phi$-divergence robust value. Our general Theorem \ref{thm:regret_bound_RGOLF} applies to arbitrary parametric function classes.
\end{rem}

\section{Numerical Experiments}
\label{sec:numerical_exp}
\paragraph{Environment.}
We consider the standard \texttt{CartPole-v1} benchmark with a discrete action space.
The state $s \in \mathbb{R}^4$ contains the cart position, cart velocity, pole angle, and pole angular velocity, and the action space is $\mathcal{A}=\{0,1\}$, corresponding to applying a fixed horizontal force to the left or right.
Episodes terminate either when the pole falls beyond the allowed angle or when the time limit is reached (maximum horizon $H=500$).
Rewards are the standard per-step rewards from the environment, and the agent aims to maximize the undiscounted return over each episode.

\paragraph{Robustness evaluation.}
We are interested in how the learned policies behave under several kinds of mismatch between training and test conditions.
Policies are always \emph{trained on the nominal environment} and are evaluated under the following perturbation families, applied only at evaluation time:

\begin{itemize}
    \item \textbf{Action perturbation.}
    At each time step, with probability $\rho \in [0,1]$ the environment ignores the agent's action and instead executes a uniformly random action in $\mathcal{A}$.
    We evaluate over a grid of perturbation levels and, for the plots in the main text, we focus on
    \[
      \Gamma_{\text{act}} = \{0.3, 0.4, 0.5, 0.6, 0.7, 0.8, 0.9, 1.0\},
    \]
    where $\rho = 0$ corresponds to the nominal case (used internally for sanity checks but not always displayed in the figures).

    \item \textbf{Force-magnitude perturbation.}
    The horizontal push force applied in the dynamics is multiplied by a scalar factor $\eta_{\text{force}}$.
    We evaluate the learned policies on a finite set of scale values $\eta_{\text{force}} \in \Gamma_{\text{force}}$ that includes values 
     \[
      \Gamma_{\text{force}} = \{0.8, 0.7, 0.6, 0.5, 0.4, 0.3, 0.2, 0.1, 0.0\},
    \]
    where smaller values correspond to progressively weaker control inputs, and $\eta_{\text{force}} = 1.0$ is the nominal strength (used for training but not repeated in this sweep).

    \item \textbf{Pole-length perturbation.}
 The physical pole length is multiplied by a scalar factor $\eta_{\text{len}}$.
    At the configuration level, we specify the effective evaluation grid as
    \[
      \Gamma_{\text{len}} = \{0.25, 0.5, 0.75, 1.0, 1.25, 1.5, 1.75, 2.0\},
    \]
    covering shorter and longer poles relative to the nominal length.

\end{itemize}

In all settings, training is performed on the nominal environment $(\rho = 0, \eta_{\text{force}} = 1, \eta_{\text{len}} = 1)$, while robustness is measured by evaluating the final policy on perturbed environments from the above families.
Unless otherwise stated, each reported return corresponds to the average over $20$ evaluation episodes and $3$ independent random seeds $\{0,1,2\}$; we also plot $95\%$ confidence intervals computed across seeds and episodes.

\paragraph{Practical {\Algoname} agent.}
For CartPole we use a purely value-based implementation of {\Algoname} with a discrete action space.
The agent maintains two Q-networks $Q_1, Q_2$ (for Double Q-learning) and their target copies $\bar Q_1, \bar Q_2$, together with a dual network $g$ and its target copy $\bar g$. All networks are multilayer perceptrons with ReLU activations:

\begin{itemize}
 \item \textbf{Q-networks.}
We maintain two Q-networks $Q_1$ and $Q_2$. Each network takes the state $s \in \mathbb{R}^4$ as input and outputs a vector in $\mathbb{R}^{|\mathcal{A}|}$, one value per discrete action ($|\mathcal{A}| = 2$ for CartPole).
The architecture is a two-layer fully connected MLP with hidden sizes $(128,128)$ and ReLU activations, followed by a linear output layer. The scalar value $Q_i(s,a)$ is obtained by indexing the corresponding component of this output vector.

\item \textbf{Dual network.}
The dual function $g(s,a)$ is parameterized by a network with the same backbone as the Q-networks: it takes $s \in \mathbb{R}^4$ as input, passes it through two fully connected ReLU layers with $(128,128)$ units, and produces a vector in $\mathbb{R}^{|\mathcal{A}|}$, one value per action. The output is passed through a sigmoid and scaled so that $g(s,a) \in [0,10]$ for all $(s,a)$, enforcing non-negativity and preventing numerical blow-up in the dual updates.
\end{itemize}

\paragraph{Training protocol and robustness hyper-parameters.}
We have considered {\bf {\AlgonameTV}} to apply for the experiment. {\AlgonameTV} is trained off-policy on \texttt{CartPole-v1} using a replay buffer and an $\varepsilon$-greedy exploration strategy. Unless otherwise specified, we fix the discount factor to $\gamma = 0.99$ and use soft target updates with rate $\tau = 0.005$ for all target networks. Transitions are stored in a replay buffer of size $2\times 10^5$, from which we sample mini-batches of size $256$ and perform one gradient update per environment step. The Q-networks and dual network are optimized with Adam at a learning rate of $3\times 10^{-4}$. Exploration uses $\varepsilon$-greedy action selection, where the exploration rate is initialized at $\varepsilon_{\mathrm{start}} = 1.0$ and decayed linearly to $\varepsilon_{\mathrm{end}} = 0.05$ over the first $200$ episodes, and then held fixed at $0.05$ for the remainder of training. Each configuration is trained for $K = 500$ episodes, and we report performance statistics over three random seeds $\{0,1,2\}$. The robust {\AlgonameTV} backup is parameterized by a TV-radius $\sigma \in [0,1]$ and a slack parameter $\beta \ge 0$ that controls how strictly the dual constraint is enforced. On CartPole, we sweep $\sigma \in \{0.0,0.2,0.3,0.4,0.5,0.6\}$ and treat the slack
parameter $\beta$ as a scalar hyperparameter controlling how strictly we enforce
the dual Bellman constraint. After normalizing rewards and values so that the
dual residual has typical scale $\mathcal{O}(1)$, we sweep
$\beta \in \{0.0,0.5,1.0\}$, spanning hard ($\beta=0$) to moderately relaxed
($\beta=1$) constraints, and report the best-performing setting. In our
experiments, the best choice is $\beta=0.0$ under action perturbations and
$\beta=0.5$ under force-magnitude and pole-length perturbations.
The numerical values of all optimization hyper-parameters and network architectures are summarized in Tables~\ref{tab:cartpole-hparams} and~\ref{tab:cartpole-nets}.
\begin{table}[t]
\centering
\caption{Training hyper-parameters for {\AlgonameTV} on \texttt{CartPole-v1}.}
\label{tab:cartpole-hparams}
\begin{tabular}{lcc}
\toprule
\textbf{Parameter} & \textbf{Symbol} & \textbf{Value} \\
\midrule
Discount factor & $\gamma$ & $0.99$ \\
Target update rate & $\tau$ & $0.005$ \\
Replay buffer size & $|\mathcal{D}|$ & $2 \times 10^5$ transitions \\
Mini-batch size & $B$ & $256$ \\
Q-network learning rate & $lr_Q$ & $3 \times 10^{-4}$ \\
Dual-network learning rate & $lr_g$ & $3 \times 10^{-4}$ \\
Exploration start & $\varepsilon_{\text{start}}$ & $1.0$ \\
Exploration end & $\varepsilon_{\text{end}}$ & $0.05$ \\
Epsilon decay horizon & $T_{\varepsilon}$ & $200$ episodes \\
Gradient updates per step & -- & $1$ \\
Training episodes & $K$ & $500$ \\
Evaluation episodes per configuration & -- & $20$ \\
Random seeds & -- & $\{0,1,2\}$ \\
TV-robustness radii & $\sigma$ & $\{0.0,0.2,0.3,0.4,0.5,0.6\}$ \\
Slack parameter & $\beta$ & $\{0.0,0.5,1.0\}$ \\
\bottomrule
\end{tabular}
\end{table}

\begin{table}[t]
\centering
\caption{Network architectures for {\AlgonameTV} on CartPole.}
\label{tab:cartpole-nets}
\begin{tabular}{lc}
\toprule
\textbf{Network} & \textbf{Hidden layers} \\
\midrule
Q-network $Q_1,Q_2$  &
$(128, 128)$ (ReLU)\\
Dual network $g$ (default)  &
$(128, 128)$ (ReLU) \\
Dual network $g$ (capacity sweep) &
$(64,64)$/ $(128,128)$/ $(256,256)$ (ReLU) \\
\bottomrule
\end{tabular}
\end{table}

\paragraph{Practical {\AlgonameTV} update (CartPole, discrete).}
For completeness, Algorithm~\ref{alg:cartpole-rfltv} summarizes the training loop for the discrete practical {\AlgonameTV} agent used in the CartPole experiments.
The pseudocode follows our implementation: we use Double Q-learning with a dual network that approximates the robust inner optimization under total variation, and we incorporate the slack parameter $\beta$ by clipping the dual residual inside a quadratic penalty.

\subsection{{\AlgonameTV} vs. Functional Approximation Benchmarks: Gains Under Shift}

Figure~\ref{fig:RFL_TV_vs_FA} compares {\AlgonameTV} to three
function-approximation baselines: DQN, the value-function method GOLF~\cite{xie2022role}, and a dual-augmented variant GOLF-DUAL, which shares the same dual architecture as {\AlgonameTV} but is run
with $\sigma = 0$. All three baselines are trained without explicit distributional robustness and thus correspond to the non-robust ($\sigma=0$) setting. For {\AlgonameTV}, we fix the uncertainty radius
to the value that achieves the best nominal CartPole performance on our $\sigma$ grid, selecting $\sigma = 0.6$ for action perturbations, $\sigma = 0.5$ for force-magnitude perturbations, and $\sigma = 0.5$
for pole-length perturbations.

Across all three perturbation families, {\AlgonameTV} (with its best-performing $\sigma>0$) consistently dominates the non-robust functional approximation baselines.
Under {\it action perturbations}, for moderate noise levels $\rho \in [0.2,0.5]$, {\AlgonameTV} achieves roughly $30$–$60\%$ higher average return than DQN and about $15$–$30\%$ higher than the best non-robust value-based baseline, with performance at $\rho \approx 0.3$ nearly twice that of DQN. 
For {\it force-magnitude shifts} of $40$–$80\%$ from nominal, {\AlgonameTV} maintains average returns of roughly $150$–$400$, while DQN stays below about $260$ and GOLF/GOLF-DUAL lie mostly in the $60$–$380$ range, corresponding to roughly $\approx 1.5$–$3\times$ higher returns than DQN at severe shifts ($\ge 60\%$) and typically a $5$–$15\%$ gain over the GOLF baselines around the $40$–$50\%$ shift region.
For {\it pole-length changes} between $25\%$ and $200\%$ of nominal, {\AlgonameTV} stays near $500$ reward throughout, while the best non-robust baseline ranges between $\approx 330$ and $480$, yielding about 5–50\% higher return depending on the shift. Overall, for a fixed function class, turning on robustness in the Bellman update (via $\sigma>0$ and the dual term) yields substantially better robustness to both action noise and dynamics misspecification than any of the non-robust functional approximation baselines. These trends also highlight that robustness is inherently $\sigma$-dependent: for a fixed training robustness level, performance eventually degrades as the test-time perturbation grows, so maintaining high returns under stronger shifts typically requires training with a larger $\sigma$ and, in practice, possibly a more expressive function class.

\begin{figure*}[t]
\centering
\begin{subfigure}[b]{0.33\textwidth}
  \includegraphics[scale=0.352]{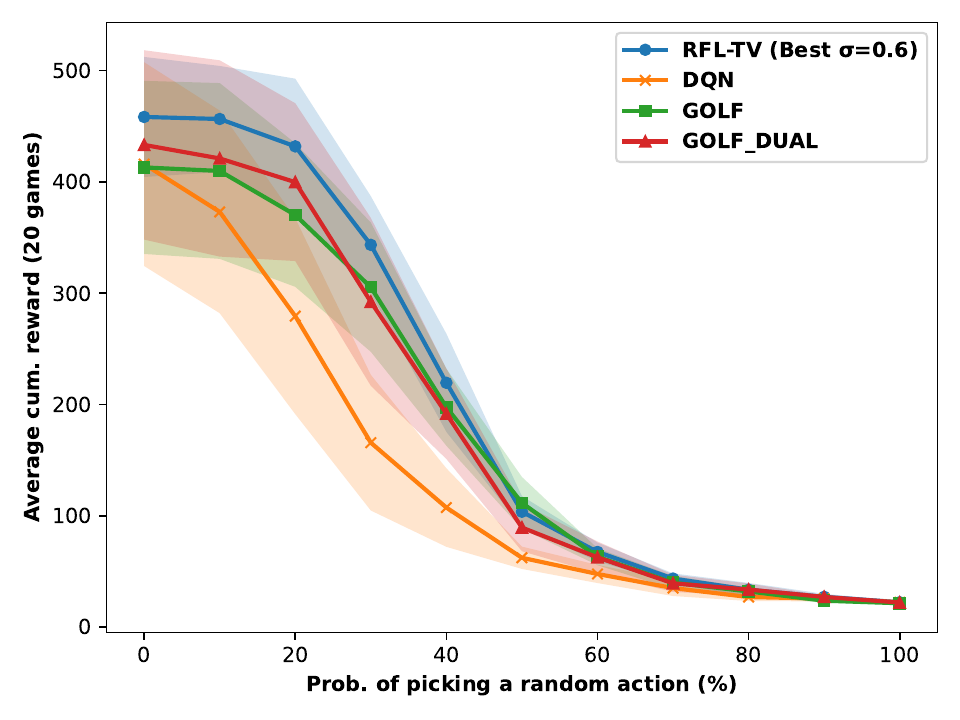}
  \caption{Action Perturbation}
  \label{fig:RFL_TV_FA_action}
\end{subfigure}
\hfill
\begin{subfigure}[b]{0.33\textwidth}
  \includegraphics[scale=0.352]{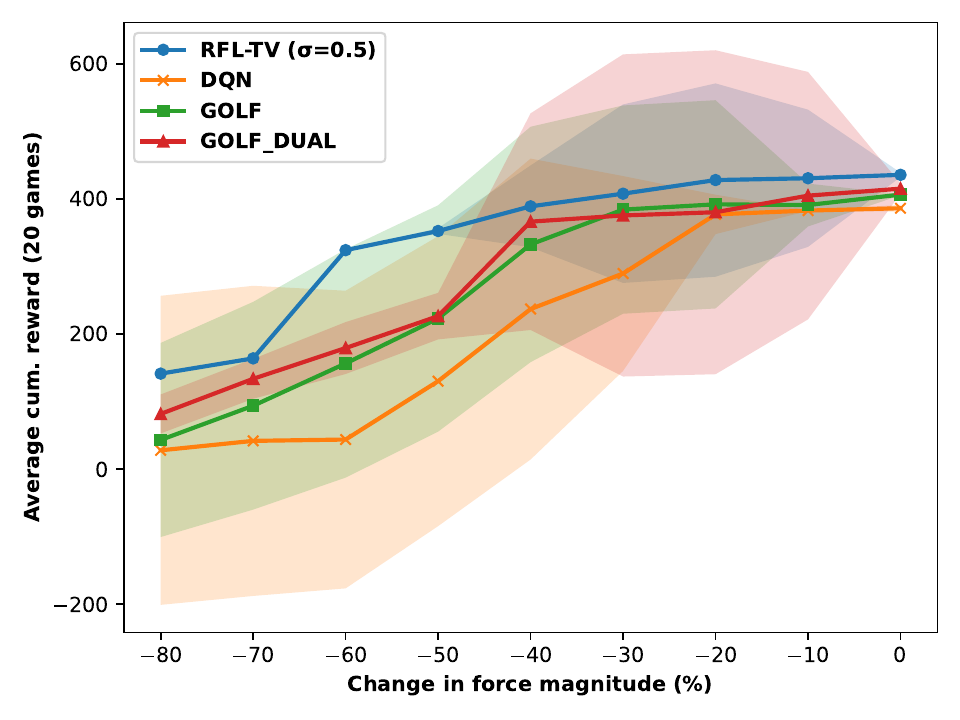}
  \caption{Force-magnitude Perturbation}
  \label{fig:RFL_TV_FA_force}
\end{subfigure}
\hfill
\begin{subfigure}[b]{0.33\textwidth}
  \includegraphics[scale=0.352]{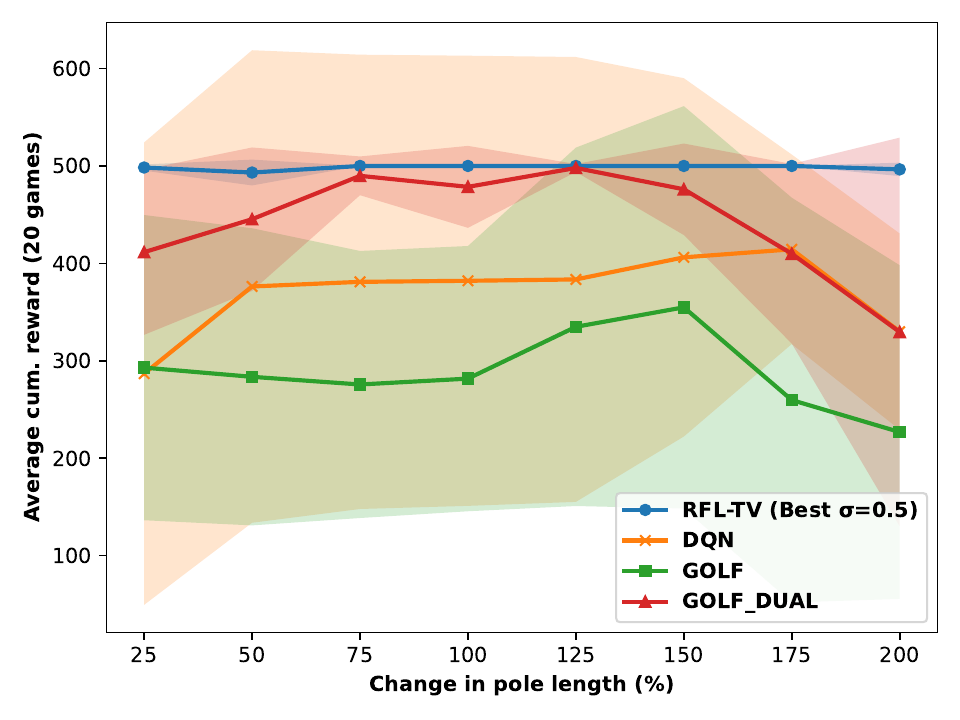}
  \caption{Pole-length Perturbation}
  \label{fig:RFL_TV_FA_length}
\end{subfigure}
\caption{{\AlgonameTV} vs. Functional Approximation Algorithms}
\label{fig:RFL_TV_vs_FA}
\end{figure*}

\subsection{{\AlgonameTV} vs. Online Tabular {\RMDPTV}}

Figure~\ref{fig:RFL_TV_vs_tabular} evaluates how closely our practical {\AlgonameTV} implementation matches an ideal TV-robust planner by comparing it to OPROVI-TV~\cite{Arxiv2024_DRORLwithInteractiveData_Lu}, a tabular algorithm that exactly solves the TV-robust Bellman equations for a given radius~$\sigma$. Although OPROVI-TV is restricted to small state spaces such as CartPole, it serves as a strong oracle-style baseline for TV-robust planning. In contrast, our practical {\AlgonameTV} implementation operates with neural function classes and sample-based updates, so its per-iteration computational cost depends on the network sizes, batch size, and action-space cardinality~$A$, but \emph{not} on the number of states~$S$, making it applicable to large-scale problems where typically $S \gg A$. Across action perturbations and dynamics perturbations (force magnitude and pole length), {\AlgonameTV} with $\sigma \in \{0.4,0.6\}$ consistently matches, and often exceeds the returns of OPROVI-TV at the same~$\sigma$.

For action perturbations (random-action probability $\rho \in [0.3,0.7]$), {\AlgonameTV} with $\sigma = 0.6$ achieves between roughly $100\%\text{ and }400\%$ higher average return than OPROVI-TV, while $\sigma = 0.4$ yields gains on the order of $30\%\text{--}200\%$ depending on the noise level; the two methods converge to similar near-random performance only as $\rho$ approaches~$1$. Under force-magnitude perturbations, {\AlgonameTV} with $\sigma = 0.6$ improves over OPROVI-TV by about $100\%\text{--}300\%$ at large changes ($40\%\text{--}80\%$ deviation from nominal), and $\sigma = 0.4$ still offers roughly $30\%\text{--}150\%$ gains. For pole-length perturbations, {\AlgonameTV} with $\sigma = 0.6$ maintains returns that are typically $150\%\text{--}300\%$ higher than the tabular baseline over most of the tested range, whereas $\sigma = 0.4$ yields about $30\%\text{--}150\%$ improvements. Overall, these trends indicate that a simple two-layer ReLU MLP (with $128\text{--}256$ hidden units for both Q and dual networks) can closely track—and often outperform—the robust value structure computed by an exact tabular TV-RMDP solver, while enjoying computational complexity that scales with network size and $A$ rather than $S$, which is particularly advantageous in regimes where $S \gg A$.

\begin{figure*}[!htb]
\centering
\begin{subfigure}[b]{0.33\textwidth}
  \includegraphics[scale=0.352]{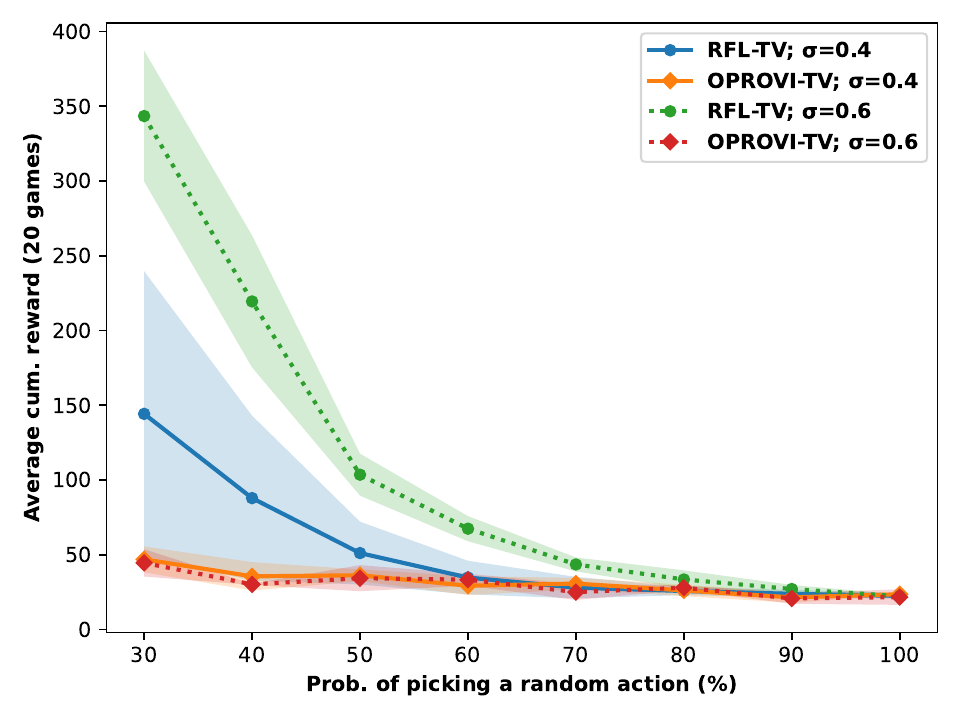}
  \caption{Action Perturbation}
  \label{fig:RFL_TV_tabular_action}
\end{subfigure}
\hfill
\begin{subfigure}[b]{0.33\textwidth}
  \includegraphics[scale=0.352]{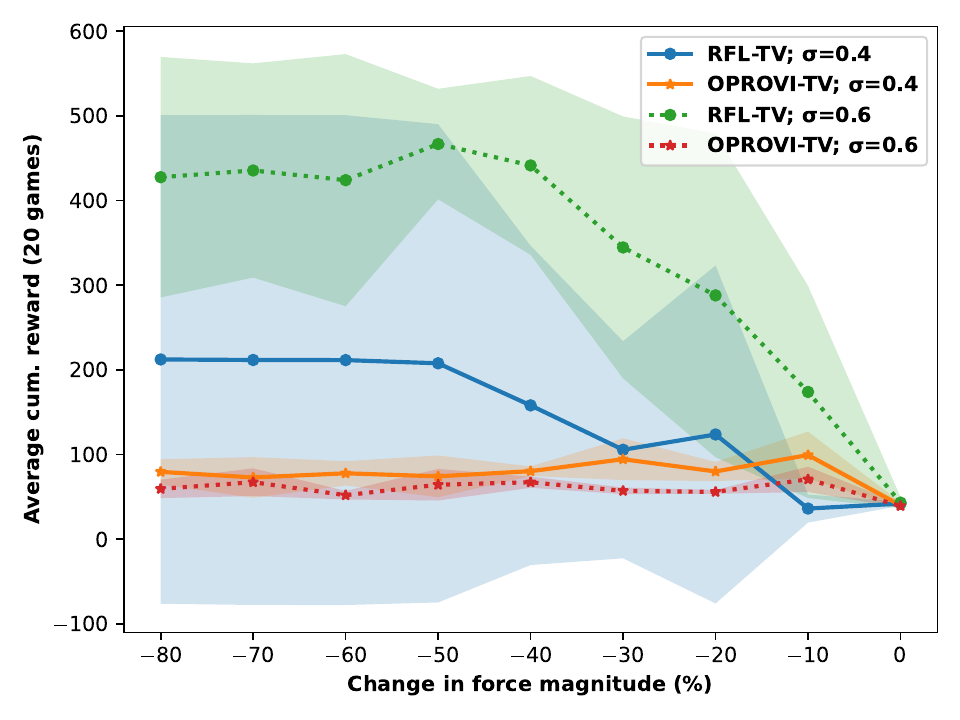}
  \caption{Force-magnitude Perturbation}
  \label{fig:RFL_TV_tabular_force}
\end{subfigure}
\hfill
\begin{subfigure}[b]{0.33\textwidth}
  \includegraphics[scale=0.352]{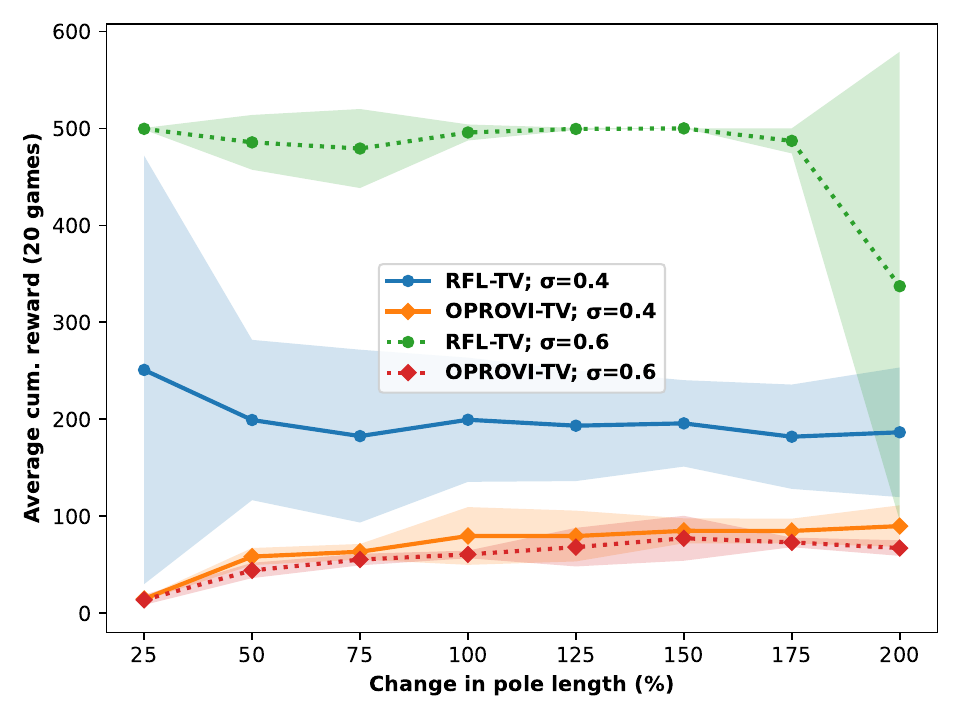}
  \caption{Pole-length Perturbation}
  \label{fig:RFL_TV_tabular_length}
\end{subfigure}
\caption{{\AlgonameTV} vs. OPROVI-TV (Tabular).}
\label{fig:RFL_TV_vs_tabular}
\end{figure*}

\subsection{Balancing Robustness Radius and Dual-Network Capacity}

Figure~\ref{fig:RFL_TV_vs_xi} examines how the TV robustness radius
$\sigma$ and the dual-network width $\xi_{\text{dual}}$ jointly shape the
performance of {\AlgonameTV}. For each perturbation family (action noise,
force–magnitude scaling, and pole–length scaling), we vary
$\xi_{\text{dual}}$ over two-layer MLPs with hidden sizes
$(64,64)$, $(128,128)$, and $(256,256)$ and evaluate {\AlgonameTV} for
$\sigma \in \{0.2,0.4,0.6\}$ at a representative perturbation level. Note that enlarging the dual hidden size can only decrease the approximation gap $\xi_{\mathrm{dual}}$ to the ideal dual optimizer; in other words, we can view the dual width as a structural knob that monotonically reduces the realizability constant $\xi_{\mathrm{dual}}$.
Across all three families, increasing the dual capacity markedly improves
robustness: moving from $(64,64)$ to $(256,256)$ yields roughly
$40\%\text{--}120\%$ higher average return under action perturbations,
about $50\%\text{--}180\%$ gains for force–magnitude shifts, and roughly
$100\%\text{--}250\%$ gains for pole–length perturbations. At any fixed
$\xi_{\text{dual}}$, larger robustness radii clearly help: compared to
$\sigma = 0.2$, using $\sigma = 0.6$ improves returns by about
$60\%\text{--}160\%$ under action noise, $30\%\text{--}80\%$ under
force–magnitude changes, and $50\%\text{--}150\%$ under pole–length
changes, with $\sigma = 0.4$ typically lying in between. This behaviour is natural: when $\sigma$ is too small, the uncertainty set remains close to the nominal dynamics and the dual term contributes less, so the policy tends to overfit to the unperturbed environment and degrades
sharply under shift. Larger radii ($\sigma \approx 0.4$–$0.6$), together with a sufficiently expressive dual network, force the optimizer to hedge against adversarial transitions, leading to policies that are more
conservative around failure modes yet still high-reward under the moderately perturbed environments we evaluate on. In practice, these results suggest a simple tuning recipe: increase $\xi_{\text{dual}}$
until the robust return curve flattens, and select $\sigma$ in a moderate range where performance gains saturate (here around $0.4$–$0.6$), thereby jointly controlling approximation quality and the
strength of robustness.

\begin{figure*}[t]
\centering
\begin{subfigure}[b]{0.33\textwidth}
  \includegraphics[scale=0.352]{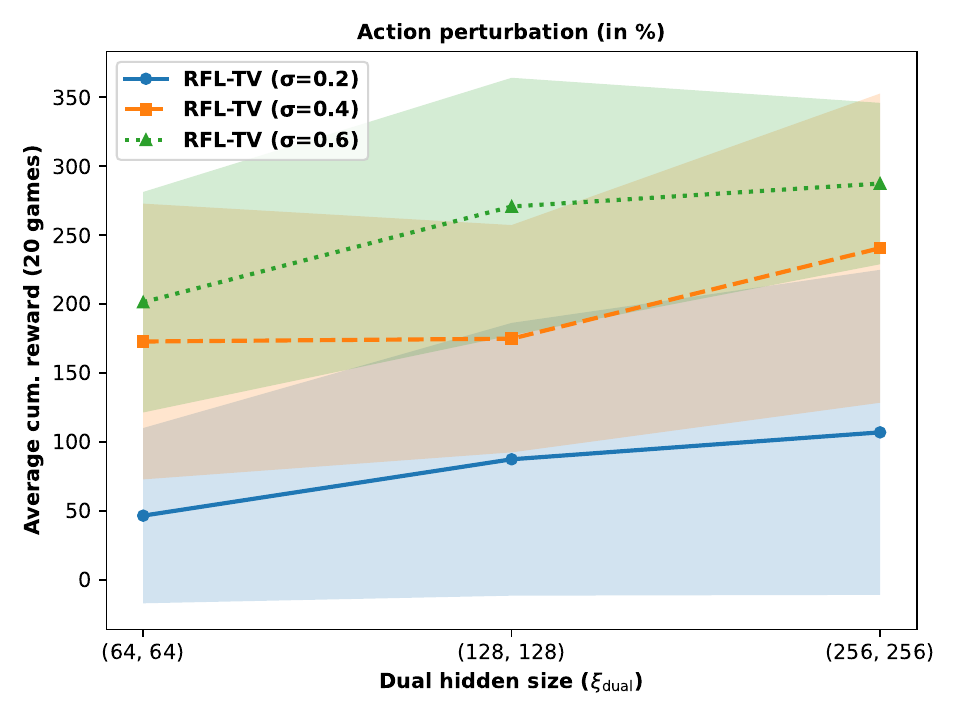}
  \caption{Action Perturbation}
  \label{fig:RFL_TV_xi_action}
\end{subfigure}
\hfill
\begin{subfigure}[b]{0.33\textwidth}
  \includegraphics[scale=0.352]{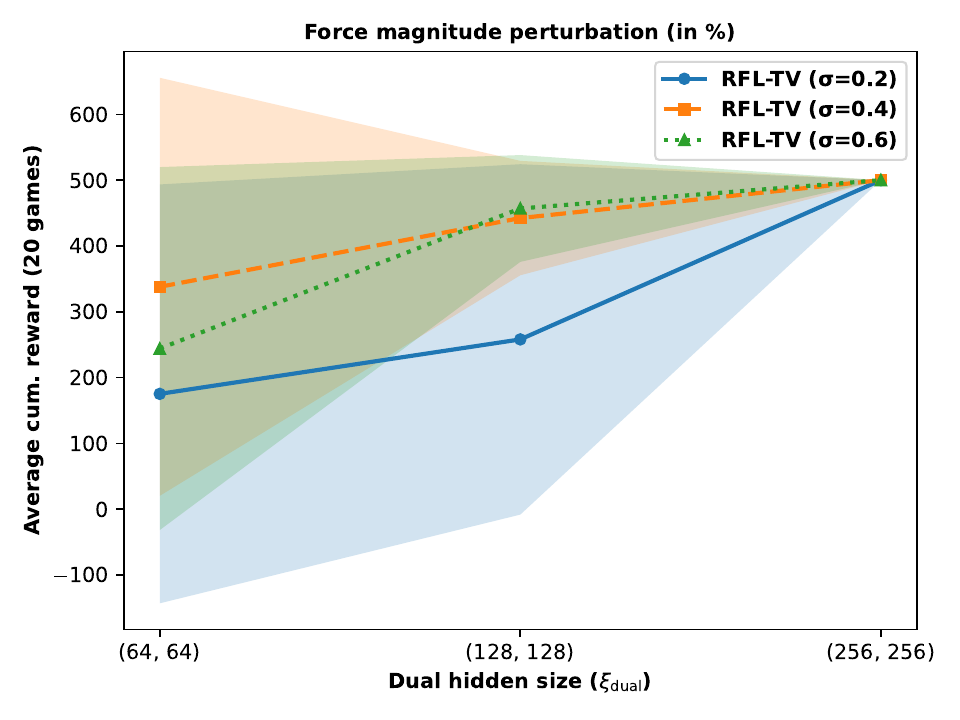}
  \caption{Force-magnitude Perturbation}
  \label{fig:RFL_TV_xi_force}
\end{subfigure}
\hfill
\begin{subfigure}[b]{0.33\textwidth}
  \includegraphics[scale=0.352]{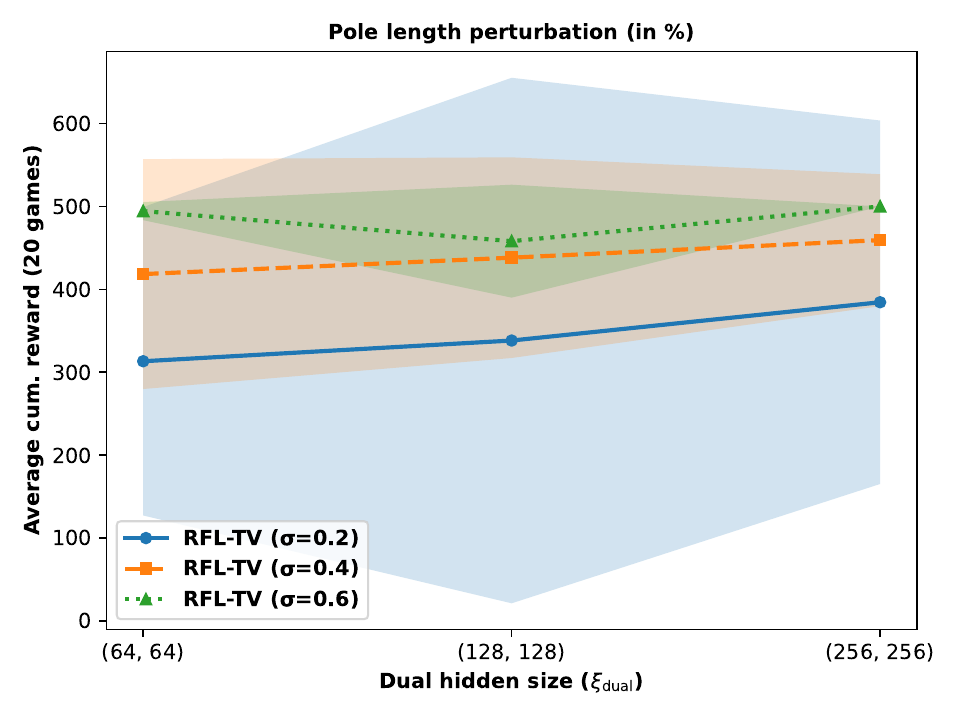}
  \caption{Pole-length Perturbation}
  \label{fig:RFL_TV_xi_length}
\end{subfigure}
\caption{{\AlgonameTV}: uncertainty level $\sigma$ vs. Uniform dual-approximation error $\xi_{\mathrm{dual}}$.}
\label{fig:RFL_TV_vs_xi}
\end{figure*}

\begin{algorithm}[H]
\caption{Practical {\AlgonameTV} for CartPole}
\label{alg:cartpole-rfltv}
\begin{algorithmic}[1]
\State \textbf{Inputs:} TV radius $\sigma$, slack $\beta$, discount $\gamma$, target rate $\tau$, batch size $B$, episodes $K$, horizon $H$, exploration schedule $(\varepsilon_{\text{start}}, \varepsilon_{\text{end}}, K_{\text{decay}})$.
\State Initialize replay buffer $\mathcal{D} \leftarrow \emptyset$.
\State Initialize Q-networks $Q_1,Q_2$ and dual network $g$; set target networks $\bar Q_i \gets Q_i$ for $i=1,2$ (and optionally $\bar g \gets g$).
\For{$k = 1,\dots,K$}
  \State Set $\varepsilon_k$ by linearly decaying from $\varepsilon_{\text{start}}$ to $\varepsilon_{\text{end}}$ over $K_{\text{decay}}$ episodes, then clamping.
  \State Reset environment and observe $s_0$.
  \For{$t = 0,\dots,H-1$}
    \State With prob.\ $\varepsilon_k$ sample $a_t$ uniformly; otherwise
    \(
      a_t = \arg\max_{a} \min\{ Q_1(s_t,a),\, Q_2(s_t,a) \}.
    \)
    \State Execute $a_t$, observe $(r_t,s_{t+1},d_t)$, and store $(s_t,a_t,r_t,s_{t+1},d_t)$ in $\mathcal{D}$.
    \If{$|\mathcal{D}| \ge B$}
      \State Sample minibatch $\{(s,a,r,s',d)\}_{j=1}^B$ from $\mathcal{D}$.\\
      \textbf{*** Target value (Double Q) ***}
      \State Compute $\bar Q_i(s',\cdot)$, $i=1,2$, and update
      \(
        v_{\text{next}}(s') = \max_{a'} \min\{ \bar Q_1(s',a'),\, \bar Q_2(s',a') \}.
      \)\\
      \textbf{*** Dual update with slack $\beta$ ***}
      \State Evaluate $g(s,a)$ and define
      \(
        \mathrm{dual\_term}(s,a)
        = \big(g(s,a) - v_{\text{next}}(s')\big)_+ - (1-\sigma)\,g(s,a).
      \)
      \State Compute residual
      \(
        r_{\text{dual}}(s,a) = \big|\mathrm{dual\_term}(s,a)\big| - \beta,\quad
        \tilde r_{\text{dual}}(s,a) = \max\{r_{\text{dual}}(s,a),0\},
      \)
      and minimize
      \(
        L_g = \mathbb{E}\big[\tilde r_{\text{dual}}(s,a)^2\big]
      \)
      w.r.t.\ the parameters of $g$ (one gradient step).\\
      \textbf{*** Q-update using updated $g$ ***}
      \State Recompute
      \(\mathrm{dual\_term}^{\text{new}}(s,a)
        = \big(g(s,a) - v_{\text{next}}(s')\big)_+ - (1-\sigma)\,g(s,a),
      \)
      and form targets
      \(
        y = r + (1-d)\,\gamma\big(v_{\text{next}}(s') + \mathrm{dual\_term}^{\text{new}}(s,a)\big).
      \)
      \State Compute $Q_i(s,a)$, $i=1,2$, and minimize
      \(
        L_Q = \mathbb{E}\big[(Q_1(s,a)-y)^2 + (Q_2(s,a)-y)^2\big]
      \)
      w.r.t.\ the parameters of $Q_1,Q_2$ (one gradient step).
      \State Soft-update targets: $\bar Q_i \gets (1-\tau)\,\bar Q_i + \tau\,Q_i,\quad i=1,2$,\\
      and optionally: $\bar g \gets (1-\tau)\,\bar g + \tau\,g$.
    \EndIf
    \If{$d_t = 1$}
      \State \textbf{break}
    \EndIf
  \EndFor
\EndFor
\State \textbf{Return} greedy policy: $\pi(s) = \arg\max_{a} \min\{ Q_1(s,a),\, Q_2(s,a) \}$.
\end{algorithmic}
\end{algorithm}

\section{Conclusion}
\label{sec:Conclusion}

In this work, we introduced {\Algoname}, a DR-RL algorithm with general function approximation for online settings. The algorithm implements a fitted robust Bellman update via a functional optimization and replaces state-action bonuses with a global uncertainty quantifier that more effectively guides exploration. Our theoretical analysis is grounded in the robust BE dimension, which we proposed to capture the intrinsic statistical complexity of learning robust value functions under function approximation. Our result yields strong sample-efficiency guarantees for large-scale problems, achieving sub-linear regret, and is independent of state/action spaces. When reduced to both tabular and linear RMDP cases, our results are both near-optimal against existing works and minimax lower bounds, which implies the tightness and efficiency of our algorithms. Our algorithm thus stands for the first purely online and sample-efficient algorithm for large-scale DR-RL, providing a theoretical foundation and a scalable algorithm for robust learning in high-dimensional environments.

\section{Appendix}
\label{sec:Appendix}
\subsection{Robust Bellman Rank and Relations with known tractable classes of Robust RL problems}
\label{sec:Robust_Bellman_Rank}
We define a robust analogue of Bellman rank (Q-type) to connect to known "low-rank" regimes. 

\begin{defi}[Robust Bellman rank (Q-type)]\label{def:rob-brank}
We say $\mathcal F$ has robust Bellman rank $d$ with normalization parameter $\gamma$ if for each $h$ there exist mappings $\varphi_h:\mathcal F\to\bbR^d$ and $\psi_h:\mathcal F\to\bbR^d$ such that for all
$f,f'\in\mathcal F$,
\begin{equation}
\label{eq:rob-brank}
\varepsilon_h^{\phi,\sigma}(f,f')
:=\bbE_{\pi_{f'}}\!\left[(f_h - \mathcal T_{h}^{\phi,\sigma}f_{h+1})(s_h,a_h)\right]
= \inner{\varphi_h(f)}{\psi_h(f')},
\end{equation}
and $\norm{\varphi_h(f)}_2\norm{\psi_h(f')}_2\le \gamma$.
The definition is similar to \cite{NeurIPS2021_BellmanEluderDim_Jin}[Definition~10] where we replace $\mathcal T_{h}$ by $\mathcal T_{h}^{\phi, \sigma}$.
\end{defi}

\begin{prop}[Low Robust Bellman Rank $\subset$ Low Robust BE Dimension]
\label{prop:rob-P11}
If an {\RMDPf} with function class $\mathcal F$ has robust Bellman rank $d$ with normalization parameter $\gamma$, then for all $\varepsilon>0$,
\begin{equation}
\label{eq:rob-P11}
\dimBE^{\mathrm{rob}}(\mathcal F,D_{\mathcal F},\varepsilon)
\;\le\; O\!\left(1+d\log\!\big(1+\gamma/\varepsilon\big)\right).
\end{equation}
\end{prop}

\noindent\textbf{Justification.} \Cref{prop:rob-P11} claims that the problems with low robust Bellman rank have low robust BE dimension to a multiplicative logarithmic factor in $\gamma$ and $\varepsilon^{-1}$. This proposition is the robust analogue of \cite{NeurIPS2021_BellmanEluderDim_Jin}[Proposition 11].

\noindent\textbf{Proof.}  The proof of \cite{NeurIPS2021_BellmanEluderDim_Jin}[Proposition~11]
uses only: (i) the bilinear factorization of the average Bellman error and (ii) norm bounds, and never uses any special structure of the nominal operator beyond appearing inside the residual.
Thus, the same determinant-growth proof applies verbatim after substituting
$\mathcal T_{h} \mapsto \mathcal T_{h}^{\phi,\sigma}$ and using Definition \ref{def:rob-brank}.
See \cite{NeurIPS2021_BellmanEluderDim_Jin}[Supp., App.\ D.1] for the full proof.

\begin{prop}[Low Eluder Dimension $\subset$ Low Robust BE Dimension]
\label{prop:rob-P12}
Under \Cref{ass:completeness} for all $h$, for all $\varepsilon>0$,
\begin{equation}
\label{eq:rob-P12}
\dimrobBE(\mathcal F,D_\Delta,\varepsilon)
\;\le\; \max_{h\in[H]} \dimE(\mathcal F_h,\varepsilon),
\end{equation}
where $\dimE(\mathcal G, \varepsilon)$ is the Eluder dimension which is the length of the longest sequence $\{x_1,\cdots,x_n\}\subset \mathcal X$ such that there exists $\varepsilon'>\varepsilon$
where $x_i$ is $\varepsilon'$-independent of $\{x_1,\cdots,x_{i-1}\}$ with respect to $\mathcal G$ for all $i \in [n]$.
\end{prop}

\noindent\textbf{Justification.} \Cref{prop:rob-P12} shows that any problem class with low Eluder dimension necessarily also has low robust BE dimension. This follows naturally from the completeness property and from the fact that the Eluder dimension is a special case of the DE dimension. This proposition is the robust analogue of \cite{NeurIPS2021_BellmanEluderDim_Jin}[Proposition 12].  

\noindent\textbf{Proof.}  The proof of \cite{NeurIPS2021_BellmanEluderDim_Jin}[Proposition~12]
reduces distributional independence for the residual class to Eluder independence within $\mathcal F_h$
by defining $g^i_{h}=\mathcal T_h f^i_{h+1}$ and invoking completeness to ensure $g^i_{h}\in\mathcal F_h$.
In the robust case, we define the residual class by eq. \ref{eq:robust_residual_class}, and  define $g^i_{h}=\mathcal T_{h}^{\phi,\sigma} f^i_{h+1}$ and invoke robust completeness (\Cref{ass:completeness}).
No other step changes. See \cite{NeurIPS2021_BellmanEluderDim_Jin}[Supp., App.\ D.2].

\begin{prop}[Low Robust BE Dimension $\nsubseteq$ Low Eluder Dimension $\cup$ Low Robust BEllman Rank]
\label{prop:rob-P13}
For any $m\in\bbN^+$, there exists an {\RMDPf} with function class $\mathcal F$ with $\phi$-divergence uncertainty set such that
for all $\varepsilon\in(0,1]$,
\[
\dimrobBE(\mathcal F,D_{\mathcal F},\varepsilon)=\dimBE^{\mathrm{rob}}(\mathcal F,D_{\Delta},\varepsilon)\le 5,
 \text{ but } \min\Big\{\min_h \dimE(\mathcal F_h,\varepsilon),\ \text{Robust Bellman Rank}\Big\}\ge m.
\]
\end{prop}

\noindent\textbf{Justification.} \Cref{prop:rob-P13} states that the class of problems with low robust Bellman–Eluder (BE) dimension is strictly broader than the union of (i) problems with low Eluder dimension and (ii) problems with low robust Bellman rank. In the non-robust setting, low BE dimension already captures additional models—such as kernel reactive POMDPs—that fall outside both the Bellman rank and Eluder dimension frameworks \cite{NeurIPS2021_BellmanEluderDim_Jin}[Appendix C]. The same inclusion continues to hold in the robust setting. This proposition is the robust analogue of \cite{NeurIPS2021_BellmanEluderDim_Jin}[Proposition 13].

\noindent\textbf{Proof.}  The separation construction in \cite{NeurIPS2021_BellmanEluderDim_Jin}[Proposition~13]
is a horizon-$1$ linear bandit instance. Since, $H=1$ removes any dependence on transition dynamics in the Bellman operator, adding a $\phi$-divergence ambiguity set does not affect the residual class or the BE-dimension calculation.
Thus, the same construction applies unchanged. See \cite{NeurIPS2021_BellmanEluderDim_Jin}[Supp., App.\ D.3].

\subsection{Proof of the main results} 
\label{sec:main_results}

Recall the robust Bellman operator as in eq. \ref{eq:Robust_Bellman_Operator_TV} as follows:
\begin{align}
\label{eq:revised_dual_Q_TV}
[\mathcal{T}^{\phi, \sigma}f](s,a) = r(s,a)-\inf_{\eta \ge 0,\ \nu \in \mathbb{R}}\mathbb{E}_{s'\sim P^\star_h(s,a)}\left[\eta\sigma-\nu + \eta\,\phi^\star\!\left(-\frac{\max_{a'}f(s',a')+\nu}{\eta}\right)\right]
\end{align}

And we define the empirical duality loss as:
\begin{align}
\label{eq:emp_loss_dual}
\widehat{\mathrm{DualLoss}}(g; f) = \sum_{(s,a,s')\sim\mathcal{D}} \bigg(g_{\eta}(s,a)\sigma-g_{\nu}(s,a) + g_{\eta}(s,a)\,\phi^\star\!\left(-\frac{\max_{a'}f(s',a')+g_{\nu}(s,a)}{g_{\eta}(s,a)}\right)\bigg),
\end{align} 

\paragraph{Special cases: TV, $\chi^2$, and KL}
For concreteness, we recall the resulting one-dimensional variational forms for three choices frequently used in robust RL; detailed derivations can be found in
\cite{AnnalsStat2022_TheoreticalUnderstandingRMDP_Yang, ICML2025_OnlineDRMDPSampleComplexity_He}. Under the $\mathcal{S} \times \mathcal{A}$-rectangularity assumption and eq. \ref{eq:revised_dual_Q_TV}, the robust expectation for any $V:\mathcal{S}\to[0,H]$ and $P^{\star}_h$ admits the following equivalent forms:

\begin{itemize}
\item \textbf{TV-divergence} ($\phi(t)=|t-1|$).
Under \Cref{ass:vanisin_minimal}, in this case, eq. \ref{eq:revised_dual_Q_TV} simplifies to 
\begin{align}
\label{eq:dual_TV}
\mathbb{E}_{\mathcal{U}^{TV,\sigma}_h(s,a)}[f]
=
-\inf_{\nu \in [0,2H/\sigma]}
\left\{\mathbb E_{s'\sim \mathbb{P}^\star_h(\cdot|s,a)}\big[\nu-\max_{a'}f(s',a')\big]_{+}
+ (1-\sigma)\nu\right\}.
\end{align}

\item \textbf{$\chi^2$-divergence} ($\phi(t)=(t-1)^2$).
One obtains a variance-sensitive form:
\begin{align}
\label{eq:dual_chi}
\mathbb{E}_{\mathcal{U}^{\chi^2,\sigma}_h(s,a)}[f]
=
-\inf_{\nu \in [0,H]}
\left\{\sqrt{\sigma\,\mathrm{Var}_{P_h^\star(\cdot|s,a)}\big(\nu-\max_{a'}f(s',a')\big)_{+}}
+ \mathbb E_{s'\sim \mathbb{P}^\star_h(\cdot|s,a)}\big[\max_{a'}f(s',a')-\nu\big]_{+}
\right\}.
\end{align}

\item \textbf{KL-divergence} ($\phi(t)=t\log t$).
The robust expectation can be written as
\begin{align}
\label{eq:dual_KL}
\mathbb{E}_{\mathcal{U}^{\sigma}_h(s,a)}[V]
=
-\inf_{\nu \in [\underline{\nu},\,H]}
\left\{\nu \log\!\Big( \mathbb E_{s'\sim \mathbb{P}^\star_h(\cdot|s,a)}\big[\exp\{-\max_{a'}f(s',a')/\nu\}\big]\Big)+\nu\sigma\right\},
\end{align}
where $\underline{\nu}>0$ is a regularity bound on the optimal dual variable, as commonly assumed in
\cite{NeuRIPS2023_DoublePessimismDROfflineRL_Blanchet, ICML2025_OnlineDRMDPSampleComplexity_He}.
\end{itemize}

\subsubsection{Proof of Lemma \ref{lem:equiv_loss_dual}}
\label{subsec:proof_Lem1}
\begin{proof}
Fix $f$. For notational convenience, define $\omega := (s,a)$ and $\Omega := \mathcal S\times\mathcal A$.
Define the pointwise integrand
\[
F(\omega;\eta,\nu)
:=
\mathbb E_{s'\sim P^\star(\cdot|\omega)}
\!\left[
\eta\sigma-\nu
+\eta\,\phi^\star\!\left(-\frac{\psi_f(s')+\nu}{\eta}\right)
\right],
\qquad \eta>0, \nu\in\mathbb R,
\]
and set $F(\omega;\eta,\nu)=+\infty$ whenever $\eta\le 0$.
Then we can rewrite
\begin{align}
\mathrm{DualLoss}(g;f)
&=\mathbb E_{\omega\sim\mu}\big[F(\omega; g_\eta(\omega),g_\nu(\omega))\big].
\label{eq:DL_integral_form}
\end{align}

\paragraph{Step 1: Applicability of Rockafellar--Wets Theorem.}
We apply Theorem~\ref{lem:rockafellar_Thm14.60} with $\mathcal X=\mathcal L^1(\mu;\mathbb R^2)$ and integrand
$f_{\mathrm{RW}}(\omega,x)=F(\omega;x)$ for $x=(\eta,\nu)\in\mathbb R^2$.

\emph{(i) Decomposability.}
By Remark~\ref{rem:examples_decomposable_sp}, $\mathcal L^1(\mu)$ is decomposable. Since
$\mathcal L^1(\mu;\mathbb R^2)=\mathcal L^1(\mu)\times \mathcal L^1(\mu)$, it is decomposable relative to $(\Omega,\Sigma(\Omega),\mu)$ in the sense of Definition~\ref{def:decomposable_sp}.

\emph{(ii) Normal integrand.}
For each fixed $(\eta,\nu)$ with $\eta>0$, the map
\[
(s',\omega)\mapsto \eta\sigma-\nu+\eta\,\phi^\star\!\left(-\frac{\psi_f(s')+\nu}{\eta}\right)
\]
is measurable since $\psi_f$ is measurable and $\phi^\star$ is lower semicontinuous.
Thus, $\omega\mapsto F(\omega;\eta,\nu)$ is measurable.

For each fixed $\omega$, continuity of $(\eta,\nu)\mapsto F(\omega;\eta,\nu)$ on $\eta>0$ follows from continuity of the perspective transform of $\phi^\star$ and dominated convergence, since $\psi_f\in[0,H]$ and the integrand is finite under standard boundedness assumptions on multipliers. Hence, $F$ is a normal integrand in the sense of Definition~\ref{def:decomposable_sp}. Therefore, Theorem~\ref{lem:rockafellar_Thm14.60} applies and yields
\begin{align}
\inf_{g\in\mathcal L^1(\mu;\mathbb R^2)} \mathrm{DualLoss}(g;f) =
\inf_{g}\mathbb E_{\omega\sim\mu}[F(\omega;g(\omega))] =
\mathbb E_{\omega\sim\mu}\Big[\inf_{(\eta,\nu)\in\mathbb R^2}F(\omega;\eta,\nu)\Big].
\label{eq:RW_interchange}
\end{align}

\paragraph{Step 2: Identification of the pointwise infimum.}
Fix $\omega=(s,a)$. By classical $\phi$-divergence duality (e.g., \cite{shapiro2017distributionally,duchi2021learning}), for any bounded measurable $V$,
\[
\inf_{P:\,D_\phi(P\|P^\star)\le \sigma}\mathbb E_P[V]
=
\inf_{\eta>0,\nu\in\mathbb R}
\left\{
\eta\sigma-\nu
+\eta\,\mathbb E_{P^\star}\!\left[\phi^\star\!\left(-\frac{V+\nu}{\eta}\right)\right]
\right\}.
\]
Applying this identity with $V=\psi_f$ and $P^\star=P^\star(\cdot|s,a)$ gives
\[
\inf_{(\eta,\nu)\in\mathbb R^2}F((s,a);\eta,\nu)
=
\inf_{P\in\mathcal U_h^{\phi,\sigma}(s,a)}\mathbb E_{s'\sim P(\cdot|s,a)}[\psi_f(s')].
\]

\paragraph{Step 3: Conclusion.}
Substituting the above into \eqref{eq:RW_interchange}, we conclude
\[
\inf_{g\in \mathcal{L}^1(\mu;\mathbb R^2)} \mathrm{DualLoss}(g; f)
=
\mathbb{E}_{(s,a)\sim \mu}
\left[
\inf_{P\in\mathcal{U}^{\phi,\sigma}_h(s,a)}\mathbb{E}_{s'\sim P(\cdot|s,a)}[\psi_f(s')]
\right],
\]
as claimed.
\end{proof}

\subsubsection{Proof of Theorem \ref{thm:regret_bound_RGOLF}}
\label{subsec:proof_Thm1}
\begin{proof}
We prove Theorem~\ref{thm:regret_bound_RGOLF} in the robust BE dimension (\Cref{def:robust_BE_dim}) framework.
Throughout, all expectations are taken under the distributions induced by the
algorithm's policies in \emph{nominal} $P^\star$, consistent with the definition of $\Pi_h$ and the DE dimension
\cite{NeurIPS2021_BellmanEluderDim_Jin} as given in \Cref{def:DE_dim}.

\paragraph{Step 1: Reduce robust regret to robust Bellman residuals under nominal transition kernel.} According to Assumption \ref{ass:completeness} and Lemma~\ref{lem:RVF_error_decomposition_BE}, we can guarantee $f^{(k)}$ is optimistic. Based on this optimistic algorithm, we will now relate the regret to the robust average Bellman error under the learner's sequence of policies. By Lemma~\ref{lem:RVF_error_decomposition_BE}, under Assumption~\ref{ass:completeness} and the confidence-set optimism
(which ensures $f^{(k)}$ is optimistic in each episode), we have
\begin{equation}
\label{eq:thm_step1_regret_to_eps}
\mathrm{Regret}(K)
\;\le\;
\sum_{k=1}^K\sum_{h=1}^H
\varepsilon^{\phi,\sigma}_{h}\!\big(f^{(k)},\pi^{(k)}\big),
\qquad \pi^{(k)}:=\pi^{f^{(k)}}.
\end{equation}
Recalling the definition of $\varepsilon^{\phi,\sigma}_{h}(f^{(k)},\pi^{(k)})$ as in eq. \ref{eq:robust_avg_Bellman_error_BE},
\[
\varepsilon^{\phi,\sigma}_{h}(f^{(k)},\pi^{(k)})
=
\mathbb{E}_{\pi^{(k)}}
\Big[f_h^{(k)}(s_h,a_h)-(\mathcal T_h^{\phi,\sigma} f_{h+1}^{(k)})(s_h,a_h)\Big],
\]
where state--actions at the step-$h$ is induced by executing $\pi^{(k)}$
in the nominal MDP $P^\star$.

\paragraph{Step 2: Add and subtract the dual-based operator.}
For each $(k,h)$, add and subtract the dual-based empirical robust Bellman backup
$\mathcal T_{h,\underline g_{f^{(k)}_{h+1}}}^{\phi,\sigma}$:
\begin{align}
\label{eq:thm_step2_decomp}
f_h^{(k)}(s,a)-(\mathcal T_h^{\phi,\sigma} f_{h+1}^{(k)})(s,a) =&
\underbrace{
\Big(f_h^{(k)}(s,a)-(\mathcal T_{h,\underline g_{f^{(k)}_{h+1}}}^{\phi,\sigma}f_{h+1}^{(k)})(s,a)\Big)
}_{\text{dual-based Bellman residual}}\nonumber\\
&+
\underbrace{
\Big((\mathcal T_{h,\underline g_{f^{(k)}_{h+1}}}^{\phi,\sigma}f_{h+1}^{(k)})(s,a)-(\mathcal T_h^{\phi,\sigma} f_{h+1}^{(k)})(s,a)\Big)
}_{\text{dual approximation error}} .
\end{align}
Plugging eq. \ref{eq:thm_step2_decomp} into eq. \ref{eq:thm_step1_regret_to_eps} yields
\begin{equation}
\label{eq:thm_step2_I_II}
\mathrm{Regret}(K) \;\le\; \mathrm{I} + \mathrm{II},
\end{equation}
where
\begin{align}
\label{eq:thm_def_I}
\mathrm{I}
&:=
\sum_{k=1}^K\sum_{h=1}^H
\mathbb{E}_{\pi^{(k)}}
\Big[
f_h^{(k)}(s_h,a_h)-(\mathcal T_{h,\underline g_{f^{(k)}_{h+1}}}^{\phi,\sigma}f_{h+1}^{(k)})(s_h,a_h)
\Big],\\
\label{eq:thm_def_II}
\mathrm{II}
&:=
\sum_{k=1}^K\sum_{h=1}^H
\mathbb{E}_{\pi^{(k)}}
\Big[
(\mathcal T_{h,\underline g_{f^{(k)}_{h+1}}}^{\phi,\sigma}f_{h+1}^{(k)})(s_h,a_h)-(\mathcal T_h^{\phi,\sigma} f_{h+1}^{(k)})(s_h,a_h)
\Big].
\end{align}

\paragraph{Step 3: Bound $\mathrm{I}$ via BE dimension (distributional Eluder dimension).}
Fix any step $h\in[H]$ and define the robust Bellman residual class
\[
\Xi_h^{\xi} := (\mathcal I-\mathcal T_h^{\phi,\sigma})\mathcal F
= \{ f_h-\mathcal T_h^{\phi,\sigma} f_{h+1} : f\in\mathcal F\},
\]
consistent with eq. \ref{eq:robust_residual_class}.
Let $\mu_k$ denote the distribution undr policy $\pi^{(k)}$ step-$h$ roll-in distribution under $P^\star$.

Define for each episode $k$ the robust residual function
\[
\xi_k(\cdot,\cdot)
:= f_h^{(k)}(\cdot,\cdot)-\mathcal T_{h,\underline g_{f^{(k)}_{h+1}}}^{\phi,\sigma} f_{h+1}(\cdot,\cdot)
\in \Xi_h^{\xi}.
\]
Then the step-$h$ contribution to $\mathrm{I}$ can be written as
\begin{equation}
\label{eq:I_as_phi_sum}
\mathrm{I}_h
:=\sum_{k=1}^K \mathbb E_{\mu_k}\big[\xi_k\big]
\quad\text{and}\quad
\mathrm{I}=\sum_{h=1}^H \mathrm{I}_h.
\end{equation}

We now invoke Lemma~\ref{lem:robust_lemma17}, which is a direct application of the distributional Eluder dimension summation argument of \cite{NeurIPS2021_BellmanEluderDim_Jin}[Lemma 17].
To apply Lemma~\ref{lem:robust_lemma17}, we need a squared-control condition of the form
$\sum_{t<k}\mathbb E_{\mu_t}[\xi_k^2]\le \beta$.
This is exactly ensured by Lemma~\ref{lem:robust-lemma39}(b) (with the same $\beta$ choice as in the theorem statement),
which provides a uniform control on the cumulative squared (dual-based) Bellman residuals; combined with
Assumption~\ref{ass:completeness} and the confidence-set validity, this yields the required $\beta$-type bound at each stage
(see Lemma~\ref{lem:robust-lemma39} and its proof template following \cite{NeurIPS2021_BellmanEluderDim_Jin}).

Therefore, applying Lemma~\ref{lem:robust_lemma17} at each fixed $h$ gives: for all $k\in[K]$,
\[
\sum_{i=1}^{k}\big|\mathbb E_{\mu_i}[\xi_i]\big|
\;\le\;
\mathcal O\!\Big(\sqrt{\dim_{\mathrm{DE}}(\Xi_h^{\xi},\Pi_h,1/k)\,\beta\,k}\Big).
\]
In particular, taking $k=K$ and summing over $h\in[H]$ yields
\begin{equation}
\label{eq:I_bound_final}
\mathrm{I}
\;\le\;
\sum_{h=1}^H
\mathcal O\!\Big(\sqrt{\dim_{\mathrm{DE}}(\Xi_h^{\xi},\Pi_h,1/K)\,\beta\,K}\Big).
\end{equation}
Equivalently, in terms of the robust BE dimension
$\dim_{\mathrm{BE}}^{\mathrm{rob}}(\mathcal F,\Pi,1/K):=\max_{h\in[H]}\dim_{\mathrm{DE}}(\Xi_h^{\xi},\Pi_h,1/K)$,
we may write
\begin{equation}
\label{eq:I_bound_BE}
\mathrm{I}
\;=\;
\mathcal O\!\Big(H\sqrt{\dim_{\mathrm{BE}}^{\mathrm{rob}}(\mathcal F,\Pi,1/K)\,\beta\,K}\Big).
\end{equation}

\paragraph{Step 4: Bound $\mathrm{II}$ via the TV dual optimization error lemma.}
For $\mathrm{II}$ we control, for each $(k,h)$,
\[
\Delta_{k,h}(s,a)
:=
(\mathcal T_{h,\underline g_{f^{(k)}_{h+1}}}^{\phi,\sigma} f^{(k)}_{h+1})(s,a)-(\mathcal T_h^{\phi,\sigma} f^{(k)}_{h+1})(s,a).
\]
Then by definition eq. \ref{eq:thm_def_II},
\[
\mathrm{II}
=
\sum_{k=1}^K\sum_{h=1}^H
\mathbb E_{\pi^{(k)}}\big[\Delta_{k,h}(s^k_h,a^k_h)\big]
\;\le\;
\sum_{k=1}^K\sum_{h=1}^H
\|\Delta_{k,h}\|_{1,\mu_h^{\pi^{(k)}}},
\]
where $\|\cdot\|_{1,\mu}$ denotes the $\ell_1(\mu)$ seminorm.

Now apply Lemma~\ref{lem:dual_opt_error_bound_BE} with $\pi=\pi^{(k)}$, $\mu_h^\pi$ as the distribution induced by $\pi^{(k)}$,
$f_{h+1}=f_{h+1}^{(k)}$, and dataset size $|\mathcal D_h^{(k)}|\geq kH$ (one transition per episode per stage).
Using a union bound over $(k,h)$ and the standard choice of failure probabilities, we obtain that with probability at least $1-\delta$,
\begin{equation}
\label{eq:II_step_dual}
\|\Delta_{k,h}\|_{1,\mu_h^{\pi^{(k)}}}
=
\mathcal O\!\left(
B_{\phi}(\sigma)\sqrt{\frac{\log\!\big(|\mathcal{F}_{h+1}||\mathcal{G}|KH/\delta\big)}{kH}}
+\frac{\varepsilon^{\mathrm{dual}}}{KH}
\right).
\end{equation}
Summing eq. \ref{eq:II_step_dual} over $k\in[K]$ and $h\in[H]$, and using
$\sum_{k=1}^K k^{-1/2}\le 2\sqrt K$, gives
\begin{equation}
\label{eq:II_bound_final}
\mathrm{II}
=
\mathcal O\!\left(
B_{\phi}(\sigma)\sqrt{KH\log\!\bigg(\frac{|\mathcal{F}||\mathcal{G}|KH}{\delta}\bigg)}
+\varepsilon^{\mathrm{dual}}\right)=\mathcal O\!\left(
\sqrt{HB_{\phi}(\sigma)\beta K}
+\varepsilon^{\mathrm{dual}}\right).
\end{equation}

\paragraph{Step 5: Combine the bounds.}
Combining eq. \ref{eq:thm_step2_I_II}, eq. \ref{eq:I_bound_BE}, and eq. \ref{eq:II_bound_final} yields that with probability at least $1-\delta$,
\[
\mathrm{Regret}(K)
\;\le\;
\mathcal O\!\Big(
H\sqrt{\dim_{\mathrm{BE}}^{\mathrm{rob}}(\mathcal F,\Pi,1/K)\,\beta\,K}
\;+\;
\sqrt{HB_{\phi}(\sigma)\beta K}
\;+\;
\varepsilon^{\mathrm{dual}}
\Big).
\]
Finally, by setting  $\beta=\mathcal{O}\bigg(B_{\phi}(\sigma)\log\!\big(|\mathcal{F}||\mathcal{G}|KH/\delta\big)\bigg)$ completes the proof.
\end{proof}

\subsubsection{Proof of Corollary \ref{cor:Sample_Complexity_bound_TV}}
For TV, we adopt the standard assumption.
\begin{assu}[Failure States]\label{ass:vanisin_minimal}
    For a {\RMDPTV}, there exists a set of failure states $\mathcal{S}_F\subseteq\mathcal{S}$, such that $r_h(s,a)=0$,  and $P^\star_h(s'|s,a)=0$, $\forall a\in\mathcal{A},\forall s\in\mathcal{S}_F,\forall s'\notin\mathcal{S}_F$. 
\end{assu}

\label{subsec:proof_cor1}
\begin{proof}
    Under \Cref{ass:multiplier-range-TV}, we will first find out the value of $B_{\phi}(\sigma)$ for each divergence. Then, applying the value of $B_{\phi}(\sigma)$ and $\varepsilon^{dual}=0$ in \Cref{thm:regret_bound_RGOLF}, we will find the sample-complexity bound for each case, as follows:
    \begin{itemize}
        \item {\bf TV-Divergence Case:} According to eq. \ref{eq:dual_TV}, we have
        \begin{align}
        \label{eq:TV_B_phi}
            l_{\mathrm{TV}}(f;s,a,s';\nu) \leq B_{\mathrm{TV}} := \mathcal{O}\big(H\min\{H,1/\sigma\}\big).
        \end{align}
       Applying eq. \ref{eq:TV_B_phi}, $\varepsilon^{dual}=0$ and $\dimrobBE\geq 1$ in \Cref{thm:regret_bound_RGOLF}, we get the sample-complexity bound as 
       \begin{align}
        \label{eq:sample_complexity_TV}
              T=KH= \mathcal{O}\bigg(\frac{H^5\big(\min\{H,1/\sigma\}\big)^2d\log\big(|\mathcal F||\mathcal G|T/\delta\big)}{\varepsilon^2}\bigg).
       \end{align}

               \item {\bf $\chi^2$-Divergence Case:} According to eq. \ref{eq:dual_chi}, we have
        \begin{align}
        \label{eq:chi_B_phi}
            l_{\chi^2}(f;s,a,s';\nu) \leq B_{\chi^2} := \mathcal{O}\big(H(1+\sqrt{\sigma}\big).
        \end{align}
       Applying eq. \ref{eq:chi_B_phi}, $\varepsilon^{dual}=0$ and $\dimrobBE\geq 1$ in \Cref{thm:regret_bound_RGOLF}, we get the sample-complexity bound as 
       \begin{align}
        \label{eq:sample_complexity_chi}
              T=KH= \mathcal{O}\bigg(\frac{H^5(1+\sqrt{\sigma})^2d\log\big(|\mathcal F||\mathcal G|T/\delta\big)}{\varepsilon^2}\bigg).
       \end{align}

        \item {\bf KL-Divergence Case:} According to eq. \ref{eq:dual_KL}, we have
        \begin{align}
        \label{eq:KL_B_phi}
            l_{\mathrm{KL}}(f;s,a,s';\nu) \leq B_{\mathrm{KL}} := \mathcal{O}\big(H\sigma\big).
        \end{align}
       Applying eq. \ref{eq:KL_B_phi}, $\varepsilon^{dual}=0$ and $\dimrobBE\geq 1$ in \Cref{thm:regret_bound_RGOLF}, we get the sample-complexity bound as 
       \begin{align}
        \label{eq:sample_complexity_KL}
              T=KH= \mathcal{O}\bigg(\frac{H^5\sigma^2d\log\big(|\mathcal F||\mathcal G|T/\delta\big)}{\varepsilon^2}\bigg).
       \end{align}
    \end{itemize}
    
\end{proof}

\subsection{Specialization to Linear {\RMDPf}}
\label{subsec:linear-RMDP}
We now show that our regret bound for general functional approximation specializes to a near--dimension-optimal
bound when the robust value function admits a linear representation, in the spirit of
the $d_{\mathrm{lin}}$-rectangular linear RMDP framework of \cite{Arxiv2022_OfflineDRRLLinearFunctionApprox_Ma} and
\cite{Arxiv2024_UpperLowerDRRL_Liu}.

\begin{assu}[$d_{\mathrm{lin}}$-Rectangular Linear {\RMDPf}]
\label{ass:linear-rmdp}
There exists a known feature map
$\boldsymbol{\kappa}_h : \mathcal S \times \mathcal A \to \mathbb R^d$ for each $h\in[H]$
with $\sum\limits_{i=1}^d\kappa_{h,i}(s,a) = 1$ and $\kappa_{h,i}(s,a)\geq 0$ for any $(i,s,a)\in [d]\times \mathcal{S}\times \mathcal{A}$ such that:
\begin{enumerate}
\item (Linear nominal model.)  
The reward and nominal kernel are linear:
\[
r_h(s,a) = {\boldsymbol\kappa}_h(s,a)^\top \boldsymbol\Omega_h, \qquad
P^\star_{h}(\cdot\mid s,a) = {\boldsymbol\kappa}_{h}(s,a)^\top\,{\boldsymbol\lambda}^\star_{h}(\cdot),
\]
for some unknown probability measures $\{{\boldsymbol\lambda}^\star_h\}_{h=1}^H$ over $\mathcal{S}$ and known
vectors $\{{\boldsymbol\Omega}_h\}_{h=1}^H$ with $\|{\boldsymbol\Omega}_h\|_2 \le \sqrt d_{\mathrm{lin}}$.

\item ($d_{\mathrm{lin}}$-rectangular $\phi$-divergence uncertainty set.)  
For each step $h$ and feature index $i\in[d_{\mathrm{lin}}]$ we can parameterize our uncertainty set $\mathcal{P}$ by $\{{\boldsymbol\lambda}^\star_h\}_{h=1}^H$, and thereby, can be defined as $\mathcal{P}=\mathcal{U}^{\phi,\sigma}(P^{\star})= \bigotimes_{(h,s,a)\in [H]\times \mathcal{S} \times \mathcal{A}}\mathcal{U}^{\phi,\sigma}_h(s,a; {\boldsymbol\lambda}^\star_h)$, where $\mathcal{U}^{\phi,\sigma}_h(s,a; {\boldsymbol\lambda}^\star_h)$ is defined as 
\begin{align*}
\mathcal{U}^{\phi,\sigma}_h(s,a; {\boldsymbol\lambda}^\star_h)\triangleq\bigg\{\sum\limits_{i=1}^d\kappa_{h,i}(s,a)\lambda_{h,i}(\cdot): \nu_{h,i}\in\Delta(\mathcal{S}) \text{ and } D_{\phi}(\lambda_{h,i},\lambda^\star_{h,i}(\cdot|s,a))\leq \sigma\bigg\}.
\end{align*}
\end{enumerate}
\end{assu}

This is the $\phi$-divergence analogue of the 
$d_{\mathrm{lin}}$-rectangular linear RMDP of
\cite{Arxiv2024_UpperLowerDRRL_Liu}[Sec. 3.2]. 

\paragraph{Linear function classes induced by the $d$-Rectangular linear {\RMDPf}.}
Under the linear {\RMDPf} structure in Assumption \ref{ass:linear-rmdp}, we specialize our general functional class $\mathcal{F}$ and dual functional class $\mathcal{G}$
used by {\Algoname} as linear function classes with a common feature map
$\kappa_h : \mathcal S\times\mathcal A \to \mathbb R^{d_{\mathrm{lin}}}$, and denote them as follows:
\begin{align}
\mathcal F^{lin} 
&:= \{\mathcal{F}^{lin}_h\}_{h=1}^H, \text{ where } \mathcal{F}^{lin}_h := \Bigl\{f_h : f_h(s,a) = {\boldsymbol\kappa}_h(s,a)^\top {\boldsymbol w}_h,\; {\boldsymbol w}_h\in\mathbb R^{d_{\mathrm{lin}}}
\Bigr\}, \label{eq:F-lin-def}\\
\mathcal G^{lin} &:= \{\mathcal{G}^{lin}_h\}_{h=1}^H, \text{ where } \mathcal{G}^{lin}_h := \Bigl\{g_h=(g_{\eta,h},g_{\nu,h}) : g_{\eta,h}(s,a)=\boldsymbol{\kappa}_h(s,a)^\top \boldsymbol u_{\eta,h},\;\nonumber\\
&\qquad \qquad \qquad g_{\nu,h}(s,a)=\boldsymbol{\kappa}_h(s,a)^\top \boldsymbol u_{\nu,h},\;
\boldsymbol u_{\eta,h},\boldsymbol u_{\nu,h}\in\mathbb R^{d_{\mathrm{lin}}}
\Bigr\}. \label{eq:G-lin-def}
\end{align}
The class $\mathcal F^{lin}$ is used to approximate robust $Q$--functions, while $\mathcal G^{lin}$ parameterizes the dual variables $(\eta,\nu)$ appearing in the $\phi$--robust Bellman operator (via the functional dual loss in Eq. \ref{eq:loss_dual})[See Sec.~\ref{sec:Dual_Reformulation} for the definition of the dual loss and its empirical counterpart].

\begin{lem}[Linear realizability and completeness]
\label{lem:linear-real}
Suppose the linear RMDP satisfies Assumption \ref{ass:linear-rmdp}. Then:
\begin{enumerate}
\item {\bf Linear realizability of $Q^{\pi,\sigma}$ and $Q^{\star,\sigma}$.} For any Markov policy $\pi$ and any $\sigma\ge 0$, there exist
vectors ${\boldsymbol w}^{\pi,\sigma}_1,\dots,{\boldsymbol w}^{\pi,\sigma}_H\in\mathbb R^{d_{\mathrm{lin}}}$ such that for all $h\in[H]$,
\begin{align}
Q^{\pi,\sigma}_h(s,a)
&= {\boldsymbol \kappa}_h(s,a)^\top \boldsymbol w^{\pi,\sigma}_h,
\qquad\forall (s,a)\in\mathcal S\times\mathcal A, \label{eq:Q-pi-linear}
\end{align}
and, in particular, for the robust-optimal policy $\pi^\star$ there exist
$\boldsymbol w^{\star,\sigma}_1,\dots,\boldsymbol w^{\star,\sigma}_H$ with
\begin{align}
Q^{\star,\sigma}_h(s,a)
&= \boldsymbol \kappa_h(s,a)^\top \boldsymbol w^{\star,\sigma}_h,
\qquad\forall (s,a),\; h\in[H]. \label{eq:Q-star-linear}
\end{align}
Hence $Q^{\pi,\sigma},Q^{\star,\sigma}\in\mathcal F^{lin}$.

\item {\bf Closure under the robust Bellman operator.}
Let $f\in\mathcal F^{lin}$ with component functions
$f_h(s,a)=\boldsymbol \kappa_h(s,a)^\top \boldsymbol w_h$. Then, for each $h\in[H]$ there exists
$\boldsymbol w'_h\in\mathbb R^{d_{\mathrm{lin}}}$ such that the robust Bellman backup satisfies
\begin{align}
[\mathcal{T}^{\phi,\sigma}_h f_{h+1}](s,a)
&= r_h(s,a) + \mathbb E_{P\in\mathcal U^{\phi,\sigma}_h(s,a)}
\bigl[ V_{h+1}(s') \bigr] \nonumber\\
&= \boldsymbol\kappa_h(s,a)^\top \boldsymbol w'_h,
\qquad\forall (s,a), \label{eq:T-sigma-linear}
\end{align}
so that $\mathcal{T}^{\phi,\sigma}_h f_{h+1}\subseteq \mathcal{F}^{lin}_h$ for all $h$.

\item {\bf Linear dual representation.}
Fix any $f\in\mathcal F^{\mathrm{lin}}$. The dual minimizer $g^*_f=(g_{\eta},g_{\nu}(s,a))\in\mathbb R^2$ that attains the pointwise $\phi$-divergence dual can be cosen. in $\mathcal G^{\mathrm{lin}}$, i.e., there exist $\boldsymbol u_{\eta,h}^f,\boldsymbol u_{\nu,h}^f\in\mathbb R^{d_{\mathrm{lin}}}, \forall h \in [H]$ such that for all
$(s,a)$ and $h\in[H]$,
\begin{align}
g^\star_{f,h}(s,a)
&:=\big(g^\star_{\eta,f,h}(s,a),\,g^\star_{\nu,f,h}(s,a)\big)
=
\Big(\boldsymbol \kappa_h(s,a)^\top \boldsymbol u_{\eta,h}^f,\;
      \boldsymbol \kappa_h(s,a)^\top \boldsymbol u_{\nu,h}^f\Big),
\qquad \forall (s,a),\; h\in[H].
\label{eq:g-star-linear}
\end{align}
Consequently, the dual realizability error $\varepsilon^{\mathrm{dual}}$ in
Assumption~\ref{ass:approx_dual_realizability} is zero whenever the dual class used by the algorithm
satisfies $\mathcal G^{\mathrm{lin}} \equiv \mathcal{L}^1(\mu^\pi;\mathbb R^2$.
\end{enumerate}
\end{lem}

\begin{proof} We will now proof Lemma \ref{lem:linear-real}.
\emph{(i) Linear realizability of $Q^{\pi,\sigma}$ and $Q^{\star,\sigma}$.} The linear RMDP literature
(e.g., \cite{Arxiv2022_OfflineDRRLLinearFunctionApprox_Ma}[Prop.~3.2 and Lem.~4.1] and \cite{Arxiv2024_UpperLowerDRRL_Liu}[Sec.~3.2]) implies that both the robust Bellman operator and the robust value functions preserve linearity in
${\boldsymbol \kappa}_h$, yielding eq. \ref{eq:Q-pi-linear}--\ref{eq:T-sigma-linear}, the nominal kernel and all kernels in the $\phi$-divergence uncertainty set are linear mixtures of the base measures $\{\boldsymbol \lambda_h\}_{h=1}^H$, and the reward is linear in $\boldsymbol \kappa_h$. 

\smallskip\noindent
\emph{(ii) Closure under $\mathcal{T}^{\phi,\sigma}_h$.}
Let $f\in\mathcal F^{lin}$ with
$f_{h+1}(s,a)=\boldsymbol \kappa_{h+1}(s,a)^\top \boldsymbol w_{h+1}$.  Define the value
$V_{h+1}(s)=\max\limits_{a\in\mathcal A} f_{h+1}(s,a)$. By the $d_{\mathrm{lin}}$-rectangular structure, any $P\in\mathcal U^{\phi,\sigma}_h(s,a)$
can be written as
\(
P(\cdot\mid s,a) = \sum_{i=1}^d \kappa_{h,i}(s,a)\,\lambda_{h,i}(\cdot)
\)
with $\lambda_{h,i}\in\mathcal U^\sigma_h(s,a;\boldsymbol \lambda^\star_h)$.  Thus,
\begin{align}
\inf_{P_h \in \mathcal{U}^{\phi,\sigma}_h(s,a)}\mathbb{E}_{s'\sim P_h(\cdot|s,a)}\left[V^{\pi,\sigma}_{h+1}(s')\right]
&=
\inf_{\lambda_{h,1},\dots,\lambda_{h,d}}
\sum_{i=1}^d \kappa_{h,i}(s,a)\,
\mathbb E_{s'\sim\mu_{h,i}}[V_{h+1}(s')] \\
&=
\sum_{i=1}^d \kappa_{h,i}(s,a)
\inf_{\lambda_{h,i}\in\mathcal U^{\phi,\sigma}_h(s,a;\boldsymbol \lambda^\star_h)}
\mathbb E_{s'\sim\nu_{h,i}}[V_{h+1}(s')] \\
&=
\sum_{i=1}^d \kappa_{h,i}(s,a)\,\zeta_{h,i}(\boldsymbol w_{h+1}),
\end{align}
where each scalar $\zeta_{h,i}(\boldsymbol w_{h+1})$ depends only on $V_{h+1}$ (and
hence on $\boldsymbol w_{h+1}$) and the local $\phi$-divergence ball at index~$i$.  We therefore
obtain
\begin{align}
[\mathcal{T}^{\phi,\sigma}_h f_{h+1}](s,a)
&=
\boldsymbol \kappa_h(s,a)^\top \boldsymbol \Omega_h +
\boldsymbol \kappa_h(s,a)^\top \boldsymbol\zeta_h(\boldsymbol w_{h+1})
=
\boldsymbol \kappa_h(s,a)^\top \boldsymbol w'_h,
\end{align}
with $\boldsymbol w'_h := \boldsymbol \Omega_h + \boldsymbol \zeta_h(\boldsymbol w_{h+1})$.  This yields eq. 
\ref{eq:T-sigma-linear} and shows that
$\mathcal{T}^{\phi,\sigma}_h f_{h+1} \subseteq \mathcal F^{lin}_h$.

\smallskip\noindent
\emph{(iii) Linear dual representation.}
Fix any $f\in\mathcal F^{\mathrm{lin}}$ and $(s,a,h)$. 
The $\phi$-divergence dual form of the robust Bellman operator
(Eq.~\ref{eq:phi-ball-dual}) expresses the inner worst-case expectation
as a two-dimensional convex optimization problem over scalar dual variables
$(\eta,\nu)$. In our functional formulation, this corresponds to optimizing over
dual functions 
\(
g_{f,h}(s,a)=(g_{\eta,h}(s,a),g_{\nu,h}(s,a))\in\mathbb R^2.
\)

Under the linear RMDP structure, the nominal transition kernel admits the
representation
\[
P_h^\star(\cdot\mid s,a)=\sum_{i=1}^{d_{\mathrm{lin}}}\kappa_{h,i}(s,a)\lambda^\star_{h,i},
\]
so the dual objective decomposes as a weighted combination over the base measures
$\{\lambda^\star_{h,i}\}_{i=1}^{d_{\mathrm{lin}}}$. As a consequence, the pointwise dual
minimizer can be chosen to decompose coordinate-wise: there exist scalars
$\{\eta^\star_{h,i},\nu^\star_{h,i}\}_{i=1}^{d_{\mathrm{lin}}}$ such that the optimal
dual function admits the representation
\[
g^\star_{f,h}(s,a)
=
\big(g^\star_{\eta,f,h}(s,a),\,g^\star_{\nu,f,h}(s,a)\big)
=
\Big(\boldsymbol\kappa_h(s,a)^\top\boldsymbol u^f_{\eta,h},\;
      \boldsymbol\kappa_h(s,a)^\top\boldsymbol u^f_{\nu,h}\Big),
\]
for some vectors $\boldsymbol u^f_{\eta,h},\boldsymbol u^f_{\nu,h}\in\mathbb R^{d_{\mathrm{lin}}}$
(see, e.g., the TV dual derivation in \cite{Arxiv2024_UpperLowerDRRL_Liu}[Sec.~3.2]).
Equivalently, $g_f^\star\in\mathcal G^{\mathrm{lin}}$ as defined in
eq.~\ref{eq:g-star-linear}. 

Therefore, the infimum in the dual representation is attained within the class
$\mathcal G^{\mathrm{lin}}$, and the dual realizability error
$\varepsilon^{\mathrm{dual}}$ in Assumption~\ref{ass:approx_dual_realizability}
is zero whenever the dual class used by the algorithm satisfies
$\mathcal G^{\mathrm{lin}}\equiv \mathcal{L}^1(\mu^\pi;\mathbb R^2)$).
\end{proof}

\begin{assu}[Finite linear covering]
\label{ass:linear-cover}
For $\varepsilon_0 = 1/K$, the union class
$\mathcal H = \mathcal F^{lin} \cup \mathcal G^{lin}$ admits a finite
$\varepsilon_0$-cover in $\|\cdot\|_\infty$ such that
\begin{align}
\label{eq:linear-cover-again}
\log N_{\mathcal H}(\varepsilon_0)
\;\le\;
c_od_{\mathrm{lin}}H\,\log\big(c_o K\big)
\end{align}
for some absolute constant $c_o>0$.
\end{assu}
This bound follows from standard metric-entropy results for linear predictors on a bounded domain (see, e.g., \cite{shalev2014understanding}[Thm.~14.5]). In our setting, the feature vectors satisfy the simplex constraints $\sum_i \kappa_{h,i}(s,a)=1$ and $\kappa_{h,i}(s,a)\ge0$ for all $(s,a,h)$, which immediately implies $|\kappa_h(s,a)|2\le 1$. Together with the fact that the parameter vectors of $\mathcal F^{lin}$ and $\mathcal G^{lin}$ are restricted to a bounded ball, this ensures that every function in the union class $\mathcal H = \mathcal F^{lin} \cup \mathcal G^{lin}$ behaves as a linear predictor in an ambient space of dimension $d^{lin}=dH$, yielding a covering-number bound of the form $\log N_{\mathcal H}(\varepsilon_0) \le c_od_{\mathrm{lin}}H \log\bigl(c_o/\varepsilon_0\bigr)$ for some absolute constant $c_o$.

\begin{thm}[Regret of {\Algoname} in linear {\RMDPf}]
\label{thm:linear-regret-main}
For any $\delta\in(0,1]$ and all linear classes,
$\mathcal{F}^{lin},\mathcal{G}^{lin}$ as in eq. \ref{eq:F-lin-def}--\ref{eq:G-lin-def}, we set
\(
\beta
=\mathcal{O}\Bigl(B_{\phi}(\sigma)d_{\mathrm{lin}}H\log\bigl( KH/\delta\bigr)\bigr)\Bigr)
\)
in {\Algoname}. Then, under Assumption \ref{ass:completeness}--\ref{ass:linear-cover} and setting  $\varepsilon^{\mathrm{dual}}=0$, with
probability at least $1-\delta$, it holds that
\begin{align*}
\mathrm{Regret}(K) \le \widetilde{\mathcal O}\Big(\sqrt{H^2d^2_{\mathrm{lin}}B^2_{\phi}(\sigma)K}\Big).
\end{align*}
In particular, plugging in the divergence-dependent envelope constants yields:
\begin{align}
\textbf{(TV)}\qquad
\mathrm{Regret}(K)
&\le
\widetilde{\mathcal O}\!\Big(\sqrt{
H^4d^2_{\mathrm{lin}}(\min\{H,\sigma^{-1}\})^2K}
\Big),
&&\text{where } B_{\mathrm{TV}}(\sigma)=\mathcal{O}\!\big(H\min\{H,\sigma^{-1}\}\big),\label{eq:regret-linear-TV}\\[2mm]
\textbf{($\chi^2$)}\qquad
\mathrm{Regret}(K)
&\le
\widetilde{\mathcal O}\!\Big(\sqrt{
H^4d^2_{\mathrm{lin}}(1+\sqrt{\sigma})^2K}
\Big),
&&\text{where } B_{\chi^2}(\sigma)=\mathcal{O}\!\big(H(1+\sqrt{\sigma})\big),\label{eq:regret-linear-chi}\\[2mm]
\textbf{(KL)}\qquad
\mathrm{Regret}(K)
&\le
\widetilde{\mathcal O}\!\Big(
\sqrt{H^4d^2_{\mathrm{lin}}\sigma^2K}\Big),
&&\text{where } B_{\mathrm{KL}}(\sigma)=\mathcal{O}\big(H\sigma\big).\label{eq:regret-linear-KL}
\end{align}
\end{thm}

\begin{proof}
We adapt the proof of Theorem~\ref{thm:regret_bound_RGOLF} to the linear $\phi$--RMDP setting under Assumption~\ref{ass:linear-rmdp}, with
linear function classes $\mathcal F^{lin},\mathcal G^{lin}$ as defined in eq.
\ref{eq:F-lin-def}--\ref{eq:G-lin-def}. In this work, the exploration is controlled intrinsically by the robust BE dimension (Definition~\ref{def:robust_BE_dim}).

\paragraph{Step 1: Starting point from the general regret proof (I-II decomposition).}
By the definition of robust regret eq. \ref{eq:Regret_K} and the same decomposition
steps as in the proof eq. \ref{eq:thm_step1_regret_to_eps}--\ref{eq:thm_step2_I_II} of Theorem~\ref{thm:regret_bound_RGOLF}, we have
\begin{equation}
\label{eq:lin_phi_regret_decomp}
\mathrm{Regret}(K)\;\le\; I + II,
\end{equation}
where:
(i) $I$ is the \emph{exploration / Bellman-residual term} controlled by the robust BE dimension (eq. \ref{eq:thm_def_I});
(ii) $II$ is the \emph{robust-operator approximation term} due to learning the robust Bellman operator via the dual ERM plug-in (eq. \ref{eq:thm_def_II}).

By eq. \ref{eq:I_bound_BE} and eq. \ref{eq:II_bound_final}, we can bound from $\mathrm I$ and $\mathrm II$, respectively, as
\begin{align}
I &\leq \mathcal{O}\bigg(H\sqrt{ \dimrobBE(\mathcal F,\mathcal D_{\mathcal F},1/\sqrt K)B_\phi(\sigma)K\log\!\big(|\mathcal{F}^{lin}||\mathcal{G}^{lin}|KH/\delta\big)},\label{eq:I_general_phi_BE}\\
II &\leq \mathcal O\!\left(
\sqrt{HB^2_{\phi}(\sigma)\log\!\big(|\mathcal{F}^{lin}||\mathcal{G}^{lin}|KH/\delta\big)K}
+\varepsilon^{\mathrm{dual}}\right),\label{eq:II_general_phi_dual}
\end{align}
where $B_\phi(\sigma)$ is the uniform envelope constant from Assumption~\ref{ass:multiplier-range-TV}, and $\varepsilon^{\mathrm{dual}}$ is the dual realizability bias
(Assumption~\ref{ass:approx_dual_realizability}).

\paragraph{Step 2: Linear $\phi$--RMDP consequences (structural assumptions and complexity terms).} For better clarity, we work under the exact dual realizability condition, and we set $\varepsilon^{\mathrm{dual}} = 0$ for simplicity of proof \footnote{By \Cref{lem:linear-real}(iii), when we instantiate {\Algoname} with the linear dual class $\mathcal G^{lin}$, the dual minimizer of the TV robust Bellman operator is exactly realizable, so the dual approximation error in \Cref{ass:approx_dual_realizability} vanishes and we have $\varepsilon^{\mathrm{dual}} = 0$. For clarity, we therefore focus on this exact-realizability case in the sequel. If one instead works with a dual class that only approximately realizes the optimal dual (so
$\varepsilon^{\mathrm{dual}} > 0$), the same proof strategy goes through with an additional additive term of order $\varepsilon^{\mathrm{dual}}$
propagating from the bound on $II$ (cf. \ref{eq:II_bound_final}) into the final regret
bound; no other part of the argument needs to be modified, and the dependence on $(K,d,H,\sigma)$ remains unchanged.}. Under \Cref{ass:linear-rmdp}, the linear classes $\mathcal{F}^{lin},\mathcal{G}^{lin}$ together with \Cref{lem:linear-real} guarantee that all structural assumptions used in \Cref{thm:regret_bound_RGOLF} remain valid when we instantiate the analysis with the linear {\RMDPf}; the only resulting changes are as follows:
\begin{itemize}
\item The complexity term $\log(|\mathcal{F}^{lin}||\mathcal{G}^{lin}|)$ is replaced by a covering-number bound for the union class
$\mathcal{H}\triangleq\mathcal{F}^{lin}\cup\mathcal{G}^{lin}$. By \Cref{ass:linear-cover}, for $\varepsilon_0=1/KH$, the union class
$\mathcal{H}=\mathcal{F}^{lin}\cup\mathcal{G}^{lin}$ admits an $\varepsilon_0$-cover in
$\|\cdot\|_\infty$ with $\log N_{\mathcal{H}}(\varepsilon_0)\le
c_0\,d_{\mathrm{lin}}H\,\log(c_0 K)$, 
for some absolute constant $c_0>0$ \citep{shalev2014understanding}. Therefore, using thsi fact we have $\log\!\big(|\mathcal{F}^{lin}||\mathcal{G}^{lin}|KH/\delta\big) \leq c_0\,d_{\mathrm{lin}}H\,\log(c_0 K) + \log(c_1 KH/\delta) = \mathcal{O}\Big(d_{\mathrm{lin}}H\log(KH/\delta)\Big)$
\item The dual bias term $\varepsilon^{\mathrm{dual}}$ drops out.
\end{itemize}

\paragraph{Step 3: Bounding $\mathrm{II}$ (robust operator approximation) in the linear case.}
The derivation of the general bound eq. \ref{eq:II_general_phi_dual} for II (Lemma \ref{lem:dual_opt_error_bound_BE}) 
uses ERM generalization bound Lemma \ref{lem:ERM_gen_bound} and a union bound over all episodes, time
steps, and function pairs $(f,g)\in\mathcal{F}^{lin}\times\mathcal{G}^{lin}$. In the linear case, we instead apply the same argument to a finite
$\varepsilon_0$-net of $\mathcal{H}$. 

More precisely, fix $\varepsilon_0=1/KH$ and let
$\mathcal{H}_0\subset\mathcal{H}$ be a minimal $\varepsilon_0$-net under $\|\cdot\|_\infty$,
such that $|\mathcal{H}_0| = N_{\mathcal{H}}(\varepsilon_0)$.
We then repeat the concentration analysis of Lemma \ref{lem:ERM_gen_bound}, but take the union bound over the finite set
$(k,h,\varphi)\in[K]\times[H]\times\mathcal{H}_0$ instead of $(k,h,f,g)\in[K]\times[H]\times\mathcal{F}\times\mathcal{G}$.
The approximation error between any $f\in\mathcal{H}$ and its nearest neighbor
$f'\in\mathcal{H}_0$ is at most $\varepsilon_0$ in $\|\cdot\|_\infty$ and
hence contributes only an $o(1)$ term in $K$ to the final regret bound, which we absorb into the big--$\mathcal{O}$ notation. Therefore, following the same steps of the proof of Lemma \ref{lem:dual_opt_error_bound_BE} and setting $\varepsilon^{\mathrm{dual}}=0$, we conclude that in the linear case
eq. \ref{eq:II_general_phi_dual} becomes
\begin{equation}
\label{eq:linear-II-H}
II \leq \mathcal O\!\left(
\sqrt{HB^2_{\phi}(\sigma)d_{\mathrm{lin}}K\log\!\big(KH/\delta\big)}\right).
\end{equation}

\paragraph{Step 4: Bounding $\mathrm{I}$ via robust BE dimension (linear case).}
By Definition~\ref{def:robust_BE_dim}, the exploration term is controlled by
the DE dimension of the robust residual class $(I-\mathcal T^{\phi,\sigma}_h)\mathcal F$
under the on-policy family $\mathcal D_{\mathcal F}$.
The proof of Theorem~\ref{thm:regret_bound_RGOLF} shows that, on the event that all confidence sets are valid,
eq. \ref{eq:I_general_phi_BE} holds  with $\dimrobBE(\mathcal F,\mathcal D_{\mathcal F},1/\sqrt K)$. In the linear case, the robust BE dimension is finite and satisfies 
\begin{equation}
\label{eq:robBE_linear}
\dimrobBE(\mathcal F^{lin},\mathcal D_{\mathcal F^{lin}},1/\sqrt K)
\;=\; \widetilde O(d_{\mathrm{lin}}),
\end{equation}
by the same linear-DE/BE arguments as in the non-robust setting
\cite{NeurIPS2021_BellmanEluderDim_Jin, wang_reinforcement_2020} and as summarized in Remark~\ref{rem:linear_RMDOP_BE}. Hence,
\begin{equation}
\label{eq:I_linear_phi}
I \leq \mathcal{O}\bigg(H\sqrt{ d^2_{\mathrm{lin}}B_\phi(\sigma)K\log\!\big(KH/\delta\big)}\bigg).
\end{equation}

\paragraph{Step 5: Combine $\mathrm{I}$ and $\mathrm{II}$.}
Combining eq. \ref{eq:lin_phi_regret_decomp}, eq. \ref{eq:linear-II-H}, and eq. \ref{eq:I_linear_phi}, we obtain
\begin{align}
\mathrm{Regret}(K)
\;&\le\; \mathcal{O}\bigg(H\sqrt{ d^2_{\mathrm{lin}}B_\phi(\sigma)K\log\!\big(KH/\delta\big)}\bigg) + \mathcal O\!\left(
\sqrt{HB^2_{\phi}(\sigma)d_{\mathrm{lin}}K\log\!\big(KH/\delta\big)}\right)\nonumber\\
& \leq \widetilde{\mathcal{O}}\bigg(\sqrt{ H^2d^2_{\mathrm{lin}}B^2_\phi(\sigma)K}\bigg).\label{eq:regret_linear_phi_final}
\end{align}

\paragraph{Step 6: Plugging in divergence-specific envelopes.}
We now obtain the regret bound for each divergences as follows:
    \begin{itemize}
        \item {\bf TV-Divergence Case:} According to eq. \ref{eq:TV_B_phi}, we have $B_{\mathrm{TV}} := \mathcal{O}\big(H\min\{H,1/\sigma\}\big)$. Putting this in eq. \ref{eq:regret_linear_phi_final}, we get
        \begin{align}
        \label{eq:TV_Regret_linear}
        \mathrm{Regret}(K)  \leq \widetilde{\mathcal{O}}\bigg(\sqrt{ H^4d^2_{\mathrm{lin}}(\min\{H,1/\sigma\})^2K}\bigg).
        \end{align}
 
        \item {\bf $\chi^2$-Divergence Case:} According to eq. \ref{eq:chi_B_phi}, we have $B_{\chi^2} := \mathcal{O}\big(H(1+\sqrt{\sigma}\big)$. Putting this in eq. \ref{eq:regret_linear_phi_final}, we get
        \begin{align}
        \label{eq:Chi_Regret_linear}
        \mathrm{Regret}(K)  \leq \widetilde{\mathcal{O}}\bigg(\sqrt{ H^4d^2_{\mathrm{lin}}(1+\sqrt{\sigma})^2K}\bigg).
        \end{align}

        \item {\bf KL-Divergence Case:} According to eq. \ref{eq:KL_B_phi}, we have $B_{\mathrm{KL}} := \mathcal{O}\big(H\sigma\big)$. Putting this in eq. \ref{eq:regret_linear_phi_final}, we get
        \begin{align}
        \label{eq:KL_Regret_linear}
        \mathrm{Regret}(K)  \leq \widetilde{\mathcal{O}}\bigg(\sqrt{ H^4d^2_{\mathrm{lin}}\sigma^2K}\bigg).
        \end{align}
    \end{itemize}
This concludes the proof.
\end{proof}

\subsection{Key Lemmas}
\label{subsec:Key_Lemmas}
\begin{keylem}[Robust value function error decomposition]
\label{lem:RVF_error_decomposition_BE}
Consider an RMDP with a $\Xi$-divergence uncertainty set.
For any $f=\{f_h\}_{h=1}^H\in\mathcal{F}$, let $\pi^f$ be the greedy policy induced by $f$, and we define $V^f := \mathbb{E}\!\left[f_1(s_1,\pi^f_1(s_1))\right]$, and $V^{\pi,Q} := \mathbb{E}_{a_{1:H}\sim \pi,\; s_{h+1}\sim Q_h}\!\left[\sum_{h=1}^H r_h(s_h,a_h)\right]$.
Let $\psi^f_{h+1}(s') := \max_{a'\in\mathcal{A}} f_{h+1}(s',a')$.  For any policy $\pi$ and stage $h$, define the  robust average Bellman error
\begin{align}
\label{eq:robust_avg_Bellman_error_BE}
\varepsilon^{\phi,\sigma}_h(f,\pi)
\;:=\;
\mathbb{E}\!\left[
f_h(s_h,a_h)-(\mathcal{T}^{\phi,\sigma}_h f_{h+1})(s_h,a_h)
\;\middle|\; a_h\sim \pi_h(\cdot|s_h),\; s_{h+1}\sim P_h^\star(\cdot|s_h,a_h)
\right],
\end{align}
where the expectation is taken over the trajectory distribution induced by executing $\pi$ in the nominal environment $P^\star$.
Then, under Assumption~\ref{ass:completeness} and the optimism property of $\{f^{(k)}\}_{k=1}^K$ ensured by the confidence sets,
the cumulative robust regret in eq. \ref{eq:Regret_K} satisfies
\begin{align}
\label{eq:regret_to_BE_errors}
\mathrm{Regret}(K)
\;\le\;
\sum_{k=1}^K\sum_{h=1}^H
\varepsilon^{\phi,\sigma}_h\!\big(f^{(k)},\pi^{(k)}\big),
\end{align}
where $\pi^{(k)}:=\pi^{f^{(k)}}$.
\end{keylem}

\begin{proof}
Fix any kernel $Q\in\mathcal{P}$ and any $f\in\mathcal{F}$. Recall by eq. \ref{eq:Robust_Bellman_Operator_TV} we have
\[
(\mathcal{T}_h^{\phi,\sigma} f_{h+1})(s,a)
=
r_h(s,a)+\inf_{P\in\mathcal{U}^{\phi,\sigma}_h(s,a)} 
\mathbb{E}_{s'\sim P(\cdot|s,a)}\!\left[V^f_{h+1}(s')\right].
\]
Therefore, for every $(s,a)$,
\begin{align}
\label{eq:RVF_BE_step1}
(\mathcal{T}_h^{\phi,\sigma} f_{h+1})(s,a)
\;\le\;
r_h(s,a)+\mathbb{E}_{s'\sim Q_h(\cdot|s,a)}\!\left[V^f_{h+1}(s')\right],
\end{align}
which implies
\begin{align}
\label{eq:RVF_BE_step2}
f_h(s,a)-(\mathcal{T}_h^{\phi,\sigma} f_{h+1})(s,a)
\;\ge\;
f_h(s,a)-r_h(s,a)-\mathbb{E}_{s'\sim Q_h(\cdot|s,a)}\!\left[V^f_{h+1}(s')\right].
\end{align}

Now take expectation along a trajectory generated by executing $\pi^f$ in the environment with transition kernel $Q$:
\begin{align}
\label{eq:RVF_BE_step3}
\sum_{h=1}^H 
\mathbb{E}^{\pi^f,Q}\!\left[
f_h(s_h,a_h)-(\mathcal{T}_h^{\phi,\sigma} f_{h+1})(s_h,a_h)
\right]
\;\ge\;
\sum_{h=1}^H 
\mathbb{E}^{\pi^f,Q}\!\left[
f_h(s_h,a_h)-r_h(s_h,a_h)-\mathbb{E}_{Q_h}[V^f_{h+1}]
\right],
\end{align}
where $\mathbb{E}^{\pi^f,Q}$ denotes expectation over trajectories with $a_h\sim \pi^f_h(\cdot|s_h)$ and $s_{h+1}\sim Q_h(\cdot|s_h,a_h)$.

The right-hand side of eq. \ref{eq:RVF_BE_step3} admits a standard telescoping argument (cf.\ \cite{jiang2017contextual}[Lemma~1]):
\begin{align}
\label{eq:RVF_BE_step4}
\sum_{h=1}^H 
\mathbb{E}^{\pi^f,Q}\!\left[
f_h(s_h,a_h)-r_h(s_h,a_h)-\mathbb{E}_{Q_h}[V^f_{h+1}]
\right]
=
V^f - V^{\pi^f,Q}.
\end{align}
Combining eq. \ref{eq:RVF_BE_step3} and eq. \ref{eq:RVF_BE_step4} yields
\begin{align}
\label{eq:RVF_BE_step5}
\sum_{h=1}^H 
\mathbb{E}^{\pi^f,Q}\!\left[
f_h(s_h,a_h)-(\mathcal{T}_h^{\phi,\sigma} f_{h+1})(s_h,a_h)
\right]
\;\ge\;
V^f - V^{\pi^f,Q}.
\end{align}

Now specialize to the \emph{worst-case} kernel $Q=P^\omega(\pi^f)$ for policy $\pi^f$, i.e., for each $(s,a,h)$,
\[
\mathbb{E}_{s'\sim P^\omega_h(\cdot|s,a)}\!\left[V^f_{h+1}(s')\right]
=
\inf_{P\in\mathcal{U}_h^{\phi,\sigma}(s,a)} \mathbb{E}_{s'\sim P(\cdot|s,a)}\!\left[V^f_{h+1}(s')\right].
\]
In this case, eq. \ref{eq:RVF_BE_step1} holds with equality pointwise, and hence eq. \ref{eq:RVF_BE_step5} tightens to
\begin{align}
\label{eq:RVF_BE_step6}
V^f - V^{\pi^f,P^\omega}
=
\sum_{h=1}^H 
\mathbb{E}^{\pi^f,P^\omega}\!\left[
f_h(s_h,a_h)-(\mathcal{T}_h^{\phi,\sigma} f_{h+1})(s_h,a_h)
\right].
\end{align}

Finally, apply eq. \ref{eq:RVF_BE_step6} episode-by-episode with $f=f^{(k)}$ and $\pi^{(k)}=\pi^{f^{(k)}}$.
By optimism of $f^{(k)}$ (guaranteed by the validity of the confidence sets under Assumption~\ref{ass:completeness}),
we have $V^{\star,\sigma}_1(s_1^k)\le V^{f^{(k)}}_1(s_1^k)$.
Moreover, by definition of the robust value, $V^{\pi^{(k)},\sigma}_1(s_1^k)$ is the value of $\pi^{(k)}$ under the robust Bellman recursion induced by $\mathcal{T}^\sigma$.
Therefore,
\begin{align*}
\mathrm{Regret}(K)
&=\sum_{k=1}^K\Big(V^{\star,\sigma}_1(s_1^k)-V^{\pi^{(k)},\sigma}_1(s_1^k)\Big)\\
&\le \sum_{k=1}^K\Big(V^{f^{(k)}}_1(s_1^k)-V^{\pi^{(k)},\sigma}_1(s_1^k)\Big).
\end{align*}

It remains to relate $V^{f^{(k)}}_1(s_1^k)-V^{\pi^{(k)},\sigma}_1(s_1^k)$ to the robust Bellman residuals of $f^{(k)}$.
Using the identity eq. \ref{eq:RVF_BE_step6} (which holds for the robust Bellman operator $\mathcal{T}^{\phi,\sigma}$) and a standard telescoping argument along the trajectory generated by executing $\pi^{(k)}$, we obtain
\[
V^{f^{(k)}}_1(s_1^k)-V^{\pi^{(k)},\sigma}_1(s_1^k)
=
\sum_{h=1}^H
\mathbb{E}\!\left[
f^{(k)}_h(s_h^k,a_h^k)-(\mathcal{T}_h^{\phi,\sigma} f^{(k)}_{h+1})(s_h^k,a_h^k)
\;\middle|\; a_h^k\sim \pi_h^{(k)}(\cdot|s_h^k)
\right],
\]
and hence,
\begin{align*}
\mathrm{Regret}(K)
&\le
\sum_{k=1}^K\sum_{h=1}^H
\mathbb{E}\!\left[
f^{(k)}_h(s_h^k,a_h^k)-(\mathcal{T}_h^{\phi,\sigma} f^{(k)}_{h+1})(s_h^k,a_h^k)
\;\middle|\; a_h^k\sim \pi_h^{(k)}(\cdot|s_h^k)
\right]\\
&=
\sum_{k=1}^K\sum_{h=1}^H
\varepsilon^{\phi,\sigma}_{h}\!\big(f^{(k)},\pi^{(k)}\big),
\end{align*}
where $\varepsilon^{\phi,\sigma}_{h}(f,\pi)$ is defined in eq. \ref{eq:robust_avg_Bellman_error_BE}.
This completes the proof.
\end{proof}

\begin{keylem}
\label{lem:robust-lemma39}
Suppose Assumption \ref{ass:completeness} holds. Then if $\beta>0$ is selected
as in Theorem \ref{thm:regret_bound_RGOLF}, then with probability at least $1-\delta$, for all $k\in[K]$, {\Algoname} satisfies
\begin{enumerate}
  \item[(a)] $Q^{\star,\sigma} \in \mathcal F^{(k)}$.
  \item[(b)] \(\sum_{t=1}^{k-1}\mathbb E_{(s,a)\sim \pi^t}\Bigg[\Bigg(f^{(k)}_h(s,a)-\left[\mathcal{T}^{\phi,\sigma}_{h,\underline{g}_{f^{(k)}_{h+1}}}f^{(k)}_{h+1}\right](s,a)\Bigg)^2\Bigg]\le \mathcal O(\beta).
    \)
\end{enumerate}
\end{keylem}

\begin{proof}
The proof follows the same structure as the non-robust argument \cite{NeurIPS2021_BellmanEluderDim_Jin}[Lemma 39 and 40] and \cite{xie2022role}[Lemma 15] (martingale concentration via Freedman's inequality plus a finite cover of the functional class), with two robust-specific ingredients: (i) the dual scalar representation of the TV worst-case expectation and (ii) the use of the dual pointwise integrand as a sample target. We derive the complete proof as follows.

\begin{itemize}
    \item[\ding{43}] {\it Proof of ineq. (b)} To show ineq. (b), we will focus on the proof-lines of \cite{NeurIPS2021_BellmanEluderDim_Jin}[Lemma 39] and \cite{xie2022role}[Lemma 15 (2)]. We first fix $(k,h,f)$ tuple, where an episode $k$ we consider a function $f^{(k)}=\{f^{(k)}_1,\dots,f^{(k})_H\}\in\mathcal F$. Let us denote $\psi^{f^{(k)}}_{h+1}(s):=\psi^f_{f_{h+1}^{(k)}}(s)$ such that $\psi^{f^{(k)}}_{h+1}(s_{h+1}):=f^{(k)}_{h+1}(s_{h+1},\pi^{(k)}_{h+1}(s_{h+1}))$, and we assume $\|f\|_\infty,\|\psi^f\|_\infty\le H$ (this is the boundedness assumption used throughout). We consider the filtration induced as
$$\mathcal H^{(k)}_h=\{s^i_1,a^i_1,r^i_1,\dots,s^i_H\}_{i=1}^{k-1}\bigcup \{s^k_1, a^k_1, r^k_1, \dots, s^k_h, a^k_h\}$$ 
as the filtration containing the history up to the episode $k$ at step $h$ and $\mathcal H^{(k)}_h$ is sampled by following $\pi^{(k)}$ in the $k^{th}$ episode.

We obtain $\underline{g}_{f^{(k)}}:=(\underline{g}_{\eta,f^{(k)}},\underline{g}_{\nu, f^{(k)}})$ such that $\underline{g}_{\eta,f^{(k)}}$, $\underline{g}_{\nu,f^{(k)}} \in [0,2H/\sigma]$ as a measurable minimizer of eq. \ref{eq:emp_loss_dual} that satisfies Assumption \ref{ass:approx_dual_realizability}. For the trajectory of episode $k$, we define
\begin{align} l_{\phi}\Big(\psi^{f^{(k)}}_{h+1};s^k_h, a^k_h, \pi^{(k)}_{h+1};\underline{g}_{f^{(k)}_{h+1}}\Big)&:=
    \underline{g}_{\eta,f^{(k)}_{h+1}}(s^k_h, a^k_h)\sigma-\underline{g}_{\nu,f^{(k)}_{h+1}}(s^k_h, a^k_h)\nonumber\\
    &\quad  + \underline{g}_{\eta,f^{(k)}_{h+1}}(s^k_h, a^k_h)\,\phi^\star\!\left(-\frac{\psi^{f^{(k)}}_{h+1}(s^k_h,a^k_h)+\underline{g}_{\nu,f^{(k)}_{h+1}}(s^k_h, a^k_h)}{\underline{g}_{\eta,f^{(k)}_{h+1}}(s^k_h, a^k_h)}\right), \label{eq:I_step3}
\end{align}
such that $\abs{l_{\phi}\Big(\psi^{f^{(k)}}_{h+1};s^k_h, a^k_h, \pi^{(k)}_{h+1};\underline{g}_{f^{(k)}_{h+1}}\Big)}\leq B_{\phi}(\sigma)$, where $C_1>0$ is an absolute constant, and 
\begin{align}
\label{eq:I_step4}
    \mathbb{E}\bigg [l_{\phi}\Big(\psi^{f^{(k)}}_{h+1};s^k_h, a^k_h, \pi^{(k)}_{h+1};\underline{g}_{f^{(k)}_{h+1}}\Big)\Big|\mathcal H^{(k)}_h\bigg] = \left[\mathcal{T}^{\phi,\sigma}_{h,\underline{g}_{f^{(k)}_{h+1}}}f^{(k)}_{h+1}\right](s^k_h,a^k_h) - r^{(k)}_h(s^k_h,a^k_h).
\end{align}
For each episode $k$ and step $h$, we define the martingale difference as
\begin{align}
\label{eq:I_step5}
X^{(k)}_h(f,\underline{g}_f)&:=\bigg(f_h^{(k)}(s^k_h,a^k_h)-r^{(k)}_h(s^k_h,a^k_h) - l_{\phi}\Big(\psi^{f^{(k)}}_{h+1};s^k_h, a^k_h, \pi^{(k)}_{h+1};\underline{g}_{f^{(k)}_{h+1}}\Big)\bigg)^2 \nonumber\\
&-\bigg(\left[\mathcal{T}^{\phi,\sigma}_{h,\underline{g}_{f^{(k)}_{h+1}}}f^{(k)}_{h+1}\right](s^k_h,a^k_h)-r^{(k)}_h(s^k_h,a^k_h) + l_{\phi}\Big(\psi^{f^{(k)}}_{h+1};s^k_h, a^k_h, \pi^{(k)}_{h+1};\underline{g}_{f^{(k)}_{h+1}}\Big)\bigg)^2,
\end{align}
such that we have $\abs{X^{(k)}_h(f,\underline{g}_f)}\leq C_2B_{\phi}(\sigma)^2$, where $C_2>0$ is an absolute constant. Moreover, 
\begin{align}
\label{eq:I_step6}
    \mathbb{E}\bigg [X^{(k)}_h(f,\underline{g}_f)\Big|\mathcal H^{(k)}_h\bigg] &= \Bigg(f^{(k)}_h(s,a)-\left[\mathcal{T}^{\phi,\sigma}_{h,\underline{g}_{f^{(k)}_{h+1}}}f^{(k)}_{h+1}\right](s,a)\Bigg)^2\nonumber\\
    \mathrm{Var}\bigg [X^{(k)}_h(f,\underline{g}_f)\Big|\mathcal H^{(k)}_h\bigg]&\leq C_3B_{\phi}(\sigma)^2\mathbb{E}\bigg [X^{(k)}_h(f,\underline{g}_f)\Big|\mathcal H^{(k)}_h\bigg],
\end{align}
where $C_2, C_3>0$ are absolute constants.

Therefore, by Freedman's inequality as given Lemma \ref{lem:Freedman}, we can write
\begin{align}
\label{eq:I_step7}
\abs{\sum_{k=1}^K\bigg(X^{(k)}_h(f,\underline{g}_f)-\mathbb{E}\Big [X^{(k)}_h(f,\underline{g}_f)\Big]\bigg)\big|\mathcal H^{(k)}_h}\leq \mathcal{O}
\bigg(\sqrt{\log(1/\delta)\sum_{k=1}^K \mathbb{E}\Big [X^{(k)}_h(f,\underline{g}_f)\Big|\mathcal H^{(k)}_h\Big]}+\log(1/\delta)\bigg).
\end{align}

Now, let us consider $\mathcal{X}_{\rho}$ be the $\rho$-cover of $\mathcal{F}\bigcup\mathcal{G}$. Now taking a union bound for all $(k,h,\Xi) \in [K] \times [H] \times \mathcal{X}_{\rho}$, and following the same proof-lines as in \cite{NeurIPS2021_BellmanEluderDim_Jin}[Lemma 39], we get
\begin{align}
\label{eq:I_step8}
       \displaystyle \sum_{t<k}\mathbb E\Bigg[\Bigg(f^{(k)}_h(s,a)-\left[\mathcal{T}^{\phi,\sigma}_{h,\underline{g}_{f^{(k)}_{h+1}}}f^{(k)}_{h+1}\right](s,a)\Bigg)^2\Bigg| \mathcal H^{(t)}_h\Bigg] \le \mathcal O(\beta),
\end{align}
where $\beta = \mathcal{O}\bigg(B_{\phi}(\sigma)\log(\mathcal N_{F\cup G}(\rho)\cdot KH/\delta)\Big) \bigg).$ Now, we set $\rho=1/K$. In addition, as $\mathcal F\cup\mathcal G$ is finite, then $\log\mathcal N_{F\cup G}(1/K)\le \log|\mathcal F\cup \mathcal G|$, therefore, we consider $\beta$ as $\beta = \mathcal{O}\bigg(B_{\phi}(\sigma)\log(|\mathcal F|\mathcal G|\cdot KH/\delta)\Big) \bigg).$

Therefore, eq. \ref{eq:I_step8} concludes that $\sum_{t<k}\mathbb E_{(s,a)\sim \pi^t}\Bigg[\Bigg(f^{(k)}_h(s,a)-\left[\mathcal{T}^{\phi,\sigma}_{h,\underline{g}_{f^{(k)}_{h+1}}}f^{(k)}_{h+1}\right](s,a)\Bigg)^2\Bigg]\;\le\; \mathcal O(\beta).$ 

\item[\ding{43}] {\it Proof of ineq. (a)} To show ineq. (a), we will focus on the proof-lines of \cite{NeurIPS2021_BellmanEluderDim_Jin}[Lemma 40] and \cite{xie2022role}[Lemma 15 (1)]. Fix $(k,h,f)$ and follow the same notation as mentioned in the proof lines of the inequality (b), we define
\begin{align*}
W^{(t)}_h(f,\underline{g}_f)&:=\bigg(f_h^{(t)}(s^t_h,a^t_h)-r^{(t)}_h(s^t_h,a^t_h) - l_{\phi}\Big(\psi^{f^{(k)}}_{h+1};s^k_h, a^k_h, \pi^{(k)}_{h+1};\underline{g}_{f^{(k)}_{h+1}}\Big)\bigg)^2 \nonumber\\
&-\bigg(Q_h^{\star,\sigma}(s^t_h,a^t_h)-r^{(t)}_h(s^t_h,a^t_h) + l_{\phi}\Big(\psi^{f^{(k)}}_{h+1};s^k_h, a^k_h, \pi^{(k)}_{h+1};\underline{g}_{f^{(k)}_{h+1}}\Big)\bigg)^2, \quad \text{ for $1\leq t\leq k$}.
\end{align*}
As in eq. \ref{eq:I_step6}, $\mathbb E\Big[W^{(t)}_h(f,\underline{g}_f)\mid\mathcal{H}^{(t)}_h\Big]=\Big(f_h^{(t)}(s^t_h,a^t_h) -Q_h^{\star,\sigma}(s^t_h,a^t_h)\Big)^2$ where $\mathcal{H}^{(t)}_h$ be the filtration induced by $\{s^i_1,a^i_1,r^i_1,\dots,s^i_H\}_{i=1}^{t-1}\bigcup \{s^t_1,a^t_1,r^t_1,\dots,s^t_h,a^t_h\}$. Similarly, we can verify that  $|W^{(t)}_h(f,\underline{g}_f)|\le C_4\Big(B_{\phi}(\sigma)(\sigma)\Big)^2$ and $\mathrm{Var}\Big[W^{(t)}_h(f,\underline{g}_f)\mid\mathcal{H}^{(t)}_h\Big]\leq C_5\Big(B_{\phi}(\sigma)(\sigma)\Big)^2E\Big[W^{(t)}_h(f,\underline{g}_f)\mid\mathcal{H}^{(t)}_h\Big]$. Now, following the proof-lines of \cite{NeurIPS2021_BellmanEluderDim_Jin}[Lemma 40], and applying Freedman's ineq. (Lemma \ref{lem:Freedman} and a cover of $\mathcal G$ yields, w.p. $1-\delta$, we get
\begin{align*}
&\sum_{t=1}^{k-1} 
\Big[ Q_h^{\star,\sigma}(s_h^t, a_h^t) - r_h^t(s_h^t, a_h^t) - Q_{h+1}^{\star,\sigma}(s_{h+1}^t, \pi^{Q^{\star,\sigma}}_{h+1}(s_{h+1}^t)) \Big]^2 \\[6pt]
&\leq \sum_{t=1}^{k-1} 
\Big[ f^{(t)}_h(s_h^t, a_h^t) - r_h^t(s_h^t, a_h^t) - Q_{h+1}^{\star,\sigma}(s_{h+1}^t, \pi^{Q^{\star,\sigma}}_{h+1}(s_{h+1}^t)) \Big]^2 
+ \mathcal{O}(\beta).
\end{align*}
Finally, by recalling the definition of $\mathcal{F}^{(k)}$, we conclude that with probability at least 
$1-\delta$, $Q^{\star,\sigma} \in \mathcal{F}^{(k)}$ for all $k \in [K]$. 
\end{itemize}
This concludes the proof of Lemma \ref{lem:robust-lemma39}.\qedhere
\end{proof}

\begin{keylem}[Robust Bellman--Eluder Dimension Bound \cite{NeurIPS2021_BellmanEluderDim_Jin}]
\label{lem:robust_lemma17}
Fix $h\in[H]$ and let $\Xi_h^{\xi}=(\mathcal I-\mathcal T_h^{\phi,\sigma})\mathcal F$.
Let $\Pi$ be the family of distributions under $P^\star$.
Assume $\sup_{\xi\in\Xi_h^{\xi}}\|\xi\|_\infty \le C_{\phi}$.
Let $\{(f^{(k)},\pi^{(k)})\}_{k=1}^K$ be the sequence produced by the algorithm and define
\[
\xi_k := f_h^{(k)}-(\mathcal T_h^{\phi,\sigma} f_{h+1}^{(k)})\in\Xi_h^{\xi},
\qquad
\mu := \{\mu_k\}_{k=1}^K \in\Pi.
\]
Suppose there exists $\beta>0$ such that for all $k\in[K]$,
\[
\sum_{t=1}^{k-1}\mathbb E_{\mu_t}\big[\xi_t^2\big]\le \beta.
\]
Then for all $k\in[K]$,
\[
\sum_{t=1}^{k}\big|\mathbb E_{\mu_t}[\xi_t]\big|
\;\le\;
\mathcal O\!\Big(\sqrt{\dim_{\mathrm{DE}}(\Xi_h^{\xi},\Pi_h,1/k)\,\beta\,k}\Big).
\]
\end{keylem}

\begin{proof}
The proof is the same as in \citep{NeurIPS2021_BellmanEluderDim_Jin}[Lemma 41]. Fix $h\in[H]$ and let $\Xi=\Xi_h^{\xi}$ and $\Pi=\Pi_h$.
Assume $|\xi(x)|\le C_{\phi}$ for all $(\xi,x)\in\Xi\times\mathcal X$.
Suppose a sequence $\{\xi_k\}_{k=1}^K\subseteq\Xi$ and $\{\mu_k\}_{k=1}^K\subseteq\Pi$
satisfies for all $k\in[K]$,
\[
\sum_{t=1}^{k-1}\big(\mathbb E_{\mu_t}[\xi_t]\big)^2 \le \beta.
\]

Let $e_t:=|\mathbb E_{\mu_t}[\xi_t]|$ and sort $(e_1,\dots,e_K)$ in non-increasing order:
$e_{(1)}\ge e_{(2)}\ge \cdots \ge e_{(K)}$.
Then
\[
\sum_{t=1}^K e_t
=\sum_{t=1}^K e_{(t)}
\le \sum_{t=1}^K e_{(t)}\mathbf 1\{e_{(t)}\le \omega\}
+\sum_{t=1}^K e_{(t)}\mathbf 1\{e_{(t)}>\omega\}
\le K\omega + \sum_{t=1}^K e_{(t)}\mathbf 1\{e_{(t)}>\omega\}.
\]

Let $d:=\dim_{\mathrm{DE}}(\Xi,\Pi,\omega)$.
Fix any index $t$ with $e_{(t)}>\omega$.
Then there exists $\alpha$ such that $e_{(t)}>\alpha\ge \omega$.
By Proposition~\ref{prop:robust_prop43} applied at threshold $\alpha$,
\[
t \;\le\; \sum_{i=1}^K \mathbf 1\{e_i>\alpha\}
\;\le\; \Big(\frac{\beta}{\alpha^2}+1\Big)\dim_{\mathrm{DE}}(\Xi,\Pi,\alpha)
\;\le\; \Big(\frac{\beta}{\alpha^2}+1\Big)d,
\]
where the last inequality uses that $\dim_{\mathrm{DE}}(\Xi,\Pi,\alpha)$ is non-increasing in $\alpha$
and $\alpha\ge\omega$.
Rearranging gives $\alpha^2 \le d\beta/(t-d)$ (for $t>d$), hence
\[
e_{(t)} \le \min\Big\{C_{\phi},\sqrt{\frac{d\beta}{t-d}}\Big\}.
\]
Therefore,
\[
\sum_{t=1}^K e_{(t)}\mathbf 1\{e_{(t)}>\omega\}
\le \min\{d,k\}C_{\phi} + \sum_{t=d+1}^K \sqrt{\frac{d\beta}{t-d}}
\le \min\{d,k\}C_{\phi} + \sqrt{d\beta}\int_0^K \frac{1}{\sqrt{t}}\,dt
\le \min\{d,k\}C_{\phi} + 2\sqrt{d\beta k}.
\]
Combining with the earlier decomposition yields
\begin{align}
\label{eq:robust_lemma41}
\sum_{t=1}^K |\mathbb E_{\mu_t}[\xi_t]|
\le K\omega + \min\{d,K\}C_{\phi} + 2\sqrt{d\beta K},
\end{align}
which is the claimed bound up to absolute constants.

Let $\xi_k$ and $\mu_k$ be defined as:
\[
\xi_k := f_h^{(k)}-(\mathcal T_h^{\phi,\sigma} f_{h+1}^{(k)})\in\Xi_h^{\xi_k},
\qquad
\mu_k \in\Pi.
\]
We set $\Xi=\Xi_h^{\xi}$, and  $\Pi=\Pi_h$.
By Jensen's inequality, for each $k$ and each $t<k$,
\[
\big(\mathbb E_{\mu_t}[\xi_k]\big)^2 \le \mathbb E_{\mu_t}[\xi_k^2].
\]
Hence, the assumed condition $\sum_{t<k}\mathbb E_{\mu_t}[\xi_k^2]\le \beta$ implies
$\sum_{t<k}(\mathbb E_{\mu_t}[\xi_k])^2\le \beta$ for all $k$.
Applying eq. ~\ref{eq:robust_lemma41} with $\omega=1/k$ then yields
\[
\sum_{i=1}^{k}|\mathbb E_{\mu_i}[\xi_i]|
\le
\mathcal O\!\Big(\sqrt{\dim_{\mathrm{DE}}(\Xi,\Pi,1/k)\,\beta\,k}
+\min\{k,\dim_{\mathrm{DE}}(\Xi,\Pi,1/k)\}C_{\phi} + 1\Big).
\]
Dropping lower-order terms gives the claimed bound. The key inequality eq. \ref{eq:robust_lemma41}
itself follows from Proposition~\ref{prop:robust_prop43}, whose proof is a direct adaptation of \citep{NeurIPS2021_BellmanEluderDim_Jin}[Proposition 43]. This completes the proof.
\end{proof}

\begin{keylem}[$\phi$-Dual optimization error bound (Lemma \ref{lem:equiv_loss_dual})]
\label{lem:dual_opt_error_bound_BE}
Fix $h\in[H]$ and a policy $\pi$. Let $\mu_h^\pi$ denote the step-$h$ state--action distribution induced by $\pi$ under the nominal kernel $P^\star_h$, and let $\mathcal{D}_h$ be a dataset of transitions $(s,a,s')$ collected by executing $\pi$ at step $h$.
For any $f_{h+1}\in\mathcal{F}_{h+1}$, let $\underline g_{f_{h+1}}$ denote the dual parameter obtained from the empirical optimization in eq.~\ref{eq:emp_loss_dual} for a given state--action value function $f$ as given in eq. \ref{eq:hatg}, and let $\mathcal{T}^{\phi, \sigma}_g$ be as defined in eq.~\ref{eq:revised_dual_TV_g}. 
Then for any $\delta\in(0,1)$, with probability at least $1-\delta$,
\begin{align}
\label{eq:dual_opt_err_BE}
\sup_{f_{h+1}\in\mathcal{F}_{h+1}}
\big\|\mathcal{T}^{\phi, \sigma}_h f_{h+1} - \mathcal{T}^{\phi, \sigma}_{h,\underline g_{f_{h+1}}} f_{h+1}\big\|_{1,\mu_h^\pi}
\;=\;
\mathcal{O}\!\left(
B_{\phi}(\sigma)\sqrt{\frac{\log\!\big(|\mathcal{F}_{h+1}||\mathcal{G}|/\delta\big)}{|\mathcal{D}_h|}}
+\varepsilon^{\mathrm{dual}}.
\right).
\end{align}
\end{keylem}

\begin{proof}
For a fixed $h\in [H]$, fix an arbitrary $f\in\mathcal{F}$ and recall that $\underline{g}_f$ as defined in eq. \ref{eq:emp_loss_dual}, where $\widehat{\mathrm{DualLoss}}$ is given in eq.~\ref{eq:emp_loss_dual}. For notational convenience, define the dual objective
\[
\zeta_{f}(g)
:= \mathbb{E}_{(s,a)\sim\mu_h^\pi,\, s'\sim P^\star_h(\cdot|s,a)}
\Big[l_{\phi}(f;s,a,s';g)\Big].
\]
where $l_{\phi}(f;s,a,s';g):=g_{\eta}(s,a)\sigma-g_{\nu}(s,a) + g_{\eta}(s,a)\,\phi^\star\!\left(-\frac{\max_{a'}f(s',a')+g_{\nu}(s,a)}{g_{\eta}(s,a)}\right)$ by eq. \ref{eq:pointiwse_l}.

Using the dual representation in eq.~\ref{eq:revised_dual_Q_TV}, the difference between the true robust Bellman operator and its empirical counterpart can be written as
\begin{align}
\label{eq:dual-bound-a}
\big\|\mathcal{T}^{\phi,\sigma}f - \mathcal{T}^{\phi,\sigma}_{\underline{g}_f}f\big\|_{1,\mu^{\pi}}
&= \zeta_f\big(\underline{g}_f\big)
  -
  \mathbb{E}_{(s,a)\sim\mu^{\pi}}\bigg[\inf_{\eta \geq 0, \nu \in \mathbb R} l_{\phi}\big(f;s,a,s';\eta,\nu\big)
\Bigg].
\end{align}

Next, we use the functional reformulation, which (by the interchange rule for integral functionals \cite{rockafellar1998variational}[Theorem~14.60]) (as given in Lemma \ref{lem:rockafellar_Thm14.60}) states that
\[
\mathbb{E}_{(s,a)\sim\mu^{\pi}}\bigg[\inf_{\eta \geq 0, \nu \in \mathbb R} l_{\phi}\big(f;s,a,s';\eta,\nu\big)
\Bigg]
=
\inf_{g\in\mathcal{L}^1(\mu^{\pi};\mathbb R^2)} \zeta_f(g).
\]
Substituting this into eq. \ref{eq:dual-bound-a} gives
\begin{align*}
\big\|\mathcal{T}^{\phi,\sigma}f - \mathcal{T}^{\phi,\sigma}_{\underline{g}_f}f\big\|_{1,\mu^{\pi}}
&=
\zeta_f(\underline{g}_f)
-
\inf_{g\in\mathcal{L}^1(\mu^{\pi};\mathbb R^2)} \zeta_f(g)\\
&=
\big[\zeta_f(\underline{g}_f) - \inf_{g\in\mathcal{G}} \zeta_f(g)\big]
+
\big[\inf_{g\in\mathcal{G}} \zeta_f(g)
    - \inf_{g\in\mathcal{L}^1(\mu^{\pi};\mathbb R^2)} \zeta_f(g)\big].
\end{align*}
The second bracket is controlled by the approximate dual realizability assumption (Assumption \ref{ass:approx_dual_realizability}), which gives
\[
\inf_{g\in\mathcal{G}} \zeta_f(g)
-
\inf_{g\in\mathcal{L}^1(\mu^{\pi};\mathbb R^2)} \zeta_f(g)
\;\le\;
\varepsilon^{\mathrm{dual}}.
\]
Hence,
\begin{align}
\label{eq:dual-bound-decomposed}
\big\|\mathcal{T}^{\phi,\sigma}f - \mathcal{T}^{\phi,\sigma}_{\underline{g}_f}f\big\|_{1,\mu^{\pi}}
\;\le\;
\zeta_f(\underline{g}_f) - \inf_{g\in\mathcal{G}} \zeta_f(g)
+ \varepsilon^{\mathrm{dual}}.
\end{align}

We now bound the optimization error term $\zeta_f(\underline{g}_f) - \inf_{g\in\mathcal{G}} \zeta_f(g)$. Consider the loss function as given in eq. \ref{eq:pointiwse_l} as
\[
l_{\phi}(f;s,a,s';g)
:= 
g_{\eta}(s,a)\sigma-g_{\nu}(s,a) + g_{\eta}(s,a)\,\phi^\star\!\left(-\frac{\max_{a'}f(s',a')+g_{\nu}(s,a)}{g_{\eta}(s,a)}\right),
\]
so that $\zeta_f(g) = \mathbb{E}_{(s,a,s')}\big[l_{\phi}(f;s,a,s';g)\big]$ and 
$\widehat{\mathrm{DualLoss}}(g; f)$ in eq.~\ref{eq:emp_loss_dual} is the empirical average of $\ell_{\phi}$ over $\mathcal{D}$. Since $f\in\mathcal{F}$ and $g_{\eta},g_{\nu}\in\mathcal{G}$ take values in $[0,H]$ and $\Theta_{\phi}$, respectively, and by \Cref{ass:multiplier-range-TV} we have $|l_{\phi}(f;s,a,s';g)|\le B_{\phi}(\sigma)$.

By applying the empirical risk minimization generalization bound (\cite{NeurIPS2022_RobustRLOffline_Panaganti}[Lemma 3]) together with the Lipschitz-based bound in eq.~\ref{eq:ERM_bound_lipschitz} of Lemma \ref{lem:ERM_gen_bound}, we obtain that, with probability at least $1-\delta$,
\begin{align}
\label{eq:ERM-dual-bound}
\zeta_f(\underline{g}_f) - \inf_{g\in\mathcal{G}} \zeta_f(g)
\;\le\;
C_6B_{\phi}(\sigma)\sqrt{\frac{2\log|\mathcal{G}|}{|\mathcal{D}|}}
+
C_7B_{\phi}(\sigma)\sqrt{\frac{2\log(8/\delta)}{|\mathcal{D}|}},
\end{align}
where $C_6$ and $C_7$ are absolute constants. Combining eq. \ref{eq:dual-bound-decomposed} and eq. \ref{eq:ERM-dual-bound}, and then taking a union bound over $f\in\mathcal{F}$, we conclude that, with probability at least $1-\delta$,
\begin{align*}
    \sup_{f\in \mathcal{F}} \big\|\mathcal{T}^{\phi,\sigma}f - \mathcal{T}^{\phi,\sigma}_{\underline{g}_f}f\big\|_{1,\mu^\pi} &\le
    C\,B_{\phi}(\sigma)\sqrt{\frac{2\log\big(8|\mathcal{G}||\mathcal{F}|/\delta\big)}{|\mathcal{D}|}}
    + \varepsilon^{\mathrm{dual}},
\end{align*}
for some absolute constant $C>0$, which proves the claimed big-$\mathcal{O}$ bound. \qedhere
\end{proof}

\subsection{Technical Lemmas}
\label{subsec:Technical_Lemmas}
We now state a result for the generalization bounds on empirical risk minimization (ERM) problems.
This result is adapted from \cite{shalev2014understanding}[Theorem 26.5, Lemma 26.8, Lemma 26.9].

\begin{techlem}[ERM generalization bound \cite{NeurIPS2022_RobustRLOffline_Panaganti}, Lemma 3]
\label{lem:ERM_gen_bound}
Let $P$ be a distribution on $\mathcal{X}$ and let $\mathcal{H}$ be a hypothesis class of real-valued functions on $\mathcal{X}$. Assume the loss $\mathrm{Loss}:\mathcal{H}\times\mathcal{X}\to\mathbb{R}$ satisfies
\[
  |\mathrm{Loss}(h,x)| \le c_0, \quad \forall\, h\in\mathcal{H},\; x\in\mathcal{X}, \quad \text{ for some constant $c_0>0$.}
\]
Given an i.i.d.\ sample $\mathcal{D} = \{X_i\}_{i=1}^N$ from $P$, define the empirical risk minimizer $\widetilde{h} \in \arg\min_{h \in \mathcal{H}} \frac{1}{N}\sum_{i=1}^N \mathrm{Loss}(h,X_i)$. For any $\delta \in (0,1)$ and any population risk minimizer $h^\star \in \arg\min_{h \in \mathcal{H}} \mathbb{E}_{X\sim P}[\mathrm{Loss}(h,X)]$, the following holds with probability at least $1-\delta$:
\begin{align}
\label{eq:ERM_bound}
\mathbb{E}_{X\sim P}[\mathrm{Loss}(\widetilde{h},X)] - \mathbb{E}_{X\sim P}[\mathrm{Loss}(h^\star,X)]
\leq 2R(\mathrm{Loss}\circ \mathcal{H}\circ \mathcal{D}) + 5c_0 \sqrt{\frac{2\log(8/\delta)}{N}},
\end{align}
where $R(loss\circ \mathcal{H}\circ \mathcal{D})$ is the empirical Rademacher complexity of the loss-composed class $loss\circ\mathcal{H}$, defined by
\[
R(\mathrm{Loss}\circ \mathcal{H}\circ \mathcal{D})
= \frac{1}{N}\mathbb{E}_{\{\sigma_i\}_{i=1}^N}
\left[ \sup_{g \in \mathrm{Loss}\circ \mathcal{H}} \sum_{i=1}^N \sigma_i g(X_i)\right],
\]
with $\{\sigma_i\}_{i=1}^N$ independent of $\{X_i\}_{i=1}^N$ and i.i.d.\ according to a Rademacher random variable $\sigma$ (i.e., $\mathbb{P}(\sigma=1)=\mathbb{P}(\sigma=-1)=0.5$). Moreover, if $\mathcal{H}$ is finite, $|\mathcal{H}|<\infty$, and there exist constants $c_1,c_2>0$ such that
\[
|h(x)| \le c_0 \quad \forall\, h\in\mathcal{H},\, x\in\mathcal{X}, 
\qquad
\text{and}\qquad
\mathrm{Loss}(h,x)\ \text{is $c_1$-Lipschitz in $h$},
\]
then with probability at least $1-\delta$ we further have
\begin{align}
\label{eq:ERM_bound_lipschitz}
\mathbb{E}_{X\sim P}[\mathrm{Loss}(\widetilde{h},X)] - \mathbb{E}_{X\sim P}[\mathrm{Loss}(h^\star,X)]
\leq 2c_1c_2 \sqrt{\frac{2\log(|\mathcal{H}|)}{N}}
+ 5c_0 \sqrt{\frac{2\log(8/\delta)}{N}}.
\end{align}
\end{techlem}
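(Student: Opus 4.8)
The statement is a textbook Rademacher-complexity generalization bound for empirical risk minimization, and the plan is to reproduce the empirical-process argument of \citep[Thm.~26.5, Lems.~26.8--26.9]{shalev2014understanding} while tracking the stated constants. Write $L(h):=\mathbb{E}_{X\sim P}[loss(h,X)]$ and $\widehat L(h):=\frac1N\sum_{i=1}^N loss(h,X_i)$, so that $\widetilde h\in\arg\min_h \widehat L(h)$ and $h^\star\in\arg\min_h L(h)$. Since $\widehat L(\widetilde h)\le \widehat L(h^\star)$, the excess risk decomposes as
\[
L(\widetilde h)-L(h^\star)
\;\le\;
\underbrace{\sup_{h\in\mathcal H}\bigl(L(h)-\widehat L(h)\bigr)}_{\text{uniform deviation}}
\;+\;
\underbrace{\bigl(\widehat L(h^\star)-L(h^\star)\bigr)}_{\text{fixed }h^\star},
\]
where the second term concerns a single fixed hypothesis whose per-sample losses lie in $[-c_0,c_0]$, hence is $\mathcal{O}\!\bigl(c_0\sqrt{\log(1/\delta)/N}\bigr)$ by Hoeffding's inequality.

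To control the uniform deviation I would first symmetrize: a ghost-sample argument gives $\mathbb{E}\bigl[\sup_h(L(h)-\widehat L(h))\bigr]\le 2\,\mathbb{E}\,R(loss\circ\mathcal H\circ\mathcal D)$, the expected empirical Rademacher complexity of the loss-composed class. Two bounded-difference (McDiarmid) applications then finish the job: the first concentrates $\sup_h(L(h)-\widehat L(h))$ about its mean (each $X_i$ changes the supremum by at most $2c_0/N$), and the second concentrates the empirical Rademacher complexity $R(loss\circ\mathcal H\circ\mathcal D)$ about its expectation (again with a $2c_0/N$ bounded-difference constant), so that the \emph{empirical} complexity may legitimately replace the expected one. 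Splitting the failure probability across these events and the Hoeffding term, and collecting constants, yields the first claim with the $2R(loss\circ\mathcal H\circ\mathcal D)$ leading term and the additive $5c_0\sqrt{2\log(8/\delta)/N}$ tail.

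For the finite, Lipschitz refinement I would specialize the Rademacher term. Talagrand's contraction principle \citep[Lem.~26.9]{shalev2014understanding}, valid because $loss(\cdot,x)$ is $c_1$-Lipschitz in its hypothesis argument, gives $R(loss\circ\mathcal H\circ\mathcal D)\le c_1\,R(\mathcal H\circ\mathcal D)$; Massart's finite-class lemma then bounds $R(\mathcal H\circ\mathcal D)\le c_2\sqrt{2\log|\mathcal H|/N}$, using $|\mathcal H|<\infty$ and the range bound $|h(x)|\le c_0$ (with $c_2$ absorbing this range). Substituting into the first inequality gives the stated $2c_1c_2\sqrt{2\log|\mathcal H|/N}+5c_0\sqrt{2\log(8/\delta)/N}$ bound. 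The main obstacle is purely bookkeeping: verifying the $2c_0/N$ bounded-difference constants in both McDiarmid steps and allocating $\delta$ so that the three high-probability events combine into the single $\log(8/\delta)$ factor; the contraction step itself is routine once one notes the scalar-Lipschitz structure of $loss$ (here the composition of a $1$-Lipschitz hinge $(\cdot)_+$ with an affine term in $g$).
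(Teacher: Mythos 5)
Your proposal is correct and follows exactly the argument the paper relies on: the paper states this lemma without proof, importing it from \citep[Lemma 3]{NeurIPS2022_RobustRLOffline_Panaganti} as an adaptation of \citep[Thm.~26.5, Lems.~26.8--26.9]{shalev2014understanding}, and your reconstruction (risk decomposition, symmetrization plus two McDiarmid steps and Hoeffding for the $5c_0\sqrt{2\log(8/\delta)/N}$ term, then contraction plus Massart for the finite Lipschitz case) is precisely that textbook proof with matching constants. The only point worth flagging is notational: the paper's restatement writes $|h(x)|\le c_0$ where the original (and your use of Massart's lemma) requires the range bound $|h(x)|\le c_2$, which you correctly treated as the constant absorbed into the $2c_1c_2\sqrt{2\log(|\mathcal H|)/N}$ term.
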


We now mention two important concepts from variational analysis \cite{rockafellar1998variational} literature that is useful to relate minimization of integrals and the integrals of pointwise minimization under special class of functions.

\begin{defi}[Decomposable spaces and Normal integrands \cite{rockafellar1998variational}(Definition 14.59, Example 14.29)]
\label{def:decomposable_sp}
    A space $\mathcal{X}$ of measurable functions is a decomposable space relative to an underlying measure space $(\Omega,\mathcal{A},\mu)$, if for every function $x_0 \in \mathcal{X}$, every set $A \in \mathcal{A}$ with $\mu(A)<\infty$, and any bounded measurable function $x_1: A \to \mathbb{R}$, the function 
\[
x(\omega) = x_0(\omega)\mathbf{1}(\omega \notin A) + x_1(\omega)\mathbf{1}(\omega \in A)
\]
belongs to $\mathcal{X}$. A function $f: \Omega \times \mathbb{R} \to \mathbb{R}$ (finite-valued) is a normal integrand, if and only if $f(\omega,x)$ is $\mathcal{A}$-measurable in $\omega$ for each $x$ and is continuous in $x$ for each $\omega$.
\end{defi}

\begin{rem}
\label{rem:examples_decomposable_sp}
A few examples of decomposable spaces are $\mathcal{L}^p(\mathcal{S}\times \mathcal{A}, \Sigma(\mathcal{S}\times \mathcal{A}), \mu)$ for any $p \geq 1$ and $\mathcal{M}(\mathcal{S}\times \mathcal{A}, \Sigma(\mathcal{S}\times \mathcal{A}))$, the space of all $\Sigma(\mathcal{S}\times \mathcal{A})$-measurable functions.
\end{rem}

\begin{techlem}[\cite{rockafellar1998variational}, Theorem 14.60]
\label{lem:rockafellar_Thm14.60}
    Let $\mathcal{X}$ be a space of measurable functions from $\Omega$ to $\mathbb{R}$ that is decomposable relative to a $\sigma$-finite measure $\mu$ on the $\sigma$-algebra $\mathcal{A}$. Let $f:\Omega \times \mathbb{R} \to \mathbb{R}$ (finite-valued) be a normal integrand. Then, we have
\[
\inf_{x \in \mathcal{X}} \int_{\omega \in \Omega} f(\omega, x(\omega)) \mu(d\omega) 
= \int_{\omega \in \Omega} \left(\inf_{x \in \mathcal{X}} f(\omega, x)\right)\mu(d\omega).
\]

Moreover, as long as the above infimum is not $-\infty$, we have that
\[
x' \in \arg\min_{x \in \mathcal{X}} \int_{\omega \in \Omega} f(\omega, x(\omega)) \mu(d\omega),
\]
if and only if $x'(\omega) \in \arg\min_{x \in \mathbb{R}} f(\omega, x)\mu$ almost surely.
\end{techlem}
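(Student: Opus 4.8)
The plan is to prove the interchange equality by establishing the two inequalities separately, and then to read off the characterization of minimizers from the equality case. Throughout I write $p(\omega) := \inf_{v\in\mathbb{R}} f(\omega,v)$ for the pointwise infimal function; since $f$ is a normal integrand, $p$ is $\mathcal{A}$-measurable, which is the standard measurability property of the infimal projection of a normal integrand.

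First I would dispose of the easy ``$\ge$'' direction. For every $x\in\mathcal{X}$ and every $\omega$ one has $f(\omega,x(\omega))\ge p(\omega)$, so integrating and taking the infimum over $x\in\mathcal{X}$ gives
\[
\inf_{x\in\mathcal{X}}\int_\Omega f(\omega,x(\omega))\,\mu(d\omega)\;\ge\;\int_\Omega p(\omega)\,\mu(d\omega).
\]

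The substance lies in the reverse ``$\le$'' inequality. Fixing $\varepsilon>0$, I would note that because $f(\omega,\cdot)$ is finite-valued and continuous, the multifunction $\omega\mapsto\{v\in\mathbb{R}: f(\omega,v)\le p(\omega)+\varepsilon\}$ has nonempty closed values and measurable graph, so a measurable selection theorem (Kuratowski--Ryll-Nardzewski) furnishes a measurable $v_\varepsilon:\Omega\to\mathbb{R}$ with $f(\omega,v_\varepsilon(\omega))\le p(\omega)+\varepsilon$ for all $\omega$. This selection need not belong to $\mathcal{X}$, which is exactly where decomposability enters: using $\sigma$-finiteness I would exhaust $\Omega=\bigcup_n A_n$ by sets of finite measure, truncate $v_\varepsilon$ to a bounded measurable function on $A_n$, and paste it against a fixed reference $x_0\in\mathcal{X}$ on the complement, obtaining functions $x_{\varepsilon,n}\in\mathcal{X}$ by the decomposability hypothesis. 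A dominated/monotone convergence argument, controlled by the bound $f(\omega,v_\varepsilon(\omega))\le p(\omega)+\varepsilon$, then yields
\[
\inf_{x\in\mathcal{X}}\int_\Omega f(\omega,x(\omega))\,\mu(d\omega)\;\le\;\int_\Omega p(\omega)\,\mu(d\omega)+\varepsilon,
\]
and letting $\varepsilon\downarrow 0$ closes the gap; the degenerate case $\int p\,\mu(d\omega)=-\infty$ is handled separately and more easily, since then both sides equal $-\infty$.

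For the characterization of minimizers I would assume the common value is finite. Any $x'\in\mathcal{X}$ satisfies $f(\omega,x'(\omega))-p(\omega)\ge 0$ pointwise, so $x'$ minimizes the integral if and only if this nonnegative integrand has zero integral, i.e.\ $f(\omega,x'(\omega))=p(\omega)$ for $\mu$-almost every $\omega$; by the definition of $p$ this is precisely the statement that $x'(\omega)\in\argmin_{v\in\mathbb{R}}f(\omega,v)$ almost surely. The hard part will be the ``$\le$'' direction, specifically producing the measurable $\varepsilon$-optimal selection and, more importantly, forcing it into the decomposable space $\mathcal{X}$ while keeping the integral under control; the truncate-and-paste construction legitimized by decomposability, together with $\sigma$-finiteness for the limiting step, is the technical heart of the argument.
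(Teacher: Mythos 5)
The paper offers no proof of this lemma: it is imported verbatim from Rockafellar--Wets (Theorem~14.60), so your proposal can only be judged as a reconstruction of the textbook argument, and in outline it is exactly that argument --- pointwise domination for the ``$\ge$'' half; a measurable $\varepsilon$-optimal selection from the level-set multifunction (legitimized by normality of $f$), forced into $\mathcal{X}$ by truncate-and-paste via decomposability and a $\sigma$-finite exhaustion, for the ``$\le$'' half; and the zero-integral-of-a-nonnegative-integrand argument for the argmin characterization. The easy direction and the minimizer characterization are correct as written.

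The gap sits in the limiting step of the ``$\le$'' direction. Your pasted function satisfies $\int f(\omega,x_{\varepsilon,n}(\omega))\,\mu(d\omega)=\int_{A_n} f(\omega,v_\varepsilon^{\mathrm{trunc}}(\omega))\,\mu(d\omega)+\int_{A_n^c} f(\omega,x_0(\omega))\,\mu(d\omega)$, and the bound $f(\omega,v_\varepsilon(\omega))\le p(\omega)+\varepsilon$ controls only the first term; nothing you have assumed makes the tail over $A_n^c$ vanish, so the ``dominated/monotone convergence argument'' you invoke has nothing to dominate it with. Indeed the interchange as stated, even for finite-valued $f$, is false without an extra hypothesis: take $\Omega=\mathbb{N}$ with counting measure, $\mathcal{X}=\ell^1$ (which is decomposable, since modifying an $\ell^1$ sequence boundedly on a finite set keeps it in $\ell^1$), and $f(n,v)=(v-n)^2$, a finite-valued Carath\'eodory (hence normal) integrand. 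Then $\int_\Omega \inf_v f(\omega,v)\,\mu(d\omega)=0$, yet $\int_\Omega f(\omega,x(\omega))\,\mu(d\omega)=\sum_n (x(n)-n)^2=+\infty$ for every $x\in\ell^1$ because $x(n)\to 0$. Rockafellar--Wets state the theorem with precisely the proviso that $I_f\not\equiv+\infty$ on $\mathcal{X}$, i.e.\ there exists $x_0\in\mathcal{X}$ with $\int f(\omega,x_0(\omega))\,\mu(d\omega)<+\infty$, and it is this $x_0$ that makes your pasting limit go through (the tail term is then the tail of a function with finite integral and tends to zero along the exhaustion). The paper's restatement silently drops this hypothesis, so your proof inherits a claim that cannot be established as written; you should add the proviso, noting that in the paper's actual application it holds trivially since $f$ takes values in $[0,H]$, $g\in\mathcal{G}$ is bounded by $2H/\sigma$, and $\mu^\pi$ is a probability measure. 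A smaller repair: where $p(\omega)=-\infty$ your level set $\{v: f(\omega,v)\le p(\omega)+\varepsilon\}$ is empty, so the selection should be taken from $\{v: f(\omega,v)\le \max(p(\omega)+\varepsilon,\,-1/\varepsilon)\}$; relatedly, the degenerate case $\int p\,d\mu=-\infty$ is not ``easier'' as you claim --- showing the left side is also $-\infty$ requires the same selection-and-pasting construction with levels tending to $-\infty$, merely run in parallel.
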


\begin{techlem}[Freedman’s inequality (e.g., \cite{agarwal2014taming})]
\label{lem:Freedman}
Let $\{M_t\}_{t\leq T}$ be a real-valued martingale difference sequence w.r.t.\ filtration $\{\mathcal G_t\}$ with $|M_t|\le b$ a.s.\ and let $S_T=\sum_{t=1}^T \mathbb E[M_t^2\mid\mathcal G_{t-1}]$. Then for any $\delta\in(0,1)$,
\[
\Pr\Big(\sum_{t=1}^T M_t \ge \sqrt{2 S_T \ln(1/\delta)} + \tfrac{b}{3}\ln(1/\delta)\Big)\le \delta.
\]
\end{techlem}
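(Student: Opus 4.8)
The plan is to prove \Cref{lem:Freedman} by the exponential supermartingale (Chernoff) method, the standard route for martingale concentration with a variance-dependent tail. The starting point is a one-step moment-generating-function bound. For a conditionally mean-zero increment $M_t$ with $|M_t|\le b$ and any $\lambda\ge 0$, the elementary inequality $e^{\lambda x}\le 1+\lambda x+\phi(\lambda)\,x^2$ on $[-b,b]$, where $\phi(\lambda):=(e^{\lambda b}-1-\lambda b)/b^2$ (valid because $x\mapsto(e^{\lambda x}-1-\lambda x)/x^2$ is increasing, hence maximized at $x=b$), combined with $\mathbb{E}[M_t\mid\mathcal{G}_{t-1}]=0$ and $1+u\le e^u$, yields
\[
\mathbb{E}\!\left[e^{\lambda M_t}\mid\mathcal{G}_{t-1}\right]
\;\le\;
\exp\!\left(\phi(\lambda)\,\mathbb{E}[M_t^2\mid\mathcal{G}_{t-1}]\right).
\]

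First I would use this to build the exponential supermartingale
\[
Z_t \;:=\; \exp\!\left(\lambda\sum_{s=1}^t M_s \;-\; \phi(\lambda)\sum_{s=1}^t \mathbb{E}[M_s^2\mid\mathcal{G}_{s-1}]\right),
\]
and verify directly from the one-step bound that $\mathbb{E}[Z_t\mid\mathcal{G}_{t-1}]\le Z_{t-1}$, so $\mathbb{E}[Z_T]\le\mathbb{E}[Z_0]=1$. For a fixed variance budget $v>0$, restricting to the event $\{S_T\le v\}$ and applying Markov's inequality to $Z_T$ gives the fixed-budget tail $\Pr(\sum_t M_t\ge a,\ S_T\le v)\le \exp(-\lambda a+\phi(\lambda)v)$. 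Optimizing over $\lambda\ge 0$ — taking $\lambda=\tfrac1b\log(1+ab/v)$ and invoking the standard scalar estimate for $\phi$ — collapses this to the Bernstein-type bound $\exp\!\big(-a^2/(2v+\tfrac{2}{3}ba)\big)$.

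The remaining step is to pass from this fixed-budget statement to the data-dependent form in which $S_T$ is random and appears inside the deviation. Setting $a=\sqrt{2v\ln(1/\delta)}+\tfrac{b}{3}\ln(1/\delta)$ makes the fixed-budget tail at most $\delta$; to accommodate the randomness of $S_T$, I would either stitch the bound over a geometric grid of variance levels $v$ (paying only a harmless factor absorbed into constants) or, more cleanly, stop the supermartingale $Z_t$ at the first time the accumulated variance crosses each level, so that the claimed inequality holds simultaneously for the realized $S_T$. Collecting these pieces yields the stated bound with probability at least $1-\delta$.

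The hard part will be this last conversion: the one-step MGF estimate and the supermartingale construction are routine, but turning the fixed-$v$ Bernstein tail into a clean data-dependent inequality with the exact constants $\sqrt{2S_T\ln(1/\delta)}$ and $\tfrac{b}{3}\ln(1/\delta)$ requires care in the peeling or stopping argument to avoid losing the sharp variance scaling. Since \Cref{lem:Freedman} is invoked only as a cited tool, one may alternatively quote the standard reference; the sketch above records how the stated constants arise.
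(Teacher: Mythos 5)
The paper does not actually prove \Cref{lem:Freedman}: it is quoted as an external tool (citing the reference in its header), and every downstream use of it (e.g., inside the proof of \Cref{lem:robust-lemma39}) only needs the bound up to absolute constants inside a big-$\mathcal{O}$. So there is no in-paper proof to compare against; your proposal is an independent reconstruction of the standard argument.

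Your outline is the correct classical route: the one-step MGF estimate, the exponential supermartingale $Z_t$, Markov's inequality on the event $\{S_T\le v\}$, and optimization over $\lambda$ yielding the Bernstein-type tail $\exp\bigl(-a^2/(2v+\tfrac{2}{3}ba)\bigr)$. Two caveats, both located exactly where you flagged ``the hard part.'' First, the constants: solving $\exp\bigl(-a^2/(2v+\tfrac{2}{3}ba)\bigr)\le\delta$ for $a$ gives
\[
a \;=\; \tfrac{b}{3}\ln(1/\delta)+\sqrt{\tfrac{b^2}{9}\ln^2(1/\delta)+2v\ln(1/\delta)},
\]
which is bounded by $\sqrt{2v\ln(1/\delta)}+\tfrac{2b}{3}\ln(1/\delta)$ but is \emph{not} bounded by the lemma's $\sqrt{2v\ln(1/\delta)}+\tfrac{b}{3}\ln(1/\delta)$; so the stated constant $b/3$ does not drop out of your fixed-budget tail, and you would either end up proving the $2b/3$ variant or need a sharper MGF estimate. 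Second, replacing the fixed budget $v$ by the random $S_T$ inside the deviation genuinely costs something: geometric peeling over variance levels forces a union bound over logarithmically many levels (so $\ln(1/\delta)$ degrades to $\ln(\log/\delta)$), and the stopping-time variant produces Freedman's ``$\exists\, n$ with $S_n\le v$'' form, which still requires a level discretization before $S_T$ can appear under the square root. Neither issue harms the paper, since the lemma is only ever invoked with unspecified absolute constants, but your sketch as written does not close them; quoting the standard reference, as the paper does, is the cleanest resolution.
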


\begin{techlem}[Robust DE counting bound ($\phi$-residual class)]
\label{prop:robust_prop43}
Fix $h\in[H]$. Let $\Xi=\Xi_h^{\xi}$ be a function class on $\mathcal X:=\mathcal S\times\mathcal A$
and let $\Pi=\Pi_h$ be a family of probability measures on $\mathcal X$.
Suppose a sequence $\{\xi_k\}_{k=1}^K\subseteq \Xi$ and $\{\mu_k\}_{k=1}^K\subseteq \Pi$
satisfies that for all $k\in[K]$, $\sum_{t=1}^{k-1}\Big(\mathbb E_{\mu_t}[\xi_t]\Big)^2 \le \beta.$ Then for any $\varepsilon>0$ and any $k\in[K]$,
\begin{align}
\label{eq:robust_prop43_count}
\sum_{t=1}^{k}\mathbf 1\big\{\,|\mathbb E_{\mu_t}[\xi_t]|>\varepsilon\,\big\}
\;\le\;
\Big(\frac{\beta}{\varepsilon^2}+1\Big)\,\dim_{\mathrm{DE}}(\Xi,\Pi,\varepsilon).
\end{align}
\end{techlem}

\begin{proof}
The proof follows the same argument as in \citep{NeurIPS2021_BellmanEluderDim_Jin}[Proposition 43].
We include it for completeness.

Fix $\varepsilon>0$.

\textbf{Step 1: Disjoint dependence.}
Fix any $k$ such that $|\mathbb E_{\mu_k}[\xi_k]|>\varepsilon$.
By the definition of DE dimension in Definition \ref{def:DE_dim}, if $\mu_k$ is $\varepsilon$-dependent on a subsequence
$\{\nu_1,\dots,\nu_\ell\}$ of $\{\mu_1,\dots,\mu_{k-1}\}$ (with respect to $\Xi$),
then there exists some $\xi\in\Xi$ (in particular we may take $\xi=\xi_k$) such that
\[
\sum_{t=1}^{\ell}\Big(\mathbb E_{\nu_t}[\xi_t]\Big)^2 \;\ge\; \varepsilon^2.
\]
Therefore, if $\mu_k$ is $\varepsilon$-dependent on $M$ \emph{disjoint} subsequences of $\{\mu_1,\dots,\mu_{k-1}\}$,
then summing the above inequality over these $M$ disjoint subsequences yields
\[
\sum_{t=1}^{k-1}\Big(\mathbb E_{\mu_t}[\xi_k]\Big)^2 \;\ge\; M\,\varepsilon^2.
\]
Combining with $\sum_{t=1}^{k-1}\Big(\mathbb E_{\mu_t}[\xi_k]\Big)^2 \le \beta$ gives $M\le \beta/\varepsilon^2$.
Hence, for any $k$ with $|\mathbb E_{\mu_k}[\xi_k]|>\varepsilon$, the measure $\mu_k$ can be $\varepsilon$-dependent on
\emph{at most} $\beta/\varepsilon^2$ disjoint subsequences of $\{\mu_1,\dots,\mu_{k-1}\}$.

\textbf{Step 2: Pigeonhole argument using $\dim_{\mathrm{DE}}$.}
Now consider any sequence $\{\nu_1,\dots,\nu_\kappa\}\subseteq \Pi$.
We show that there exists some index $j\in[\kappa]$ such that $\nu_j$ is $\varepsilon$-dependent on at least
\[
M := \Big\lceil \frac{\kappa-1}{\dim_{\mathrm{DE}}(\Xi,\Pi,\varepsilon)} \Big\rceil
\]
disjoint subsequences of $\{\nu_1,\dots,\nu_{j-1}\}$. To see this, run the following procedure:
\begin{itemize}
\item Initialize $M$ disjoint blocks $B_1=\{\nu_1\},\dots,B_M=\{\nu_M\}$ and set $j=M+1$.
\item For each $j$, if $\nu_j$ is $\varepsilon$-dependent on all blocks $B_1,\dots,B_M$, we stop.
\item Otherwise, pick a block $B_i$ such that $\nu_j$ is $\varepsilon$-independent of $B_i$ and update $B_i\leftarrow B_i\cup\{\nu_j\}$.
\item Increase $j$ and continue.
\end{itemize}

By the definition of $\dim_{\mathrm{DE}}(\Xi,\Pi,\varepsilon)$,
each block $B_i$ can be extended by $\varepsilon$-independent insertions at most
$\dim_{\mathrm{DE}}(\Xi,\Pi,\varepsilon)$ times.
Therefore, the procedure must stop by time $j\le M\cdot \dim_{\mathrm{DE}}(\Xi,\Pi,\varepsilon)+1\le \kappa$,
which implies the claimed existence of such a $j$.

\textbf{Step 3: Combine Steps 1--2.}
Fix any $k\in[K]$ and let $\{\nu_1,\dots,\nu_\kappa\}$ be the subsequence of $\{\mu_1,\dots,\mu_k\}$
consisting of those indices $t\le k$ for which $|\mathbb E_{\mu_t}[\xi_t]|>\varepsilon$.
Applying Step 2 to this subsequence implies there exists some $j$ such that $\nu_j$ is $\varepsilon$-dependent
on at least $\lceil(\kappa-1)/\dim_{\mathrm{DE}}(\Xi,\Pi,\varepsilon)\rceil$ disjoint subsequences of its predecessors.
By Step 1 (applied to that $j$), the number of such disjoint dependent subsequences is at most $\beta/\varepsilon^2$.
Hence,
\[
\frac{\kappa-1}{\dim_{\mathrm{DE}}(\Xi,\Pi,\varepsilon)} \;\le\; \frac{\beta}{\varepsilon^2},
\]
which yields
\[
\kappa \;\le\; \Big(\frac{\beta}{\varepsilon^2}+1\Big)\,\dim_{\mathrm{DE}}(\Xi,\Pi,\varepsilon).
\]
This is exactly eq. \ref{eq:robust_prop43_count}.
\end{proof}

\bibliographystyle{abbrv}  
\bibliography{ref}

@string{iclr="Proc. International Conference on Learning Representations (ICLR)"}

@string{icml="Proc. International Conference on Machine Learning (ICML)"}

@string{ijcai="Proc. International Joint Conferences on Artificial Intelligence (IJCAI)"}

@string{nips="Proc. Advances in Neural Information Processing Systems (NIPS)"}

@string{nipsnew="Proc. Advances in Neural Information Processing Systems (NeurIPS)"}

@article{INFORM2005_RobusDP_Iyengar,
  title={{Robust Dynamic Programming}},
  author={Iyengar, Garud N},
  journal={Mathematics of Operations Research},
  volume={30},
  number={2},
  pages={257--280},
  year={2005},
  publisher={INFORMS}
}

@article{NeuRIPS2023_DoublePessimismDROfflineRL_Blanchet,
  title={{Double Pessimism is Provably Efficient for Distributionally Robust Offline Reinforcement Learning: Generic Algorithm and Robust Partial Coverage}},
  author={Blanchet, Jose and Lu, Miao and Zhang, Tong and Zhong, Han},
  journal={Advances in Neural Information Processing Systems},
  volume={36},
  pages={66845--66859},
  year={2023}
}

@article{Arxiv2024_DRORLwithInteractiveData_Lu,
  title={{Distributionally Robust Reinforcement Learning with Interactive Data Collection: Fundamental Hardness and Near-Optimal Algorithm}},
  author={Lu, Miao and Zhong, Han and Zhang, Tong and Blanchet, Jose},
  journal={The Thirty-eighth Annual Conference on Neural Information Processing Systemss},
  year={2024}
}

@article{AnnalsStat2022_TheoreticalUnderstandingRMDP_Yang,
  title={{Toward Theoretical Understandings of Robust Markov Decision Processes: Sample Complexity and Asymptotics}},
  author={Yang, Wenhao and Zhang, Liangyu and Zhang, Zhihua},
  journal={The Annals of Statistics},
  volume={50},
  number={6},
  pages={3223--3248},
  year={2022},
  publisher={Institute of Mathematical Statistics}
}

@article{NeuRIPS2023_CuriousPriceDRRLGenerativeMdel_Shi,
  title={{The Curious Price of Distributional Robustness in Reinforcement Learning with a Generative Model}},
  author={Shi, Laixi and Li, Gen and Wei, Yuting and Chen, Yuxin and Geist, Matthieu and Chi, Yuejie},
  journal={Advances in Neural Information Processing Systems},
  volume={36},
  pages={79903--79917},
  year={2023}
}

@article{NeuRIPS2024_UnifiedPessimismOfflineRL_Yue,
  title={{A Unified Principle of Pessimism for Offline Reinforcement Learning under Model Mismatch}},
  author={Wang, Yue and Sun, Zhongchang and Zou, Shaofeng},
  journal={Advances in Neural Information Processing Systems},
  volume={37},
  pages={9281--9328},
  year={2024}
}

@inproceedings{PMLR2022_SampleComplexityRORL_Panaganti,
  title={{Sample Complexity of Robust Reinforcement Learning with a Generative Model}},
  author={Panaganti, Kishan and Kalathil, Dileep},
  booktitle={International Conference on Artificial Intelligence and Statistics},
  pages={9582--9602},
  year={2022},
  organization={PMLR}
}

@article{JMLR2024_DROfflineRLNearOptimalSampelComplexity_Shi,
  title={{Distributionally Robust Model-Based Offline Reinforcement Learning with Near-Optimal Sample Complexity}},
  author={Shi, Laixi and Chi, Yuejie},
  journal={Journal of Machine Learning Research},
  volume={25},
  number={200},
  pages={1--91},
  year={2024}
}

@article{NeuRIPS2001_OnlineLearningAlgo_Cesa,
  title={{On the Generalization Ability of On-Line Learning Algorithms}},
  author={Cesa-Bianchi, Nicol{\'o} and Conconi, Alex and Gentile, Claudio},
  journal={Advances in neural information processing systems},
  volume={14},
  year={2001}
}

@inproceedings{ICML2025_OnlineDRMDPSampleComplexity_He,
  title={{Sample Complexity of Distributionally Robust Off-Dynamics Reinforcement Learning with Online Interaction}},
  author={He, Yiting and Liu, Zhishuai and Wang, Weixin and Xu, Pan},
  booktitle={Forty-second International Conference on Machine Learning},
  year={2025}
}

@article{foster2021statistical,
  title={The statistical complexity of interactive decision making},
  author={Foster, Dylan J and Kakade, Sham M and Qian, Jian and Rakhlin, Alexander},
  journal={arXiv preprint arXiv:2112.13487},
  year={2021}
}

@inproceedings{PMLR2024_LinearFunctionDRMDP_Liu,
  title={{Distributionally Robust Off-Dynamics Reinforcement Learning: Provable Efficiency with Linear Function Approximation}},
  author={Liu, Zhishuai and Xu, Pan},
  booktitle={International Conference on Artificial Intelligence and Statistics},
  pages={2719--2727},
  year={2024},
  organization={PMLR}
}

@article{Sage2013_RLSurvey_Kober,
  title={{Reinforcement Learning in Robotics: A Survey}},
  author={Kober, Jens and Bagnell, J Andrew and Peters, Jan},
  journal={The International Journal of Robotics Research},
  volume={32},
  number={11},
  pages={1238--1274},
  year={2013},
  publisher={SAGE Publications Sage UK: London, England}
}

@article{NeurIPS2022_TrainingLLM_Ouyang,
  title={{Training language models to follow instructions with human feedback}},
  author={Ouyang, Long and Wu, Jeffrey and Jiang, Xu and Almeida, Diogo and Wainwright, Carroll and Mishkin, Pamela and Zhang, Chong and Agarwal, Sandhini and Slama, Katarina and Ray, Alex and others},
  journal={Advances in neural information processing systems},
  volume={35},
  pages={27730--27744},
  year={2022}
}

@inproceedings{ACM2018_SupervisedRLRNN_Wang,
  title={{Supervised Reinforcement Learning with Recurrent Neural Network for Dynamic Treatment Recommendation}},
  author={Wang, Lu and Zhang, Wei and He, Xiaofeng and Zha, Hongyuan},
  booktitle={Proceedings of the 24th ACM SIGKDD international conference on knowledge discovery \& data mining},
  pages={2447--2456},
  year={2018}
}

@article{IEEE2021_DRLSurvey_Kiran,
  title={{Deep Reinforcement Learning for Autonomous Driving: A Survey}},
  author={Kiran, B Ravi and Sobh, Ibrahim and Talpaert, Victor and Mannion, Patrick and Al Sallab, Ahmad A and Yogamani, Senthil and P{\'e}rez, Patrick},
  journal={IEEE transactions on intelligent transportation systems},
  volume={23},
  number={6},
  pages={4909--4926},
  year={2021},
  publisher={IEEE}
}

@inproceedings{PMLR2017_RobustAdvRL_Pinto,
  title={{Robust Adversarial Reinforcement Learning}},
  author={Pinto, Lerrel and Davidson, James and Sukthankar, Rahul and Gupta, Abhinav},
  booktitle={International conference on machine learning},
  pages={2817--2826},
  year={2017},
  organization={PMLR}
}

@article{Arxiv2022_SimToRealTransferContinuousPartialObs_Hu,
  title={{Provable Sim-to-real Transfer in Continuous Domain with Partial Observations}},
  author={Hu, Jiachen and Zhong, Han and Jin, Chi and Wang, Liwei},
  journal={arXiv preprint arXiv:2210.15598},
  year={2022}
}

@article{Arxiv2020_OffDynRL_Eysenbach,
  title={{Off-Dynamics Reinforcement Learning: Training for Transfer with Domain Classifiers}},
  author={Eysenbach, Benjamin and Asawa, Swapnil and Chaudhari, Shreyas and Levine, Sergey and Salakhutdinov, Ruslan},
  journal={arXiv preprint arXiv:2006.13916},
  year={2020}
}

@inproceedings{PMLR2023_ImprovedSampleComplexityDRRL_Xu,
  title={{Improved Sample Complexity Bounds for Distributionally Robust Reinforcement Learning}},
  author={Xu, Zaiyan and Panaganti, Kishan and Kalathil, Dileep},
  booktitle={International Conference on Artificial Intelligence and Statistics},
  pages={9728--9754},
  year={2023},
  organization={PMLR}
}

@article{Arxiv2024_UpperLowerDRRL_Liu,
  title={{Upper and Lower Bounds for Distributionally Robust Off-Dynamics Reinforcement Learning}},
  author={Liu, Zhishuai and Wang, Weixin and Xu, Pan},
  journal={arXiv preprint arXiv:2409.20521},
  year={2024}
}

@article{Arxiv2024_OfflineRobustRLFDivg_Tang,
  title={{Robust Offline Reinforcement Learning with Linearly Structured $f$-Divergence Regularization}},
  author={Tang, Cheng and Liu, Zhishuai and Xu, Pan},
  journal={arXiv preprint arXiv:2411.18612},
  year={2024}
}

@article{INFORM2013_RMDP_Wiesemann,
  title={{Robust Markov Decision Processes}},
  author={Wiesemann, Wolfram and Kuhn, Daniel and Rustem, Ber{\c{c}}},
  journal={Mathematics of Operations Research},
  volume={38},
  number={1},
  pages={153--183},
  year={2013},
  publisher={INFORMS}
}

@article{NeurIPS2024_MinimaxOptimalOfflineRL_Liu,
  title={{Minimax Optimal and Computationally Efficient Algorithms for Distributionally Robust Offline Reinforcement Learning}},
  author={Liu, Zhishuai and Xu, Pan},
  journal={Advances in Neural Information Processing Systems},
  volume={37},
  pages={86602--86654},
  year={2024}
}

@article{Arxiv2023_SoftRMDPRiskSensitive_Zhang,
  title={{Soft Robust MDPs and Risk-Sensitive MDPs: Equivalence, Policy Gradient, and Sample Complexity}},
  author={Zhang, Runyu and Hu, Yang and Li, Na},
  journal={arXiv preprint arXiv:2306.11626},
  year={2023}
}

@article{Arxiv2024_ModelFreeRobustRL_Panaganti,
  title={{Model-Free Robust $\phi$-Divergence Reinforcement Learning Using Both Offline and Online Data}},
  author={Panaganti, Kishan and Wierman, Adam and Mazumdar, Eric},
  journal={arXiv preprint arXiv:2405.05468},
  year={2024}
}

@inproceedings{PMLR2023_SampleComplexityDRQLearning_Wang,
  title={{A Finite Sample Complexity Bound for Distributionally Robust Q-learning}},
  author={Wang, Shengbo and Si, Nian and Blanchet, Jose and Zhou, Zhengyuan},
  booktitle={International Conference on Artificial Intelligence and Statistics},
  pages={3370--3398},
  year={2023},
  organization={PMLR}
}

@article{Arxiv2023_TowardsMinimaxOptimalityRobustRL_Clavier,
  title={{Towards Minimax Optimality of Model-based Robust Reinforcement Learning}},
  author={Clavier, Pierre and Pennec, Erwan Le and Geist, Matthieu},
  journal={arXiv preprint arXiv:2302.05372},
  year={2023}
}

@article{JMLR2024_SampleComplexityVarianceReducedDRQLearning_Wang,
  title={{Sample Complexity of Variance-Reduced Distributionally Robust Q-Learning}},
  author={Wang, Shengbo and Si, Nian and Blanchet, Jose and Zhou, Zhengyuan},
  journal={Journal of Machine Learning Research},
  volume={25},
  number={341},
  pages={1--77},
  year={2024}
}

@article{NeurIPS2022_RobustRLOffline_Panaganti,
  title={{Robust Reinforcement Learning using Offline Data}},
  author={Panaganti, Kishan and Xu, Zaiyan and Kalathil, Dileep and Ghavamzadeh, Mohammad},
  journal={Advances in neural information processing systems},
  volume={35},
  pages={32211--32224},
  year={2022}
}

@article{Arxiv2024_SampleComplexityOfflineLinearDRMDP_Wang,
  title={{Sample Complexity of Offline Distributionally Robust Linear Markov Decision Processes}},
  author={Wang, He and Shi, Laixi and Chi, Yuejie},
  journal={arXiv preprint arXiv:2403.12946},
  year={2024}
}

@article{Arxiv2022_OfflineDRRLLinearFunctionApprox_Ma,
  title={{Distributionally Robust Offline Reinforcement Learning with Linear Function Approximation}},
  author={Ma, Xiaoteng and Liang, Zhipeng and Blanchet, Jose and Liu, Mingwen and Xia, Li and Zhang, Jiheng and Zhao, Qianchuan and Zhou, Zhengyuan},
  journal={arXiv preprint arXiv:2209.06620},
  year={2022}
}

@article{NeurIPS2021_OnlineRobustRLModelUncertainty_Wang,
  title={{Online Robust Reinforcement Learning with Model Uncertainty}},
  author={Wang, Yue and Zou, Shaofeng},
  journal={Advances in Neural Information Processing Systems},
  volume={34},
  pages={7193--7206},
  year={2021}
}

@inproceedings{PMLR2023_NearlyMinimaxOptimalRLLinearMDP_He,
  title={{Nearly Minimax Optimal Reinforcement Learning for Linear Markov Decision Processes}},
  author={He, Jiafan and Zhao, Heyang and Zhou, Dongruo and Gu, Quanquan},
  booktitle={International Conference on Machine Learning},
  pages={12790--12822},
  year={2023},
  organization={PMLR}
}

@article{NeurIPS2021_BellmanEluderDim_Jin,
  title={{Bellman Eluder Dimension: New Rich Classes of RL Problems, and Sample-Efficient Algorithms}},
  author={Jin, Chi and Liu, Qinghua and Miryoosefi, Sobhan},
  journal={Advances in neural information processing systems},
  volume={34},
  pages={13406--13418},
  year={2021}
}

@article{sason2016f,
  title={{$ f $-Divergence Inequalities}},
  author={Sason, Igal and Verd{\'u}, Sergio},
  journal={IEEE Transactions on Information Theory},
  volume={62},
  number={11},
  pages={5973--6006},
  year={2016},
  publisher={IEEE}
}

@book{goodfellow2016deep,
  title={Deep learning},
  author={Goodfellow, Ian and Bengio, Yoshua and Courville, Aaron and Bengio, Yoshua},
  year={2016},
  publisher={MIT press Cambridge}
}

@article{vinitsky2020robust,
  title={Robust Reinforcement Learning using Adversarial Populations},
  author={Vinitsky, Eugene and Du, Yuqing and Parvate, Kanaad and Jang, Kathy and Abbeel, Pieter and Bayen, Alexandre},
  journal={arXiv preprint arXiv:2008.01825},
  year={2020}
}

@article{abdullah2019wasserstein,
  title={{Wasserstein Robust Reinforcement Learning}},
  author={Abdullah, Mohammed Amin and Ren, Hang and Ammar, Haitham Bou and Milenkovic, Vladimir and Luo, Rui and Zhang, Mingtian and Wang, Jun},
  journal={arXiv preprint arXiv:1907.13196},
  year={2019}
}

@article{hou2020robust,
  title={Robust Reinforcement Learning with {W}asserstein Constraint},
  author={Hou, Linfang and Pang, Liang and Hong, Xin and Lan, Yanyan and Ma, Zhiming and Yin, Dawei},
  journal={arXiv preprint arXiv:2006.00945},
  year={2020}
}

@inproceedings{rajeswaran2017epopt,
  title={{EPOpt: Learning Robust Neural Network Policies Using Model Ensembles}},
  author={Rajeswaran, Aravind and Ghotra, Sarvjeet and Ravindran, Balaraman and Levine, Sergey},
  booktitle=iclr,
  year={2017}
}

@article{morimoto2005robust,
  title={{Robust Reinforcement Learning}},
  author={Morimoto, Jun and Doya, Kenji},
  journal={Neural computation},
  volume={17},
  number={2},
  pages={335--359},
  year={2005},
  publisher={MIT Press}
}

@inproceedings{huang2017adversarial,
  title={{Adversarial Attacks on Neural Network Policies}},
  author={Huang, Sandy and Papernot, Nicolas and Goodfellow, Ian and Duan, Yan and Abbeel, Pieter},
  booktitle=iclr,
  year={2017}
}

@inproceedings{kos2017delving,
  title={Delving into Adversarial Attacks on Deep Policies},
  author={Kos, Jernej and Song, Dawn},
  booktitle=iclr,
  year={2017}
}

@inproceedings{lin2017tactics,
  title={{Tactics of Adversarial Attack on Deep Reinforcement Learning Agents}},
  author={Lin, Yen-Chen and Hong, Zhang-Wei and Liao, Yuan-Hong and Shih, Meng-Li and Liu, Ming-Yu and Sun, Min},
  booktitle=ijcai,
  pages={3756--3762},
  year={2017}
}

@inproceedings{pattanaik2018robust,
  title={Robust Deep Reinforcement Learning with Adversarial Attacks},
  author={Pattanaik, Anay and Tang, Zhenyi and Liu, Shuijing and Bommannan, Gautham and Chowdhary, Girish},
  booktitle={Proc. International Conference on Autonomous Agents and MultiAgent Systems},
  pages={2040--2042},
  year={2018}
}

@inproceedings{mandlekar2017adversarially,
  title={{Adversarially Robust Policy Learning: Active
Construction of Physically-Plausible Perturbations}},
  author={Mandlekar, Ajay and Zhu, Yuke and Garg, Animesh and Fei-Fei, Li and Savarese, Silvio},
  booktitle={2017 IEEE/RSJ International Conference on Intelligent Robots and Systems (IROS)},
  pages={3932--3939},
  year={2017},
  organization={IEEE}
}

@inproceedings{tamar2014scaling,
  title={Scaling up robust {MDP}s using function approximation},
  author={Tamar, Aviv and Mannor, Shie and Xu, Huan},
  booktitle=icml,
  pages={181--189},
  year={2014},
  organization={PMLR}
}

@article{berner2019dota,
  title={{Dota 2 with Large Scale Deep Reinforcement Learning}},
  author={Berner, Christopher and Brockman, Greg and Chan, Brooke and Cheung, Vicki and Debiak, Przemyslaw and Dennison, Christy and Farhi, David and Fischer, Quirin and Hashme, Shariq and Hesse, Chris and others},
  journal={arXiv preprint arXiv:1912.06680},
  year={2019}
}

@inproceedings{xu2010distributionally,
  title={Distributionally Robust {M}arkov Decision Processes},
  author={Xu, Huan and Mannor, Shie},
  booktitle=nips,
  pages={2505--2513},
  year={2010}
}

@inproceedings{liu2022distributionally,
  title={Distributionally Robust {Q}-Learning},
  author={Liu, Zijian and Bai, Qinxun and Blanchet, Jose and Dong, Perry and Xu, Wei and Zhou, Zhengqing and Zhou, Zhengyuan},
  booktitle=icml,
  pages={13623--13643},
  year={2022},
  organization={PMLR}
}

@article{wang2020statistical,
  title={What are the Statistical Limits of Offline RL with Linear Function Approximation?},
  author={Wang, Ruosong and Foster, Dean P and Kakade, Sham M},
  journal={arXiv preprint arXiv:2010.11895},
  year={2020}
}

@article{silver2016mastering,
  title={Mastering the game of {Go} with deep neural networks and tree search},
  author={Silver, David and Huang, Aja and Maddison, Chris J. and Guez, Arthur and Sifre, Laurent and van den Driessche, George and Schrittwieser, Julian and Antonoglou, Ioannis and Panneershelvam, Veda and Lanctot, Marc and Dieleman, Sander and Grewe, Dominik and Nham, John and Kalchbrenner, Nal and Sutskever, Ilya and Lillicrap, Timothy and Leach, Madeleine and Kavukcuoglu, Koray and Graepel, Thore and Hassabis, Demis},
  journal={nature},
  volume={529},
  number={7587},
  pages={484--489},
  year={2016},
  publisher={Nature Publishing Group}
}

@article{vinyals2017starcraft,
  title={Starcraft {II}: {A} new challenge for reinforcement learning},
  author={Vinyals, Oriol and Ewalds, Timo and Bartunov, Sergey and Georgiev, Petko and Vezhnevets, Alexander Sasha and Yeo, Michelle and Makhzani, Alireza and K{\"u}ttler, Heinrich and Agapiou, John and Schrittwieser, Julian and others},
  journal={arXiv preprint arXiv:1708.04782},
  year={2017}
}

@article{mnih2013playing,
  title={Playing {Atari} with deep reinforcement learning},
  author={Mnih, Volodymyr and Kavukcuoglu, Koray and Silver, David and Graves, Alex and Antonoglou, Ioannis and Wierstra, Daan and Riedmiller, Martin},
  journal={arXiv preprint arXiv:1312.5602},
  year={2013}
}

@inproceedings{agarwal2014taming,
  title={Taming the monster: {A} fast and simple algorithm for contextual bandits},
  author={Agarwal, Alekh and Hsu, Daniel and Kale, Satyen and Langford, John and Li, Lihong and Schapire, Robert},
  booktitle=icml,
  pages={1638--1646},
  year={2014},
  organization={PMLR}
}

@inproceedings{jiang2017contextual,
  title={Contextual decision processes with low {B}ellman rank are {PAC}-learnable},
  author={Jiang, Nan and Krishnamurthy, Akshay and Agarwal, Alekh and Langford, John and Schapire, Robert E},
  booktitle=icml,
  pages={1704--1713},
  year={2017},
  organization={PMLR}
}

@article{du2021bilinear,
  title={Bilinear classes: A structural framework for provable generalization in rl},
  author={Du, Simon S and Kakade, Sham M and Lee, Jason D and Lovett, Shachar and Mahajan, Gaurav and Sun, Wen and Wang, Ruosong},
  journal={arXiv preprint arXiv:2103.10897},
  year={2021}
}

@article{li2022first,
  title={First-order policy optimization for robust markov decision process},
  author={Li, Yan and Lan, Guanghui and Zhao, Tuo},
  journal={arXiv preprint arXiv:2209.10579},
  year={2022}
}

@article{zhan2022offline,
  title={Offline Reinforcement Learning with Realizability and Single-policy Concentrability},
  author={Zhan, Wenhao and Huang, Baihe and Huang, Audrey and Jiang, Nan and Lee, Jason D},
  journal={arXiv preprint arXiv:2202.04634},
  year={2022}
}

@inproceedings{atkeson2003nonparametric,
  title={{Nonparametric Representation of Policies and Value Functions: A Trajectory-Based Approach}},
  author={Atkeson, Christopher G and Morimoto, Jun},
  booktitle=nips,
  pages={1643--1650},
  year={2003}
}

@article{dong2022online,
  title={Online policy optimization for robust MDP},
  author={Dong, Jing and Li, Jingwei and Wang, Baoxiang and Zhang, Jingzhao},
  journal={arXiv preprint arXiv:2209.13841},
  year={2022}
}

@article{black2023training,
  title={{Training Diffusion Models with Reinforcement Learning}},
  author={Black, Kevin and Janner, Michael and Du, Yilun and Kostrikov, Ilya and Levine, Sergey},
  journal={arXiv preprint arXiv:2305.13301},
  year={2023}
}

@mastersthesis{holla2021off,
  title={{On the Off-Dynamics Approach to Reinforcement Learning}},
  author={Holla, Joshua Arvind},
  year={2021},
  school={McGill University (Canada)}
}

@inproceedings{zhou2024natural,
  title={Natural actor-critic for robust reinforcement learning with function approximation},
  author={Zhou, Ruida and Liu, Tao and Cheng, Min and Kalathil, Dileep and Kumar, PR and Tian, Chao},
  booktitle=nipsnew, 
  volume={36},
  year={2024}
}

@article{liang2023single,
  title={Single-trajectory distributionally robust reinforcement learning},
  author={Liang, Zhipeng and Ma, Xiaoteng and Blanchet, Jose and Zhang, Jiheng and Zhou, Zhengyuan},
  journal={arXiv preprint arXiv:2301.11721},
  year={2023}
}

@article{uehara2024understanding,
  title={{Understanding Reinforcement Learning-Based Fine-Tuning of Diffusion Models: A Tutorial and Review}},
  author={Uehara, Masatoshi and Zhao, Yulai and Biancalani, Tommaso and Levine, Sergey},
  journal={arXiv preprint arXiv:2407.13734},
  year={2024}
}

@inproceedings{zha2021douzero,
  title={{DouZero: Mastering DouDizhu with Self-Play Deep Reinforcement Learning}},
  author={Zha, Daochen and Xie, Jingru and Ma, Wenye and Zhang, Sheng and Lian, Xiangru and Hu, Xia and Liu, Ji},
  booktitle=icml,
  pages={12333--12344},
  year={2021},
  organization={PMLR}
}

@article{goodfellow2014explaining,
  title={{Explaining and Harnessing Adversarial Examples}},
  author={Goodfellow, Ian J and Shlens, Jonathon and Szegedy, Christian},
  journal={arXiv preprint arXiv:1412.6572},
  year={2014}
}

@article{cao2023reinforcement,
  title={Reinforcement learning for generative AI: A survey},
  author={Cao, Yuanjiang and Sheng, Quan Z and McAuley, Julian and Yao, Lina},
  journal={arXiv preprint arXiv:2308.14328},
  year={2023}
}

@inproceedings{azar_minimax_2017,
	title = {Minimax {Regret} {Bounds} for {Reinforcement} {Learning}},
	abstract = {We consider the problem of provably optimal exploration in reinforcement learning for finite horizon MDPs. We show that an optimistic modification to value iteration achieves a regret bound of ???~(????????-------?+??2??2??+????--?)O{\textasciitilde}(HSAT+H2S2A+HT){\textbackslash}tilde \{O\}( {\textbackslash}sqrt\{HSAT\} + H{\textasciicircum}2S{\textasciicircum}2A+H{\textbackslash}sqrt\{T\}) where ??HH is the time horizon, ??SS the number of states, ??AA the number of actions and ??TT the number of time-steps. This result improves over the best previous known bound ???~(????????---?)O{\textasciitilde}(HSAT){\textbackslash}tilde \{O\}(HS {\textbackslash}sqrt\{AT\}) achieved by the UCRL2 algorithm. The key significance of our new results is that when ??>=??3??3??T>=H3S3AT{\textbackslash}geq H{\textasciicircum}3S{\textasciicircum}3A and ????>=??SA>=HSA{\textbackslash}geq H, it leads to a regret of ???~(????????-------?)O{\textasciitilde}(HSAT){\textbackslash}tilde\{O\}({\textbackslash}sqrt\{HSAT\}) that matches the established lower bound of $\Omega$(????????-------?)$\Omega$(HSAT){\textbackslash}Omega({\textbackslash}sqrt\{HSAT\}) up to a logarithmic factor. Our analysis contain two key insights. We use careful application of concentration inequalities to the optimal value function as a whole, rather than to the transitions probabilities (to improve scaling in ??SS), and we define Bernstein-based {\textquotedblleft}exploration bonuses{\textquotedblright} that use the empirical variance of the estimated values at the next states (to improve scaling in ??HH).},
	language = {en},
	urldate = {2022-03-08},
	booktitle = {Proceedings of the 34th {International} {Conference} on {Machine} {Learning}},
	publisher = {PMLR},
	author = {Azar, Mohammad Gheshlaghi and Osband, Ian and Munos, R{\'e}mi},
	month = jul,
	year = {2017},
	note = {ISSN: 2640-3498},
	pages = {263--272},
}

@article{ghosh2025provably,
  title={Provably Near-Optimal Distributionally Robust Reinforcement Learning in Online Settings},
  author={Ghosh, Debamita and Atia, George K and Wang, Yue},
  journal={arXiv preprint arXiv:2508.03768},
  year={2025}
}

@article{yang2023robust,
  title={Robust markov decision processes without model estimation},
  author={Yang, Wenhao and Wang, Han and Kozuno, Tadashi and Jordan, Scott M and Zhang, Zhihua},
  journal={arXiv preprint arXiv:2302.01248},
  year={2023}
}

@inproceedings{ramesh2023distributionallyrobustmodelbasedreinforcement,
  title={Distributionally robust model-based reinforcement learning with large state spaces},
  author={Ramesh, Shyam Sundhar and Sessa, Pier Giuseppe and Hu, Yifan and Krause, Andreas and Bogunovic, Ilija},
  booktitle={International Conference on Artificial Intelligence and Statistics},
  pages={100--108},
  year={2024},
  organization={PMLR}
}

@article{wang2024foundationdistributionallyrobustreinforcement,
  title={On the foundation of distributionally robust reinforcement learning},
  author={Wang, Shengbo and Si, Nian and Blanchet, Jose and Zhou, Zhengyuan},
  journal={arXiv preprint arXiv:2311.09018},
  year={2023}
}

@inproceedings{badrinath_robust_2021,
    title = {Robust reinforcement learning using least squares policy iteration with provable performance guarantees},
    booktitle = {International {Conference} on {Machine} {Learning}},
    publisher = {PMLR},
    author = {Badrinath, Kishan Panaganti and Kalathil, Dileep},
    year = {2021},
    keywords = {discounted, policy iteration},
    pages = {511--520},
}

@article{wang_reinforcement_2020,
  title={Reinforcement learning with general value function approximation: Provably efficient approach via bounded eluder dimension},
  author={Wang, Ruosong and Salakhutdinov, Russ R and Yang, Lin},
  journal={Advances in Neural Information Processing Systems},
  volume={33},
  pages={6123--6135},
  year={2020}
}

@inproceedings{zhang2024large,
  title={Large-scale reinforcement learning for diffusion models},
  author={Zhang, Yinan and Tzeng, Eric and Du, Yilun and Kislyuk, Dmitry},
  booktitle={European Conference on Computer Vision},
  pages={1--17},
  year={2024},
  organization={Springer}
}

@inproceedings{du2023guiding,
  title={{Guiding Pretraining in Reinforcement Learning with Large Language Models}},
  author={Du, Yuqing and Watkins, Olivia and Wang, Zihan and Colas, C{\'e}dric and Darrell, Trevor and Abbeel, Pieter and Gupta, Abhishek and Andreas, Jacob},
  booktitle=icml,
  pages={8657--8677},
  year={2023},
  organization={PMLR}
}

@article{cao2024survey,
  title={{Survey on Large Language Model-Enhanced Reinforcement Learning: Concept, Taxonomy, and Methods}},
  author={Cao, Yuji and Zhao, Huan and Cheng, Yuheng and Shu, Ting and Chen, Yue and Liu, Guolong and Liang, Gaoqi and Zhao, Junhua and Yan, Jinyue and Li, Yun},
  journal={IEEE Transactions on Neural Networks and Learning Systems},
  year={2024},
  publisher={IEEE}
}

@article{russo2013eluder,
  title={Eluder dimension and the sample complexity of optimistic exploration},
  author={Russo, Daniel and Van Roy, Benjamin},
  journal={Advances in Neural Information Processing Systems},
  volume={26},
  year={2013}
}

@inproceedings{sun2019model,
  title={Model-based rl in contextual decision processes: Pac bounds and exponential improvements over model-free approaches},
  author={Sun, Wen and Jiang, Nan and Krishnamurthy, Akshay and Agarwal, Alekh and Langford, John},
  booktitle={Conference on learning theory},
  pages={2898--2933},
  year={2019},
  organization={PMLR}
}

@article{duchi2021learning,
  title={Learning models with uniform performance via distributionally robust optimization},
  author={Duchi, John C and Namkoong, Hongseok},
  journal={The Annals of Statistics},
  volume={49},
  number={3},
  pages={1378--1406},
  year={2021},
  publisher={Institute of Mathematical Statistics}
}

@article{wang2019optimism,
  title={Optimism in reinforcement learning with generalized linear function approximation},
  author={Wang, Yining and Wang, Ruosong and Du, Simon S and Krishnamurthy, Akshay},
  journal={arXiv preprint arXiv:1912.04136},
  year={2019}
}

@inproceedings{zanette2020learning,
  title={Learning near optimal policies with low inherent bellman error},
  author={Zanette, Andrea and Lazaric, Alessandro and Kochenderfer, Mykel and Brunskill, Emma},
  booktitle={International Conference on Machine Learning},
  pages={10978--10989},
  year={2020},
  organization={PMLR}
}

@book{rockafellar1998variational,
  title={Variational analysis},
  author={Rockafellar, R Tyrrell and Wets, Roger JB},
  year={1998},
  publisher={Springer}
}

@article{shapiro2017distributionally,
  title={Distributionally robust stochastic programming},
  author={Shapiro, Alexander},
  journal={SIAM Journal on Optimization},
  volume={27},
  number={4},
  pages={2258--2275},
  year={2017},
  publisher={SIAM}
}

@article{xie2022role,
  title={The role of coverage in online reinforcement learning},
  author={Xie, Tengyang and Foster, Dylan J and Bai, Yu and Jiang, Nan and Kakade, Sham M},
  journal={arXiv preprint arXiv:2210.04157},
  year={2022}
}

@book{shalev2014understanding,
  title={Understanding machine learning: From theory to algorithms},
  author={Shalev-Shwartz, Shai and Ben-David, Shai},
  year={2014},
  publisher={Cambridge university press}
}

@article{li2016deep,
  title={Deep reinforcement learning for dialogue generation},
  author={Li, Jiwei and Monroe, Will and Ritter, Alan and Galley, Michel and Gao, Jianfeng and Jurafsky, Dan},
  journal={arXiv preprint arXiv:1606.01541},
  year={2016}
}

@article{li2022understanding,
  title={Understanding the eluder dimension},
  author={Li, Gene and Kamath, Pritish and Foster, Dylan J and Srebro, Nati},
  journal={Advances in Neural Information Processing Systems},
  volume={35},
  pages={23737--23750},
  year={2022}
}

@article{jiang2024offline,
  title={Offline reinforcement learning in large state spaces: Algorithms and guarantees},
  author={Jiang, Nan and Xie, Tengyang},
  journal={Statistical Science},
  year={2024}
}

\end{document}